\documentclass[twoside]{article}
\usepackage[accepted]{aistats2024}
\usepackage[final,pagebackref=true]{hyperref}   
\usepackage{url}
\usepackage{textcomp}
\usepackage{amsthm}
\usepackage{mathtools}
\usepackage{algpseudocode}
\usepackage{algorithm}
\usepackage{thmtools} 
\usepackage{thm-restate}
\usepackage{wrapfig}
\usepackage{enumitem}
\usepackage{natbib}
\usepackage{dsfont}
\usepackage{fontenc}
\usepackage[rgb,dvipsnames]{xcolor}
\usepackage{caption}
\usepackage{subcaption}
\usepackage{nicefrac}
\usepackage{float}

\hypersetup{
	colorlinks=true,       
	linkcolor=Blue,        
	citecolor=Blue,        
	filecolor=magenta,     
	urlcolor=Blue         
}


\usepackage{amsmath,amsfonts,bm,dsfont}










\def\eqref#1{equation~\ref{#1}}









\def\1{\bm{1}}










\DeclareMathAlphabet{\mathsfit}{\encodingdefault}{\sfdefault}{m}{sl}
\SetMathAlphabet{\mathsfit}{bold}{\encodingdefault}{\sfdefault}{bx}{n}

\def\gA{{\mathcal{A}}}
\def\gB{{\mathcal{B}}}

\def\gI{{\mathcal{I}}}

\def\gL{{\mathcal{L}}}
\def\gM{{\mathcal{M}}}
\def\gN{{\mathcal{N}}}
\def\gO{{\mathcal{O}}}
\def\gP{{\mathcal{P}}}

\def\gS{{\mathcal{S}}}

\def\gW{{\mathcal{W}}}
\def\gX{{\mathcal{X}}}


\def\sE{{\E}}

\def\sN{{\mathbb{N}}}

\def\sP{{\mathbb{P}}}

\def\sR{{\mathbb{R}}}








\newcommand{\E}{\mathbb{E}}



\DeclareMathOperator*{\argmin}{arg\,min}

\renewcommand{\epsilon}{\varepsilon}
\newcommand{\eqdef}{:=}

\newcommand{\nnopt}[1]{\mathbb{E}_{(x, y)\sim P}[\mathcal{L}(\hat{f}(x, #1), y)]}

\usepackage[framemethod=TikZ]{mdframed}
\usepackage{cleveref}
\usepackage{accents}

\mdfdefinestyle{MyFrame}{%
    linecolor=black,
    outerlinewidth=.3pt,
    roundcorner=5pt,
    innertopmargin=1pt, 
    innerbottommargin=1pt, 
    innerrightmargin=1pt,
    innerleftmargin=1pt,
    backgroundcolor=black!0!white}

\mdfdefinestyle{MyFrame2}{%
    linecolor=white,
    outerlinewidth=1pt,
    roundcorner=2pt,
    innertopmargin=\baselineskip,
    innerbottommargin=\baselineskip,
    innerrightmargin=10pt,
    innerleftmargin=10pt,
    backgroundcolor=black!3!white}

\DeclareMathOperator\Diag{Diag}
\DeclareMathOperator\tr{tr}

\newcommand{\ubar}[1]{\underaccent{\bar}{#1}}

\newtheorem{theorem}{Theorem}[section]
\newtheorem{lemma}[theorem]{Lemma}

\newtheorem{definition}[theorem]{Definition}
\newcommand{\cut}[1]{}

\newtheorem{property}{Property}
\newtheorem{assumption}{Assumption}
\crefname{assumption}{Assumption}{Assumptions}
\Crefname{fig}{Figure}{Figures}
\crefname{property}{Property}{Properties}

\usepackage[framemethod=TikZ]{mdframed}
\mdfdefinestyle{MyFrame}{%
    linecolor=black,
    outerlinewidth=.3pt,
    roundcorner=5pt,
    innertopmargin=1pt, 
    innerbottommargin=1pt, 
    innerrightmargin=1pt,
    innerleftmargin=1pt,
    backgroundcolor=black!0!white}

\mdfdefinestyle{MyFrame2}{%
    linecolor=white,
    outerlinewidth=1pt,
    roundcorner=2pt,
    innertopmargin=\baselineskip,
    innerbottommargin=\baselineskip,
    innerrightmargin=10pt,
    innerleftmargin=10pt,
    backgroundcolor=black!3!white}

\def\diam{{\text{diam}}}
\def\Supp{{\text{Supp}}}
\def\Lip{{\text{Lip}}}
\def\err{{\text{err}}}
\def\rank{{\text{rank}}}
\def\Leb{{\text{Leb}}}
\def\Dim{{\text{Dim}}}
\def\Span{{\text{Span}}}

\def\Case{{\text{Case}}}
\usepackage{mdframed}
\newmdenv[topline=false,rightline=false]{leftbot}


\newcommand{\ssim}{\overset{\text{i.i.d.}}{\sim}}

\begin{document}

\twocolumn[

\aistatstitle{Proving Linear Mode Connectivity of Neural Networks \\ via Optimal Transport}

\aistatsauthor{Damien Ferbach$^{1,3}$
\And Baptiste Goujaud$^{2}$ \And Gauthier Gidel $^{1,\dag}$ \And Aymeric Dieuleveut$^{2}$}

\aistatsaddress{Mila, Université de Montréal$^1$ \And CMAP, Ecole Polytechnique, IPP$^{2}$ \And ENS Paris, PSL$^3$ }

]

\begin{abstract}
The energy landscape of high-dimensional non-convex optimization problems is crucial to understanding the effectiveness of modern deep neural network architectures. Recent works have experimentally shown that two different solutions found after two runs of a stochastic training are often connected by very simple continuous paths (e.g., linear) modulo a permutation of the weights. In this paper, we provide a framework theoretically explaining this empirical observation. Based on convergence rates in Wasserstein distance of empirical measures, we show that, with high probability, two wide enough two-layer neural networks trained with stochastic gradient descent are linearly connected.
Additionally, we express upper and lower bounds on the width of each layer of two deep neural networks with independent neuron weights to be linearly connected. Finally, we empirically demonstrate the validity of our approach by showing how the dimension of the support of the weight distribution of neurons, which dictates Wasserstein convergence rates is correlated with linear mode connectivity.
\end{abstract}
\section{INTRODUCTION AND RELATED WORK}
\label{section:intro}
Training deep neural networks on complex tasks is a high-dimensional, non-convex optimization problem. While stochastic gradient-based methods (i.e., SGD and its derivatives) have proven highly efficient in finding a local minimum with low test error, the loss landscape of deep neural networks (DNNs) still contains numerous open questions. In particular,~\citet{goodfellow2014qualitatively} try to find ways to connect two local minima
reached by two independent runs of the same
stochastic algorithm with different initialization
and data orders. This problem has applications in diverse domains such as model averaging \citep{izmailov2018averaging,rame2022diverse,wortsman2022model}, loss landscape study \citep{gotmare2018using,vlaar2022can,lucas2021analyzing}, adversarial robustness \citep{zhao2020bridging} or generalization theory \citep{pittorino2022deep,juneja2022linear,lubana2023mechanistic}.

An answer to this question is the \emph{mode connectivity phenomenon}. It suggests the existence of a continuous low-loss path connecting all the local minima found by a given optimization procedure. The mode connectivity phenomenon has extensively been studied in the literature~\citep{goodfellow2014qualitatively,keskar2016large,sagun2017empirical,venturi2019spurious,neyshabur2020being,tatro2020optimizing,yunis2022convexity,zhou2023going} and \emph{non-linear connecting paths} have been evidenced for DNNs trained on MNIST and CIFAR10 by~\citet{freeman2016topology,garipov2018loss, pmlr-v80-draxler18a}.

\paragraph{(Linear) mode connectivity.}
Formally, let $A \eqdef \hat{f}(., \theta_A)$ and $B \eqdef \hat{f}(., \theta_B)$ two neural networks sharing a common architecture $\hat{f}$. They are parametrized by $\theta_A$ and $\theta_B$ after training those networks on a data distribution $P$ with loss $\mathcal{L}$, i.e. by minimizing $\mathcal{E}(\theta) \eqdef \nnopt{\theta}$ over $\theta$.
Let $p$ be a continuous path connecting $\theta_A$ and $\theta_B$, i.e. a continuous function defined on $[0, 1]$ with $p(0) = \theta_A$ and $p(1) = \theta_B$. \citet{frankle2020linear} initially identified the problem of linear mode connectivity and defined the \emph{error barrier height}~\citep{frankle2020linear,entezari2021role} of $p$ as $\sup_{t \in[0, 1]} \mathcal{E}\left(p(t)\right) - \left((1-t)\mathcal{E}\left(\theta_A\right) + t\mathcal{E}\left(\theta_B\right)\right)$.
The two found solutions $\theta_A$ and $\theta_B$ are said to be \emph{mode connected} if there is a continuous path with zero error barrier height connecting them.
Furthermore if $p$ is linear, that is $p(t) = (1-t)\theta_A + t \theta_B,\,$ $\theta_A$ and $\theta_B$ are said to be \emph{linearly mode connected (LMC)}.

\begin{figure}
\begin{center}
    \includegraphics[width=\linewidth]{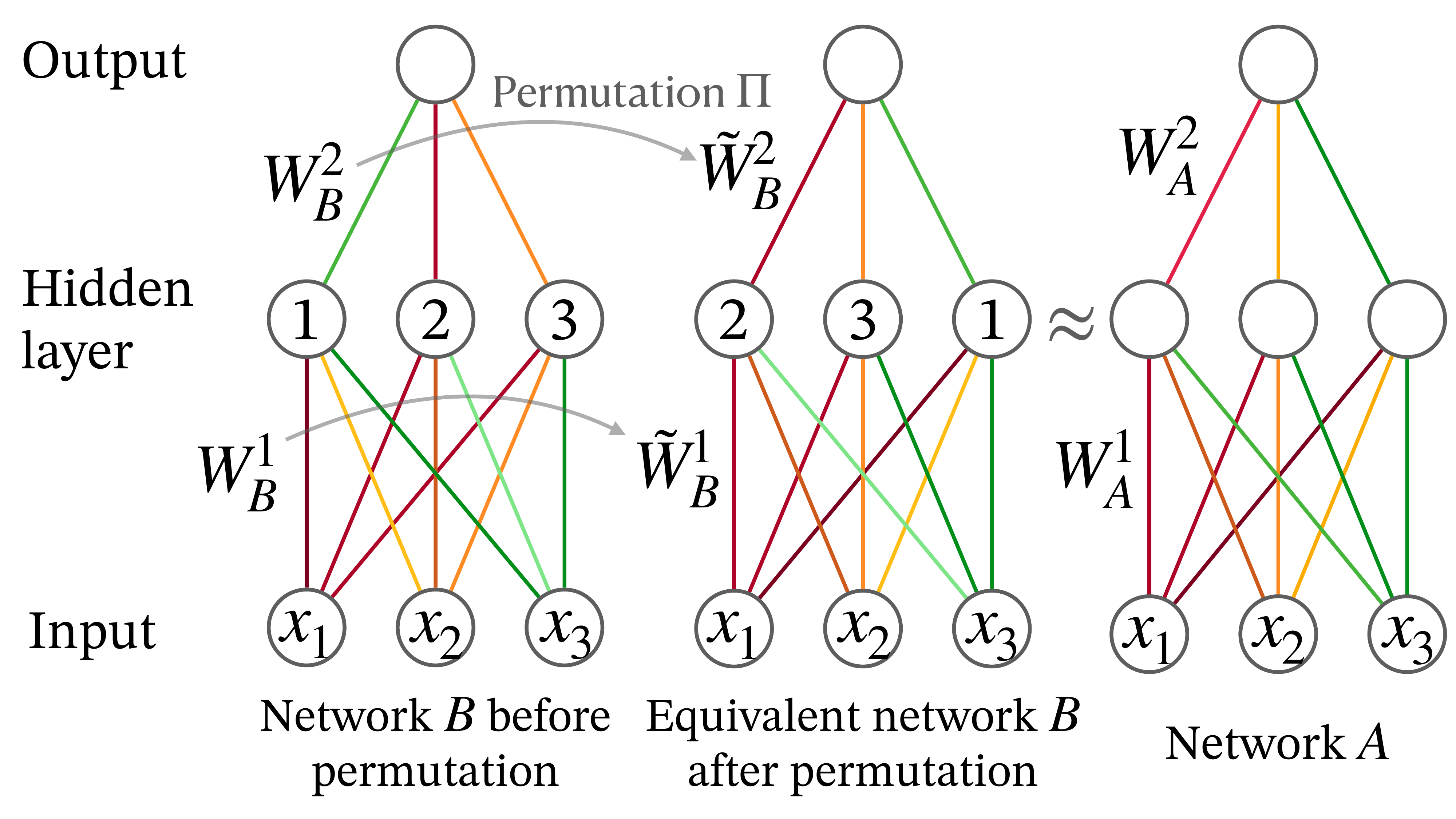}
    \vspace{-23pt}
    \caption{Permuting the neurons in the hidden layer of network $B$ to align them on network $A$}
    \label{fig:example_permutation}
    \vspace{-23pt}
\end{center}
\end{figure}

\paragraph{Permutation invariance.}
Recently,~\cite{singh2020model,ainsworth2022git} 
highlighted the fact that the units in a hidden layer of a given model can be permuted while preserving the network's functionality. \Cref{fig:example_permutation} shows how one can permute the hidden layer of a two-layer network to match a different target network without changing the source function. From now on, \textbf{we will understand LMC modulo permutation invariance}, i.e. two networks $A,B$ are said to be linear mode connected whenever there exists a permutation of neurons in each hidden layer of network $B$ such that the linear path in parameter space between network $A$ and $B$ permuted has low loss.

\paragraph{Linear mode connectivity up to permutation.}

\looseness=-1
\citet{singh2020model} proposed to use optimal transport (OT) theory to find soft alignment providing a ``good match'' (in a certain sense) between the neurons of two trained DNNs. Furthermore, the authors propose ways to fusion the aligned networks together in a federated learning context with local-steps. \citet{ainsworth2022git} further experimentally studied linear mode connectivity between two pre-aligned networks.
The authors first align network B's weights on the weights of network A before connecting both of them by a linear path in the parameter space. 
They notably achieved zero-loss barrier for two trained Resnets with SGD on CIFAR10. Moreover, their experiments strongly suggest that the error barrier on a linear path gets smaller for wider networks, with a detrimental effect of big depth.

\paragraph{Prior theoretical explanations.} A recent work by \cite{kuditipudi2019explaining} shows that dropout stable networks (i.e. networks that are functionally stable to the action of randomly setting a fraction of their weights and normalizing the others) exhibit mode connectivity. \cite{shevchenko2020landscape} use a mean field viewpoint to show that wide two-layer neural networks trained with SGD are dropout stable and hence show (non-linear) mode connectivity for two-layer neural networks in the mean field regime (i.e. one single wide hidden layer).
Finally~\cite{entezari2021role} show that two-layer neural networks exhibit linear mode connectivity up to permutation at initialization for parameters initialized following uniform independent distribution properly scaled. They highlight that this result could be extended to networks trained in the Neural Tangent Kernel regime where parameters stay close to initialization \citep{jacot2018neural}.

\paragraph{Contributions.} This paper aims at building theoretical foundations on the phenomenon of linear mode connectivity up to permutation. More precisely, we theoretically prove this phenomenon arises naturally on multi-layer perceptrons (MLPs), which goes beyond two-layer networks on which theoretical works focused so far. We also provide a new efficient way to find the right permutation to apply on the units of a neural network's layer. The paper is organized as follow:
\begin{itemize}[leftmargin=*]
    \item In \Cref{section:mean_field}, we focus on two-layer neural networks in the mean field regime. While~\cite{shevchenko2020landscape} proved \emph{non-linear} mode connectivity in this setting; we go further by proving \emph{linear mode connectivity up to permutation}. Moreover, we provide an upper bound on the minimal width of the hidden layer to guarantee linear mode connectivity.
    \item In \Cref{section:general_study}, we use general OT theory to exhibit tight asymptotics on the minimal width of a multilayer perceptron (MLP) to ensure LMC.
    \item In \Cref{section:given_distribution}, we apply our general results to networks with parameters following sub-Gaussian distribution. Our result holds for deep networks, generalizing the result of~\cite{entezari2021role} with better bounds. We shed light on the dependence in the dimension of the underlying distributions of the weights in each layer and explain how it connects with previous empirical observations~\citep{ainsworth2022git}. Using a model of approximately low dimensional weight distribution as a proxy of sparse feature learning, we yield more realistic bounds on the architectures of DNNs to ensure linear mode connectivity. We therefore, show why LMC is possible after training and how it depends on the complexity of the task. Finally we unify our framework with dropout stability.
    \item In~\Cref{section:experiments}, we validate our theoretical framework by showing how the implicit dimension of the weight distribution is correlated with linear mode connectivity for MLPs trained on MNIST with SGD and propose a new weight matching method.
\end{itemize}

\section{PRELIMINARIES AND NOTATIONS}
\label{section:preliminaries}

\looseness=-1
\textbf{Notations.}
Let two multilayer perceptrons (MLP) $A$ and $B$ with the same depth $L+1$ ($L$ hidden layers), an input dimension $m_0$, intermediate widths $m_1,...,m_L$ and an output dimension $m_{L+1}$. Given $2(L+1)$ weights matrices $W_{A,B}^{1,...,L+1}$, and a non-linearity $\sigma$, we define the neural network function of network A by $\hat f_{A}$ (respectively $\hat{f}_B$): $\forall x \in \sR^{m_0}$,
\begin{equation}
\label{eq:network}
    \hat{f}_A(x)\eqdef \hat{f}(x;\theta_{A})\eqdef W_A^{L+1}\sigma\left(W_A^L\ldots \sigma(W_A^1x)\right)
\end{equation}

To $W_A^\ell\in \gM_{m_{\ell}, m_{\ell-1}}(\sR)$ we associate $\hat{\mu}_{A,\ell}$  the empirical measure of its rows $[W^\ell_{A}]_{i:} \in \sR^{m_{\ell-1}}$: $\frac{1}{m_{\ell}}\sum_{i=1}^{m_{\ell}}\delta_{[W^\ell_{A}]_{i:}}$ 
which belongs to the space of probability measures 
$\gP_1(\sR^{m_{\ell-1}})$, where $[W^\ell_{A}]_{i:}$ is the $i$-th row of the matrix and $\delta$ denotes the Dirac measure. Note that $[W^\ell_{A}]_{i:}$ is also the weights vector of the $i$-th neuron of the layer $\ell$ of network $A$. Given an equi-partition\footnote{All subsets have the same number of elements} $\gI^{\ell-1}=\{I^{\ell-1}_1,...I^{\ell-1}_{\Tilde{m}_{\ell-1}}\}$ of $[m_{\ell-1}]$ we denote $W_A^{\gI^{\ell-1}}\in \gM_{m_{\ell}, \Tilde{m}_{\ell-1}}(\sR)$ the matrix issued from $W_A^\ell$ where we have summed the columns being in the same set of the partition $\gI^{\ell-1}$. In that case $\hat{\mu}_A^{\gI^{\ell-1}}\in \gP_1\left(\sR^{\Tilde{m}_\ell}\right)$ denotes the associate empirical measure of its rows.

\looseness=-1
Denote $\phi_A^\ell(x)\eqdef \sigma\left(W_A^{\ell}\ldots \sigma(W_A^1x)\right)$ (respectively $\phi^\ell_B$) the activations of neurons at layer $\ell$ of network $A$ on input $x$.
The data $x$ follows a distribution $P$ in $\sR^{m_0}$.

Given permutations matrices $\Pi_\ell \in \gS_{m_\ell},$\footnote{We use interchangeably $\gS_m$ to denote the space of permutations of $\{1,\ldots,m\}$ and the corresponding space of permutations matrices. Given $\pi \in \gS_m$ its corresponding permutation matrix $\Pi$ is defined as $\Pi_{ij} = 1 \iff \pi(i) = j$.} $\ell=1,\ldots,L$ of each hidden layer of network $B$, the weight matrix at layer $\ell$ of the permuted network $B$ is $\Tilde{W}_B^\ell \eqdef \Pi_\ell W_B^\ell \Pi_{\ell-1}^T$ and its new activation vector is $\Tilde{\phi}_B^\ell(x) \eqdef \Pi_\ell \phi^\ell_B(x)$. 
Finally, $\forall t \in [0,1]$ we define $M_t$ the convex combination of network $A$ and $B$ permuted, with weights matrices $tW^\ell_A+(1-t)\Tilde{W}^\ell_B$ and $\phi_{M_t}^\ell$ its activations at layer $\ell$.

\textbf{Preliminaries.}
We consider networks $A$ and $B$ to be independently chosen from the same distribution $Q$ on parameters. This is coherent with considering two networks initialized independently or trained independently with the same optimization procedure (\S\ref{section:mean_field}). We additionally suppose the choice of $A$ and $B$ to be independent of the choice of $x\sim P$, which is valid when evaluating models on test data not seen during training. We denote $\E_Q, \E_P, \E_{P,Q}$ expectations with respect to the choice of the networks, the data, or both.

To show linear mode connectivity of networks $A$ and $B$ we will show the existence of permutations $\Pi_{1},...,\Pi_{L}$ of layers $1,...,L$ that align the neurons of network $B$ on the closest neurons weights of network $A$ at the same layer as shown in \Cref{fig:example_permutation}. In other words, we want to find permutations that minimize for each layer $\ell \in [L]$ the norm $\|W_A^\ell-\Pi_\ell W_B^\ell \Pi_{\ell -1}^T\|_2$.
Recursively on $\ell$, we solve the following optimization problem:
\begin{equation}
    \label{eq:P_l}
    \begin{split}
            \Pi_\ell&=\argmin_{\Pi\in \gS_{m_\ell}} \|W_A^\ell-\Pi W_B^\ell \Pi_{\ell -1}^T\|_2^2\\
            &=\argmin_{\pi \in \gS_{m_\ell}} \frac{1}{m_\ell}\sum_{i=1}^{m_\ell}\|[W_{A}^\ell]_{i:}-[W_B^\ell \Pi^T_{\ell -1}]_{\pi_i:}\|_2^2
    \end{split}
\end{equation}
For each layer, the problem can be cast as finding a pairing of weights neurons $[W_A^\ell]_{i:}$ and $[W_B^\ell \Pi_{\ell-1}^T]_{\pi_i:}$ to minimize the sum of their Euclidean distances. It is known as the Monge problem is the optimal transport literature~\cite {peyre2019computational}. More precisely \Cref{eq:P_l} can be formulated as finding an optimal transport plan corresponding to the Wasserstein distance between the empirical measures of the rows of $W_A^\ell$ and $W_B^\ell \Pi_{\ell -1}^T$. We provide more details about this connection between \Cref{eq:P_l} and optimal transport in \Cref{validity_OT_viewpoint}.  In the following, the $p-$Wasserstein distance will be denoted $\gW_p(\cdot,\cdot)$ and defined with the underlying distance $\|\cdot\|_2$ unless expressed otherwise.

By controlling the cost in~\Cref{eq:P_l} at every layer, we show that the permuted s of networks $A$ and $B$ are approximately equal. Linearly interpolating both networks will therefore keep activations of all hidden layers unchanged except the last layer which acts as a linear function of the interpolation parameter $t\in [0,1]$.

\section{LMC FOR TWO-LAYER NNs IN THE MEAN FIELD REGIME}
\label{section:mean_field}
\looseness=-1
We will first study linear mode connectivity between a pair of two-layer neural networks independently trained with SGD for the same number of steps.
\subsection{Background on the Mean Field Regime}
\looseness=-1
We will use some notations from \cite{mei2019mean} and consider a two-layer neural network,
\begin{equation}
\label{network_form}
    \hat{f}_N(x;\theta)=\frac{1}{N}\sum_{i=1}^N\sigma_*(x;\theta_i)
\end{equation}
parametrized by $\theta_i=(a_i,w_i) \in \sR\times \sR^d$ and where $\sigma_*(x;\theta_i)=a_i\sigma(w_ix)$. The parameters evolve as to minimize the following regularized cost $R_N(\theta)=\E_{(x, y)\sim P}[(y-\hat{f}_N(x_;\theta))^2]+\lambda\|\theta\|_2^2$. Define noisy regularized stochastic gradient descent (or noiseless regularization-free when $\lambda=0,\tau=0$) with step size $s_k$, and i.i.d.\@ Gaussian noise $g^k\sim\gN(0,I_d)$:
\begin{align}
        &\theta_i^{k+1}=(1-2\lambda s_k)\theta_i^k \tag{SGD} \label{eq:sgd} \\
        &+2s_k(y_k-\hat{f}_N(x_k;\theta^k))\nabla_\theta\sigma_*(x_k;\theta_i^k)
        +\sqrt{\tfrac{2s_k\tau}{d}}g_i^k \notag
\end{align}

It will be useful to consider  $\rho_{N}^k\eqdef\tfrac{1}{N}\sum_{i=1}^N\delta_{\theta^k_i}$ the empirical distribution of the weights after $k$ SGD steps. Indeed some recent works \citep{chizat2018global,mei2018mean,mei2019mean} have shown that when setting the width $N$ to be large and the step size $s_k$ to be small, the empirical distribution of weights during training remains close to an empirical measure drawn from the solution of a partial differential equation (PDE) we explicit in \Cref{appendix:background_mean_field}. Especially, the parameters $\left\{\theta_i^k, i\in [N]\right\}$ evolve approximately independently.

\subsection{Proving LMC in the mean field setting}

Define respectively the alignment of a neuron function on the data and the correlation between two neurons:
\begin{equation*}
    \begin{aligned}
        &V(\theta_1) \eqdef av(w) \eqdef -\sE_P[y\sigma_*(x;\theta_1)]\\
           &U(\theta_1,\theta_2)\eqdef a_1a_2u(w_1,w_2)\eqdef\sE_P[\sigma_*(x;\theta_1)\sigma_*(x;\theta_2)]\,.
    \end{aligned}
\end{equation*} and we for $\epsilon>0$ fixed we note the step size
\begin{equation}
\label{eq:learning_rate}
    s_k=\epsilon\xi(k\epsilon)
\end{equation}
where $\xi$ is a positive scaling function. The underlying training time up to step $k_T$ is defined as $T\eqdef \sum_{k=1}^{k_T}s_k$. We now state the standard assumptions to work on the mean field regime~\citep{mei2019mean},
\begin{assumption}
\label{ass:noiseless_SGD} The function 
    $t\mapsto \xi(t)$ is bounded Lipschitz.
    The non-linearity $\sigma$ is bounded Lipschitz and the data distribution has a bounded support. 
    The functions $w \mapsto v(w)$ and $(w_1,w_2)\mapsto u(w_1,w_2)$ are  differentiable, with bounded
and Lipschitz continuous gradient.
The weights at initialization $\theta_i^0$ are i.i.d. with distribution $\rho_0$ which has bounded support.
\end{assumption}

\looseness=-1
\Cref{ass:noiseless_SGD} imposes that the step size is of order $\gO(\epsilon)$ and its variations are of order $\gO(\epsilon^2)$. Constant step size $\epsilon$ will work.
Bounded non-linearity include arctan and sigmoid but excludes ReLU. While it is a standard assumption in mean field theory (\citep{mei2018mean,mei2019mean}), we mention in \S\ref{appendix:mean_field} that this assumption can be relaxed by the weaker assumption that the non-linearity stays small on some big enough compact set.

The second assumption is technical and only used for studying noisy regularized SGD.
\begin{assumption}
\label{ass:noisy_SGD}
$V,U$ are four times continuously differentiable and $\nabla_1^ku(\theta_1,\theta_2)$ is uniformly bounded for $0\leq k\leq 4$.
\end{assumption}
The following theorem states that two wide enough two-layer neural networks trained independently with SGD exhibit, with high probability, a linear connection of the prediction modulo permutations for all data.

\begin{mdframed}[style=MyFrame2]
\begin{restatable}{theorem}{theoremmeanfield}
\label{theorem:mean_field}
    Consider two two-layer neural networks as in \Cref{network_form} trained with \eqref{eq:sgd} with the same initialization over the weights independently and for the same underlying time $T$. Suppose 
    \Cref{ass:noiseless_SGD,ass:noisy_SGD} to hold.
    Then $\forall \delta, \err, \exists N_{min}$ such that if $N\geq N_{min}, \exists \epsilon_{max}(N)$ such that if $\epsilon \leq \epsilon_{max}(N)$ in \Cref{eq:learning_rate}, then with probability at least $1-\delta$ over the training process, there exists a permutation of the second network's hidden layer such that for almost every $x \sim P$:
    \begin{equation*}
        \begin{split}
            &|t\hat{f}_N(x;\theta_{A})+(1-t)\hat{f}_N(x;\theta_{B})\\
            &-\hat{f}_N(x;t\theta_{A}+(1-t)\Tilde{\theta}_B)|\leq \err \,,  \quad \forall t \in [0,1]\,.
        \end{split}
    \end{equation*}
\end{restatable}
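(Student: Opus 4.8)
The plan is to reduce the statement to producing a permutation $\pi$ of $[N]$ for which $\frac1N\sum_{i=1}^N\|\theta_{A,i}^{k_T}-\theta_{B,\pi(i)}^{k_T}\|_2$ is small, and then to propagate this closeness through the (Lipschitz) network map. First I would record an a priori bound: on an event of probability at least $1-\delta/3$, all iterates $\{\theta_{A,i}^{k},\theta_{B,i}^{k}:i\in[N],\,0\le k\le k_T\}$ lie in a fixed compact convex set $K\subset\sR^{d+1}$ — deterministically, by a Gr\"onwall argument, in the noiseless regularization-free regime, and via the sub-Gaussian tails of the noisy dynamics otherwise, at the price of a $\mathrm{polylog}(Nk_T/\delta)$ radius. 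Since $\mathrm{supp}(P)$ is bounded and $\sigma$ is bounded and Lipschitz (\Cref{ass:noiseless_SGD}), the map $\theta\mapsto\sigma_*(x;\theta)=a\,\sigma(wx)$ is $L_K$-Lipschitz on $K$, uniformly over $x\in\mathrm{supp}(P)$. Using this, the invariance of $\hat f_N$ under relabelling of neurons (so $\hat f_N(x;\tilde\theta_B)=\hat f_N(x;\theta_B)$), convexity of $K$, and $\|\theta_{A,i}-(t\theta_{A,i}+(1-t)\theta_{B,\pi(i)})\|_2=(1-t)\|\theta_{A,i}-\theta_{B,\pi(i)}\|_2$ together with its analogue centred at $\theta_{B,\pi(i)}$, one obtains for every $x\in\mathrm{supp}(P)$, every $t\in[0,1]$ and any $\pi$,
\[
\bigl|t\hat f_N(x;\theta_A)+(1-t)\hat f_N(x;\theta_B)-\hat f_N(x;t\theta_A+(1-t)\tilde\theta_B)\bigr|\;\le\;\frac{2L_K}{N}\sum_{i=1}^N\bigl\|\theta_{A,i}^{k_T}-\theta_{B,\pi(i)}^{k_T}\bigr\|_2 ,
\]
so it is enough to find $\pi$ with $\frac1N\sum_i\|\theta_{A,i}^{k_T}-\theta_{B,\pi(i)}^{k_T}\|_2\le\err/(2L_K)$.

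I would produce this $\pi$ by combining the mean-field approximation with optimal transport. By the guarantee recalled in \Cref{appendix:background_mean_field} (following \citet{mei2019mean}, where \Cref{ass:noisy_SGD} supplies the regularity needed in the noisy case): for $N\ge N_1(\delta,\eta)$ and then $\epsilon\le\epsilon_1(N)$, with probability at least $1-\delta/3$ the trained empirical measure $\hat\rho_A\eqdef\frac1N\sum_i\delta_{\theta_{A,i}^{k_T}}$ satisfies $\gW_2(\hat\rho_A,\hat\varrho_A)\le\eta$, where $\hat\varrho_A\eqdef\frac1N\sum_i\delta_{\bar\theta_{A,i}}$ is the empirical measure of $N$ i.i.d.\ samples $\bar\theta_{A,i}$ of the mean-field law $\rho_T$; the same holds for $B$ on an independent source of randomness. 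Because $\rho_T$ has finite moments (compactly supported when $\tau=0$, sub-Gaussian otherwise), the quantitative convergence of empirical measures in Wasserstein distance gives $\gW_2(\hat\varrho_A,\rho_T)\to0$ and $\gW_2(\hat\varrho_B,\rho_T)\to0$ in probability, hence $\le\eta$ for $N$ large on an event of probability $\ge1-\delta/3$; the triangle inequality then yields $\gW_2(\hat\rho_A,\hat\rho_B)\le4\eta$. As $\hat\rho_A$ and $\hat\rho_B$ are empirical measures with $N$ atoms of equal mass, the Birkhoff--von Neumann theorem implies the $\gW_2^2$-optimal transport plan between them is a permutation matrix; taking $\pi$ to be the corresponding permutation (the $L=1$ analogue of \Cref{eq:P_l}), Cauchy--Schwarz gives $\frac1N\sum_i\|\theta_{A,i}^{k_T}-\theta_{B,\pi(i)}^{k_T}\|_2\le\gW_2(\hat\rho_A,\hat\rho_B)\le4\eta$.

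It then remains to set $\eta\eqdef\err/(8L_K)$, take $N_{min}$ and $\epsilon_{max}(N)$ to be the most restrictive of the thresholds arising in the previous two steps, and union-bound the three events: with probability at least $1-\delta$ the displayed chain of inequalities holds for every $x\in\mathrm{supp}(P)$ — in particular for $P$-almost every $x$ — and every $t\in[0,1]$. (In the noiseless regularization-free case the argument is lighter: since $A$ and $B$ share the initialization and the mean-field PDE depends only on $\rho_0$, the synchronously coupled mean-field particles satisfy $\bar\theta_{A,i}=\bar\theta_{B,i}$, so dimension-free propagation of chaos already gives the bound with $\pi=\mathrm{id}$.) The main obstacle is the middle step: one must invoke the mean-field convergence with the right order of quantifiers ("$N$ large, then $\epsilon$ small") and, in the noisy case, pay the dimension-dependent empirical-measure rate — $\gW_2(\hat\varrho_A,\rho_T)$ decays only polynomially in $1/N$ with an exponent that deteriorates as the dimension of the weight distribution grows — which is exactly what forces $N_{min}$ to be large and foreshadows the quantitative width bounds of \Cref{section:general_study,section:given_distribution}; a minor secondary point is controlling $L_K$ (equivalently the radius of $K$) in the noisy regime without circularity, which works only because that radius grows polylogarithmically in $N$ while $\gW_2(\hat\rho_A,\hat\rho_B)$ decays polynomially.
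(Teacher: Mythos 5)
Your proposal is correct and follows essentially the same route as the paper's proof: couple the SGD iterates to i.i.d.\ particles from $\rho_T$ via the mean-field results of \citet{mei2019mean}, obtain the permutation from Wasserstein convergence of the two empirical measures (Birkhoff reduces the optimal plan to a permutation, exactly as in \Cref{lemma:birkhoff}), and transfer parameter closeness to closeness of the interpolated predictions through Lipschitzness and boundedness of the coefficients, balancing the polylogarithmic growth of the a priori bounds against the polynomial Wasserstein decay in the noisy case, just as the paper does with its $g_1\cdot g_2$ product. The only caveat is your parenthetical for the noiseless case: ``same initialization'' in the theorem means the same initialization \emph{distribution} $\rho_0$ with independent draws (and independent data orders), so the coupled mean-field particles of $A$ and $B$ are not equal, $\pi=\mathrm{id}$ does not work, and the dimension-dependent empirical-measure rate is needed there too --- but since your main argument never uses that aside, the proof stands.
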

\end{mdframed}

\textbf{Remark.} \Cref{ass:noisy_SGD} is not used when studying noiseless regularization-free SGD ($\lambda=0,\,\tau=0$).

\begin{mdframed}[style=MyFrame2]
\begin{restatable}{corollary}{corollarymeanfield}
\label{corollary:mean_field}
    Under assumptions of $\Cref{theorem:mean_field}$, $\forall \delta, \err>0,\, \exists N'_{\min},\, \forall N \geq N'_{\min}, \, \exists \epsilon'_{\max}(N),\, \forall \epsilon \leq \epsilon'_{\max}(N)$ in \Cref{eq:learning_rate}, then with probability at least $1-\delta$ over the training process, there exists a permutation of the second network's hidden layer such that $\forall t \in [0,1]$:
    \begin{equation*}
        \begin{split}
             &\sE_P\big[\big(\hat{f}_N(x;t\theta_A+(1-t)\Tilde{\theta}_B)-y\big)^2\big]\leq \err\\
             &+\sE_P\big[t(\hat{f}_N(x;\theta_A)-y)^2+(1-t)(\hat{f}_N(x;\theta_B)-y\big)^2\big]
        \end{split}
    \end{equation*}
\end{restatable}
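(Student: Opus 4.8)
The plan is to derive the corollary directly from \Cref{theorem:mean_field} by a triangle‑inequality argument together with convexity of the square. Fix $\delta,\err>0$ and abbreviate $g_t(x)\eqdef\hat{f}_N(x;t\theta_A+(1-t)\Tilde{\theta}_B)$ for the interpolated network and $h_t(x)\eqdef t\hat{f}_N(x;\theta_A)+(1-t)\hat{f}_N(x;\theta_B)$ for the convex combination of the endpoint predictions. First I would record a uniform boundedness fact: under \Cref{ass:noiseless_SGD} the non‑linearity $\sigma$ is bounded, and the mean‑field control of $\rho_N^k$ underlying the proof of \Cref{theorem:mean_field} guarantees that, on the same high‑probability training event, the weights produced by \eqref{eq:sgd} stay in a compact set depending only on $\rho_0$, $T$ and the constants of \Cref{ass:noiseless_SGD} — uniformly in $N$ and in small $\epsilon$ (the limiting measure $\rho_T$ has bounded support). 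Combined with the bounded support of $P$, this yields a constant $C$, independent of $N,\epsilon,t$, with $|h_t(x)-y|\le C$ for a.e.\ $x$ and all $t\in[0,1]$.

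Next I would apply \Cref{theorem:mean_field} with error parameter $\err'\eqdef\min\{1,\,\err/(2C+1)\}$ and confidence $\delta$, obtaining $N'_{\min}$ and $\epsilon'_{\max}(\cdot)$ such that, for $N\ge N'_{\min}$ and $\epsilon\le\epsilon'_{\max}(N)$, with probability at least $1-\delta$ there is a permutation of $B$'s hidden layer with $|g_t(x)-h_t(x)|\le\err'$ for a.e.\ $x$ and all $t$. On this event, expanding the square and using $|h_t(x)-y|\le C$ gives, for a.e.\ $x$ and all $t$,
\begin{align*}
(g_t(x)-y)^2 &= (h_t(x)-y)^2 + 2\big(g_t(x)-h_t(x)\big)\big(h_t(x)-y\big) + \big(g_t(x)-h_t(x)\big)^2\\
&\le (h_t(x)-y)^2 + 2C\err' + \err'^2 \;\le\; (h_t(x)-y)^2 + \err,
\end{align*}
since $2C\err'+\err'^2\le(2C+1)\err'\le\err$. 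Convexity of $z\mapsto z^2$ then gives $(h_t(x)-y)^2\le t(\hat{f}_N(x;\theta_A)-y)^2+(1-t)(\hat{f}_N(x;\theta_B)-y)^2$, and taking $\sE_P$ of both sides (monotonicity of expectation) produces exactly the claimed bound, with the stated quantifiers $N'_{\min},\epsilon'_{\max}$.

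The only delicate point is the uniform boundedness constant $C$: one must ensure that the bound on the trained weights, and hence on the outputs $\hat f_N(x;\theta_A),\hat f_N(x;\theta_B)$, is uniform over the relevant high‑probability event and independent of $N$ and $\epsilon$, so that $\err'$ can be fixed \emph{before} invoking the theorem. This is already supplied by the mean‑field approximation used to prove \Cref{theorem:mean_field} (bounded support of $\rho_0$ propagated over the finite horizon $T$ under \Cref{ass:noiseless_SGD}), so no genuinely new estimate is required; the remainder is routine bookkeeping of the nested quantifiers.
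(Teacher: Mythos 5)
Your proposal is correct and follows essentially the same route as the paper: invoke \Cref{theorem:mean_field} with a rescaled tolerance chosen in advance, use the mean-field control (bounded support of $\rho_t$ plus the high-probability proximity of the SGD weights and the bound on the second-layer coefficients) together with bounded data to get an output bound uniform in $N$, $\epsilon$ and $t$, then combine convexity of the square loss with a local Lipschitz-type estimate (your explicit expansion of the square plays the role of the paper's Lipschitzness-on-a-compact argument) and take expectations. The quantifier bookkeeping and the choice $\err'=\min\{1,\err/(2C+1)\}$ mirror the paper's trick of applying the first part with $\min\{\err/C_{K(T)},\err\}$, so no gap remains.
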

\end{mdframed}

\textbf{Discussion.} Two wide enough two-layer neural networks wide enough trained with SGD are therefore Linear Mode Connected with an upper bound on the error tolerance we explicit in \Cref{appendix:mean_field}. We have extensively used the independence between weights in the mean field regime to apply OT bounds on convergence rates of empirical measures. To go beyond the two-layer case, we will need to make such an assumption on the distribution of weights. Note that this is true at initialization and after training for two-layer networks. Studying the independence of weights in the multi-layer case is a natural avenue for future work, already studied in \citet{nguyen2020rigorous}.

\section{GENERAL STRATEGY FOR MULTI-LAYER NETWORKS}
\label{section:general_study}
We now build the foundations to study the case of multi-layer neural networks (see~\Cref{eq:network}).

We first write one formal property expressing the existence of permutations of neurons of network $B$ up to layer $\ell$ such that the activations of network $A$, network $B$ permuted and the mean network $M_t$ are close up to layer $\ell$. This property is trivially satisfied at the input layer. We then show that under two formal assumptions on the weights matrices of networks $A$ and $B$, this property still hold at layer $\ell+1$.

\subsection{Formal Property at layer $\ell$}

Let $\epsilon > 0, m_\ell \geq \Tilde{m}_l$ and $m_{\ell+1} \geq \Tilde{m}_{\ell+1}$. Assume $\frac{m_\ell}{\Tilde{m}_\ell}, \frac{m_{\ell+1}}{\Tilde{m}_{\ell+1}}\in \sN$ to simplify technical details but this hypothesis can easily be removed.

\begin{property}\label{new_property}
There exists two constants $\ubar{E}_\ell,E_\ell$ such that given weight matrices up to layer $\ell$, $W_{A,B}^{1,\cdots,\ell},W_B^{1,\cdots,\ell}$ one can find $\ell$ permutations $\Pi_1,\cdots,\Pi_\ell$  of the neurons in the hidden layers $1$ to $\ell$ of network $B$, an equi-partition $\gI^\ell=\{I_1^\ell,\ldots,I^\ell_{\Tilde{m}_\ell}\}$, and a map $\ubar{\phi}^\ell(x)\in \sR^{n}$ such that $\forall k \in [\Tilde{m}_\ell] \,,\,\forall  i,j \in I_k^\ell, \ubar{\phi}^\ell_i(x)=\ubar{\phi}^\ell_j(x)$ such that:
\begin{align*}
    &\E_{P,Q}\|\ubar{\phi}^\ell(x)\|_2^2\leq \ubar{E}_\ell m_\ell\\
    &\E_{P,Q}\|\phi^\ell_A(x)-\ubar{\phi}^\ell(x)\|_2^2\leq E_\ell m_\ell\\
    &\E_{P,Q}\|\Tilde{\phi}^\ell_B(x)-\ubar{\phi}^\ell(x)\|_2^2\leq E_\ell m_\ell\\
   &\E_{P,Q}\|\phi^\ell_{M_t}(x)-\ubar{\phi}^\ell(x)\|_2^2\leq E_\ell m_\ell\,,\quad  \forall t \in [0,1], 
\end{align*}
\end{property}
This property not only requires proximity between activations $\phi_A^\ell(x),\Tilde{\phi}_B^\ell(x)$ at layer $\ell$ but requires the existence of a vector $\ubar\phi^\ell(x)$ whose coefficients in the same groups of the partition $\gI^\ell$ are equal, and therefore lives in a $\Tilde{m}_\ell$. It bounds the size of the function space available at layer $\ell$ and hence allows to use an effective width $\Tilde{m}_\ell$ independent of the real width $m_\ell$, which can be much larger. It is crucial in order to show LMC for neural networks of constant width across layers.
The introduction of such a map $\ubar{\phi}^\ell(x)$ is non trivial and is an important contribution since it allows to extend results of \cite{entezari2021role} beyond two layers.

\subsection{Assumptions on the weight distribution }
We now make an assumption on the empirical distribution of the weights $\hat{\mu}_{A,\ell+1}$ at layer $\ell+1$ of $W_A^{\ell+1}$.

\begin{assumption}\label{new_assumption:1}
    There exists an integer $\Tilde{m}_{\ell+1}$ such that for all equi-partiton $\gI^{\ell}$ of $[m_{\ell}]$ with $\Tilde{m}_{\ell}$ sub-sets, there exists a random empirical measure $\hat{\mu}_{\Tilde{m}_{\ell+1}}$ independent of $A$ and $B$ composed of $\Tilde{m}_{\ell+1}$ vectors in $\sR^{m_{\ell}}$, such that $\E_Q[\gW_2^2(\hat{\mu}^{\gI^{\ell}}_{A,\ell+1}, \hat{\mu}^{\gI^{\ell}}_{\Tilde{m}_{\ell+1}})]\leq C_1$. 

\end{assumption}

This assumption requires that the empirical distribution with $m_{\ell+1}$ points of the neurons' weights of network $A$ at layer $\ell+1$ can be approximated by an empirical measure with a smaller $\Tilde{m}_{\ell+1}$ number of points. Note that it implies proximity in Wasserstein distance between $\hat{\mu}^{\gI^\ell}_A$ and $\hat{\mu}^{\gI^\ell}_B$ by a triangular inequality.

We finally assume some central limit behavior when summing the errors made for each neuron of layer $\ell$.

\begin{assumption}\label{new_assumption:2}
    There exists a constant $C_2$ such that $\forall X\in \sR^{m_l}$ we have:

    \begin{equation*}
    \begin{split}
        \max\bigl\{\E_Q[\|W_A^{\ell+1}X\|_2^2], \E_Q[&\|W_{\Tilde{m}_{l+1}}X\|_2^2]\bigr\}\\
        &\leq C_2\frac{m_{\ell+1}}{m_\ell}\|X\|_2^2
    \end{split}
    \end{equation*}
\end{assumption}

Finally, we consider the following assumption on the non-linearity, verified for example by pointwise ReLU.

\begin{assumption}
\label{non_linearity}
    $\sigma$ is pointwise, $1$-Lipschitz, $\sigma(0)=0$.
\end{assumption}

\subsection{Propagating \Cref{new_property} to layer $\ell+1$}

We state now how \Cref{new_property} propagates throughout the layers using~\Cref{non_linearity,new_assumption:1,new_assumption:2} with new parameters $\ubar{E}_{\ell+1},E_{\ell+1}$. We give a proof in~\Cref{appendix:proof_newbiglemma}.

\begin{mdframed}[style=MyFrame2]
\begin{restatable}{lemma}{newbiglemma}
\label{newbiglemma}
   Let $\ell\in \{0,\cdots,L-1\}$ and suppose \Cref{new_property} to hold at layer $\ell$ and \Cref{non_linearity,new_assumption:1,new_assumption:2} to hold, then \Cref{new_property} still holds at the next layer with $\Tilde{m}_{\ell+1}$ given in \Cref{new_assumption:1} and 
    \begin{equation*}
    \begin{split}
         &\ubar{E}_{\ell+1}=C_2\ubar{E}_{\ell}\\
         &E_{\ell+1}=2C_2E_\ell+2C_1\Tilde{m}_\ell \ubar{E}_\ell
    \end{split}
    \end{equation*}
\end{restatable}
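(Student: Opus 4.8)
The plan is to build the layer-$(\ell+1)$ data from the layer-$\ell$ data supplied by \Cref{new_property}. First I would use the permutations $\Pi_1,\dots,\Pi_\ell$, the equi-partition $\gI^\ell$ and the map $\ubar\phi^\ell$ given at layer $\ell$; by \Cref{new_assumption:1} the partition $\gI^\ell$ produces a reference empirical measure $\hat\mu_{\Tilde m_{\ell+1}}$ with $\Tilde m_{\ell+1}$ atoms in $\sR^{m_\ell}$, independent of $A,B$, that approximates $\hat\mu^{\gI^\ell}_{A,\ell+1}$ in $\gW_2^2$-expectation by $C_1$. Solving the optimal-transport problem \Cref{eq:P_l} between the rows of $W_A^{\ell+1}$ (columns grouped by $\gI^\ell$) and the rows of $W_B^{\ell+1}$ (columns grouped by $\gI^\ell$), and also matching both against the reference measure $\hat\mu_{\Tilde m_{\ell+1}}$, defines $\Pi_{\ell+1}$ and an equi-partition $\gI^{\ell+1}$ of $[m_{\ell+1}]$ whose blocks are the pre-images of atoms of the reference measure. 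I would then define the candidate $\ubar\phi^{\ell+1}(x) \eqdef \sigma\big(W_{\Tilde m_{\ell+1}}\,\ubar\phi^\ell(x)\big)$ (using that $\ubar\phi^\ell$ is constant on $\gI^\ell$-blocks, so $W_{\Tilde m_{\ell+1}}\ubar\phi^\ell$ is well defined on the grouped coordinates), which is constant on the blocks of $\gI^{\ell+1}$ as required, since $\sigma$ is pointwise.

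Next I would bound the four quantities in \Cref{new_property} at layer $\ell+1$. For the first, $\E_{P,Q}\|\ubar\phi^{\ell+1}(x)\|_2^2 = \E_{P,Q}\|\sigma(W_{\Tilde m_{\ell+1}}\ubar\phi^\ell(x))\|_2^2 \le \E_{P,Q}\|W_{\Tilde m_{\ell+1}}\ubar\phi^\ell(x)\|_2^2 \le C_2\frac{m_{\ell+1}}{m_\ell}\E_{P,Q}\|\ubar\phi^\ell(x)\|_2^2 \le C_2\ubar E_\ell m_{\ell+1}$, using \Cref{non_linearity} ($\sigma$ $1$-Lipschitz, $\sigma(0)=0$), then \Cref{new_assumption:2} (conditionally on the data, with $X=\ubar\phi^\ell(x)$), then \Cref{new_property} at layer $\ell$; this gives $\ubar E_{\ell+1}=C_2\ubar E_\ell$. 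For the three error terms I would treat $\phi_A^{\ell+1}$, writing
\[
\phi_A^{\ell+1}(x)-\ubar\phi^{\ell+1}(x)=\sigma\big(W_A^{\ell+1}\phi_A^\ell(x)\big)-\sigma\big(W_{\Tilde m_{\ell+1}}\ubar\phi^\ell(x)\big),
\]
use $1$-Lipschitzness of $\sigma$, then split $W_A^{\ell+1}\phi_A^\ell - W_{\Tilde m_{\ell+1}}\ubar\phi^\ell = W_A^{\ell+1}(\phi_A^\ell-\ubar\phi^\ell) + (W_A^{\ell+1}-W_{\Tilde m_{\ell+1}})\ubar\phi^\ell$ and apply $\|u+v\|_2^2\le 2\|u\|_2^2+2\|v\|_2^2$. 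The first term is bounded by $2C_2\frac{m_{\ell+1}}{m_\ell}\E_{P,Q}\|\phi_A^\ell-\ubar\phi^\ell\|_2^2 \le 2C_2 E_\ell m_{\ell+1}$ via \Cref{new_assumption:2} and \Cref{new_property}. The second term, after conditioning on the data, is exactly $\|(W_A^{\gI^\ell}_{\ell+1}-W_{\Tilde m_{\ell+1}})y\|_2^2$ where $y$ is the grouped value of $\ubar\phi^\ell(x)$; I would bound its $Q$-expectation by $2m_\ell\cdot\gW_2^2(\hat\mu^{\gI^\ell}_{A,\ell+1},\hat\mu_{\Tilde m_{\ell+1}})\cdot\|y\|^2_{\text{(sup over atoms)}}$-type estimate — more precisely using that the optimal coupling pairs rows so that the summed squared row-differences equal $m_\ell\gW_2^2$, and $\|y\|_2^2$ relates to $\|\ubar\phi^\ell(x)\|_2^2$ with a factor $\Tilde m_\ell/m_\ell$ from the block structure — yielding a bound $2C_1\Tilde m_\ell\ubar E_\ell m_{\ell+1}$ in expectation after invoking the first bound of \Cref{new_property}. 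Summing gives $E_{\ell+1}=2C_2E_\ell+2C_1\Tilde m_\ell\ubar E_\ell$. The arguments for $\Tilde\phi_B^{\ell+1}$ and $\phi_{M_t}^{\ell+1}$ are analogous: for $B$ one uses that $\Tilde W_B^{\ell+1}=\Pi_{\ell+1}W_B^{\ell+1}\Pi_\ell^T$ has grouped-row empirical measure within $\gW_2^2$-distance $\le 2(C_1 + \text{term from layer-}\ell)$ of the reference (triangle inequality through $\hat\mu^{\gI^\ell}_{A,\ell+1}$), and for $M_t$ one uses convexity of $\|\cdot\|_2^2$ together with the fact that a convex combination of two measures both close to the reference is itself close.

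The main obstacle I anticipate is the careful bookkeeping of the partition refinement and the grouping operator: one must verify that the equi-partition $\gI^{\ell+1}$ genuinely exists (the blocks, defined as the optimal-transport pre-images of the $\Tilde m_{\ell+1}$ reference atoms, must all have equal size $m_{\ell+1}/\Tilde m_{\ell+1}$, which is where the divisibility hypothesis $m_{\ell+1}/\Tilde m_{\ell+1}\in\sN$ enters and where the OT plan must be chosen to respect equal masses), and that the identity relating the column-grouped matrix difference $\|(W_A^{\gI^\ell}-W_{\Tilde m_{\ell+1}})y\|_2^2$ to $\gW_2^2$ of the row-measures carries the correct dimensional normalization. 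The other delicate point is that $W_{\Tilde m_{\ell+1}}$ must act on the grouped coordinate vector, so I must be consistent about whether $\ubar\phi^{\ell}$ lives in $\sR^n$ (with $n$ possibly $=m_\ell$) with repeated entries or in $\sR^{\Tilde m_\ell}$; I would fix the convention that the matrices $W_A^{\gI^\ell}$ already incorporate the column-summation, so that $W_A^{\gI^\ell}$ applied to the $\Tilde m_\ell$-dimensional reduced vector equals $W_A^{\ell+1}$ applied to the $m_\ell$-dimensional block-constant vector, making all the Lipschitz and \Cref{new_assumption:2} estimates line up. Once these conventions are pinned down, each of the four bounds is a two- or three-line computation as sketched above.
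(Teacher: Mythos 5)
Your proposal is correct and follows essentially the same route as the paper's proof: build $W_{\Tilde m_{\ell+1}}$ by optimally pairing rows (Birkhoff) with the reference measure of \Cref{new_assumption:1}, set $\ubar\phi^{\ell+1}=\sigma(W_{\Tilde m_{\ell+1}}\ubar\phi^\ell)$, split via $\|u+v\|_2^2\le 2\|u\|_2^2+2\|v\|_2^2$, apply \Cref{new_assumption:2} to the first term and the optimal row-pairing together with the factor $\nicefrac{\Tilde m_\ell}{m_\ell}$ to the second, and handle $M_t$ by convexity. The only adjustments: align $B$ \emph{directly} to the reference measure (since $B$ has the same law as $A$ and the reference is independent of both, the $C_1$ bound applies verbatim), rather than via a triangle inequality through $\hat\mu^{\gI^\ell}_{A,\ell+1}$, which would inflate the constant beyond the stated $2C_1\Tilde m_\ell\ubar E_\ell$; and the summed squared row-differences under the optimal pairing equal $m_{\ell+1}\gW_2^2$, not $m_\ell\gW_2^2$.
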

\end{mdframed}

\section{LMC FOR RANDOM MULTI-LAYER NNs}

\label{section:given_distribution}

We will make the following assumption on the empirical distribution of neurons weights $\hat{\mu}_{A,\ell}, \hat{\mu}_{B,\ell}$ of $W_A^{\ell},W_B^{\ell}$ at layer $\ell$.

\begin{assumption}[Independence of neurons weights]
\label{new_assumption:0}
\looseness=-1
    $\hat{\mu}_{A,\ell}, \hat{\mu}_{B,\ell}$ correspond to two i.i.d drawings of vectors with distribution $\mu_{\ell}$ i.e., $\hat{\mu}_{A,\ell}, \hat{\mu}_{B,\ell}$ have the law of 
$\frac{1}{m_{\ell}}\sum_{i=1}^{m_{\ell}}\delta_{x_i}$ where $x_i\sim \mu_{\ell}$ i.i.d.
\end{assumption}

\Cref{new_assumption:0} is verified for example at initialization but more generally when weights do not depend too much one of each other. This case still holds for wide two-layer neural networks trained with SGD and is at the heart of the proof of~\Cref{theorem:mean_field}.

\subsection{Showing LMC for multilayer MLPs under Gaussian distribution}
\label{section:given_distribution_normal}

We first examine the case $\mu_{\ell}=\gN\left(0,\tfrac{I_{m_{\ell-1}}}{m_{\ell-1}}\right)$. We moreover assume that the input data distribution has bounded second moment: $\E_P[\|x\|_2^2]\leq m_0$.

Our strategy detailed in \Cref{appendix:new_LMC_normal} consists in showing that wide enough such networks will satisfy \Cref{new_assumption:1,new_assumption:2} with well controlled constants $C_1,C_2$. We can then apply \Cref{newbiglemma} successively $L$ times to get the following lemma:

\begin{mdframed}[style=MyFrame2]
\begin{restatable}{lemma}{normalnewbiglemmasuccessive}
\label{normal:newbiglemmasuccessive}
    Under normal initialization of the weights, given $\epsilon > 0$, if $m_0\geq 5$, there exists minimal widths $\Tilde{m}_1, \ldots, \Tilde{m}_L$ such that if $m_1\geq \Tilde{m}_1,\ldots,m_L\geq \Tilde{m}_L$, \Cref{new_property} is verified at the last hidden layer $L$ for $\ubar{E}_L=1,E_L=\epsilon^2$.
    Moreover, $\forall \ell \in [L], \exists T_\ell$ which does only depend on $L,\ell$ such that one can define recursively $\Tilde{m}_\ell$ as $\Tilde{m}_0=m_0$ and 
    \begin{equation*}
        \Tilde{m}_{\ell}=\Tilde{\gO}\left(\frac{T_\ell}{\epsilon}\right)^{\Tilde{m}_{\ell-1}}
    \end{equation*}
\end{restatable}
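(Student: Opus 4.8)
The plan is to apply \Cref{newbiglemma} recursively, starting from the trivial base case and tracking how the constants $\ubar{E}_\ell, E_\ell$ and the effective widths $\Tilde{m}_\ell$ evolve. First I would check the base case: at layer $\ell=0$, \Cref{new_property} holds trivially with $\ubar{\phi}^0(x) = x = \phi^0_A(x) = \Tilde{\phi}^0_B(x) = \phi^0_{M_t}(x)$, the partition $\gI^0$ being the trivial partition into singletons ($\Tilde{m}_0 = m_0$), and the assumed bound $\E_P[\|x\|_2^2]\leq m_0$ giving $\ubar{E}_0 = 1$, $E_0 = 0$. The key technical work, which I would defer to \Cref{appendix:new_LMC_normal}, is to verify that under the normal initialization $\mu_\ell = \gN(0, I_{m_{\ell-1}}/m_{\ell-1})$, \Cref{new_assumption:1,new_assumption:2} hold with controlled constants: \Cref{new_assumption:2} should give $C_2$ a universal constant (essentially from the fact that a row of $W_A^{\ell+1}$ has expected squared norm $1$, so $\E_Q[\|W_A^{\ell+1}X\|_2^2] = \frac{m_{\ell+1}}{m_\ell}\|X\|_2^2$, yielding $C_2 = 1$ or a small constant after handling $\hat\mu_{\Tilde m_{\ell+1}}$), while \Cref{new_assumption:1} requires a quantitative Wasserstein convergence rate of the empirical measure of $m_{\ell+1}$ i.i.d.\ Gaussians in $\sR^{\Tilde m_\ell}$ towards a discretization with $\Tilde{m}_{\ell+1}$ atoms, achieving $\gW_2^2 \leq C_1$ with $C_1$ chosen small (of order $\epsilon^2$ up to the recursive accumulation).

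Next, I would set up the recursion. Applying \Cref{newbiglemma} gives $\ubar{E}_{\ell+1} = C_2 \ubar{E}_\ell$ and $E_{\ell+1} = 2C_2 E_\ell + 2C_1 \Tilde{m}_\ell \ubar{E}_\ell$. Unrolling from $\ubar{E}_0 = 1$: $\ubar{E}_\ell = C_2^\ell$, so $\ubar{E}_L$ is a constant depending only on $L$ (and after a harmless rescaling absorbed into the choice of $C_2 \approx 1$, one can arrange $\ubar{E}_L = 1$). For $E_\ell$, the recursion $E_{\ell+1} = 2C_2 E_\ell + 2 C_1 \Tilde m_\ell C_2^\ell$ unrolls to $E_L = 2 C_1 \sum_{\ell=0}^{L-1} (2C_2)^{L-1-\ell} \Tilde{m}_\ell C_2^\ell$, a sum of $L$ terms each proportional to $C_1$ times an effective width times a constant depending only on $L, \ell$. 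To force $E_L = \epsilon^2$, it suffices to choose $C_1$ (hence the discretization fineness at each layer) small enough relative to $\epsilon^2 / (L \cdot \max_\ell \Tilde m_\ell \cdot \text{const}_L)$. This is what pins down the required $\Tilde m_{\ell+1}$ in terms of $\epsilon$, $\Tilde m_\ell$, and $L$.

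The resulting width recursion is the delicate part: the number of atoms $\Tilde m_{\ell+1}$ needed to approximate a measure on $\sR^{\Tilde m_\ell}$ to Wasserstein-$2$ accuracy $\sqrt{C_1}$ scales, by standard covering/quantization bounds in dimension $\Tilde m_\ell$, like $(\text{radius}/\sqrt{C_1})^{\Tilde m_\ell}$ — i.e.\ exponentially in the previous effective width. Since $C_1$ itself must be taken of order $\epsilon^2$ divided by polynomial-in-$L$ and width factors, and since the Gaussian concentrates on a ball of radius $O(\sqrt{\log \Tilde m_{\ell+1}})$ (contributing the $\Tilde\gO$ logarithmic factors and requiring a mild self-consistent choice of $\Tilde m_{\ell+1}$), one obtains $\Tilde m_\ell = \Tilde\gO(T_\ell/\epsilon)^{\Tilde m_{\ell-1}}$ with $T_\ell$ collecting the $L,\ell$-dependent constants from the $E_L$ recursion. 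I would then verify the tower-exponential growth is self-consistent (the logarithmic radius correction only affects the $\Tilde\gO$) and that choosing $m_\ell \geq \Tilde m_\ell$ suffices to invoke \Cref{newbiglemma} at every step.

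The main obstacle I anticipate is the quantitative Wasserstein discretization bound underlying \Cref{new_assumption:1}: one needs not just an existence result but explicit control of $C_1$ as a function of the number of atoms $\Tilde m_{\ell+1}$ and the ambient dimension $\Tilde m_\ell$, uniformly over all equi-partitions $\gI^\ell$ (the measure $\hat\mu^{\gI^\ell}_{A,\ell+1}$ lives in the summed-column space $\sR^{\Tilde m_\ell}$, and one must check the Gaussian structure is preserved or at least favorably bounded under column-summing). Getting the constants to chain cleanly through $L$ layers without blow-up — beyond the unavoidable tower-exponential width dependence — is where the care is needed, and it is exactly the point where the dimension $\Tilde m_{\ell-1}$ of the support enters the exponent, matching the paper's thesis that this implicit dimension governs LMC.
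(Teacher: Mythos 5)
Your proposal follows essentially the same route as the paper: the trivial base case $\ubar{E}_0=1$, $E_0=0$, verification of \Cref{new_assumption:1,new_assumption:2} under Gaussian initialization via quantitative Wasserstein quantization rates (the paper's \Cref{normal:new_assumption1,normal:new_assumption2}, with $C_2=1$), then $L$ applications of \Cref{newbiglemma} and a choice of each $\Tilde{m}_i$ making the $i$-th term of the unrolled sum at most $\epsilon^2 2^{-(L+1-i)}/L$. You also correctly flag the column-summing step (the rows of $W_A^{\gI^{\ell}}$ remain Gaussian, now with covariance $I_{\Tilde{m}_{\ell}}/\Tilde{m}_{\ell}$) as the place where the constants must be controlled.

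One bookkeeping point in your sketch needs to be tightened to deliver the statement exactly as claimed, namely that $T_\ell$ depends only on $L,\ell$. You propose to force $E_L=\epsilon^2$ by taking $C_1$ small relative to $\epsilon^2/(L\cdot\max_\ell \Tilde{m}_\ell\cdot\mathrm{const}_L)$; as written this is circular (the $\Tilde{m}_\ell$ are precisely what is being defined), and if resolved naively it injects a polynomial-in-width factor into the base of the exponent, so the resulting $T_\ell$ would depend on $\Tilde{m}_{\ell-1}$ and not only on $(L,\ell)$, which $\Tilde{\gO}$ (hiding only logarithms) cannot absorb. The paper avoids this through an exact cancellation: since the column-summed rows have covariance $I_{\Tilde{m}_\ell}/\Tilde{m}_\ell$, the Gaussian rate of \Cref{expect_wass_normal} carries a prefactor $1/\Tilde{m}_\ell$, i.e. $C_1\leq \tfrac{D_3}{\Tilde{m}_\ell}\log(\Tilde{m}_{\ell+1})(1/\Tilde{m}_{\ell+1})^{2/\Tilde{m}_\ell}$, which cancels the factor $\Tilde{m}_\ell\ubar{E}_\ell$ in the recursion $E_{\ell+1}=2C_2E_\ell+2C_1\Tilde{m}_\ell\ubar{E}_\ell$; the condition on $\Tilde{m}_i$ then reads $\log(\Tilde{m}_i)(1/\Tilde{m}_i)^{2/\Tilde{m}_{i-1}}\leq \epsilon^2/(2^{L+1-i}L)$, giving $T_i=\sqrt{2^{L+1-i}L}$ (up to universal constants), free of any width dependence. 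With that cancellation made explicit, your argument coincides with the paper's proof.
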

\end{mdframed}

\looseness=-1
\textbf{Discussion.} The hypothesis $m_0\geq5$ is technical and could be relaxed at the price of slightly changing the bound on $\Tilde{m}_1$. \Cref{normal:newbiglemmasuccessive} shows that given two random networks whose widths $m_\ell$ is larger than $\Tilde{m}_\ell$, we can permute neurons of the second one such that their activations at layer $\ell$ are both close to the one of the networks on a linear path in parameter's space.

As $\epsilon$ goes to $0$, the width of the layer $\ell+1$ must scale at least as $\left(\frac{1}{\epsilon}\right)^{\Tilde{m}_{\ell-1}}$. This is a fundamental bound due to the convergence rate in Wasserstein distance of empirical measures. It imposes a recursive exponential growth in the width needed with respect to depth. This condition appears excessive as compared to the typical width of neural networks used in practice.
 We highlight here that \citet{ainsworth2022git} empirically demonstrates that networks at initialization do not exhibit LMC and that the loss barrier is erased only after a sufficient number of SGD steps.

\subsection{Showing Linear Mode Connectivity}

We make the following assumption on the loss function to show LMC from \Cref{normal:newbiglemmasuccessive}.

\begin{assumption}
    \label{loss_convex}
    $\forall y \in \sR^{m_{L+1}}$, the loss $\gL(\cdot,y)$ is convex and $1$-Lipschitz.
\end{assumption}

We finally prove the following bound on the loss of the mean network $M_t$ in \Cref{appendix:new_theorem_normal_LMC}:
\begin{mdframed}[style=MyFrame2]
    \begin{restatable}{theorem}{newnormaltheoremLMC}
    \label{normal:new_theorem_LMC}
            Under normal initialization of the weights, for $m_1\geq \Tilde{m}_1,\cdots,m_L\geq \Tilde{m}_L$ as defined in \Cref{normal:newbiglemmasuccessive}, $m_0\geq 5$, and under \Cref{loss_convex} we know that $\forall t \in [0,1]$, with $Q$-probability at least $1-\delta_{Q}$, there exists permutations of hidden layers $1,\ldots,L$ of network $B$ that are independent of $t$, such that: 
            \begin{multline*}
                \E_P\left[\gL\left(\hat{f}_{M_t}(x),y\right)\right]\leq t\E_P\left[\gL\left(\hat{f}_A(x),y\right)\right]+\\
                (1-t)\E_P\left[\gL\left(\hat{f}_B(x),y\right)\right]+\tfrac{4\sqrt{m_{L+1}}}{\delta_Q^2}\epsilon
            \end{multline*}
    \end{restatable}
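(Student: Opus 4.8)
The plan is to reduce the claim to controlling how the nonlinearity-free output layer acts on the activation errors that \Cref{normal:newbiglemmasuccessive} already quantifies at the last hidden layer. First I would invoke that lemma: for the prescribed $\epsilon$ and for widths $m_\ell \ge \Tilde{m}_\ell$, \Cref{new_property} holds at layer $L$ with $\ubar{E}_L = 1$ and $E_L = \epsilon^2$, which supplies $t$-independent permutations $\Pi_1,\dots,\Pi_L$, an equi-partition $\gI^L$, and a coarse activation map $\ubar{\phi}^L(x)$ with
\[
\E_{P,Q}\|\phi^L_A(x)-\ubar{\phi}^L(x)\|_2^2,\quad \E_{P,Q}\|\Tilde{\phi}^L_B(x)-\ubar{\phi}^L(x)\|_2^2,\quad \E_{P,Q}\|\phi^L_{M_t}(x)-\ubar{\phi}^L(x)\|_2^2\ \le\ \epsilon^2 m_L,
\]
the last one uniformly over $t\in[0,1]$. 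These are the only facts about the hidden layers that the argument uses.

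Next I would decompose the interpolated prediction at the output layer. Since permuting the last hidden layer consistently leaves the function unchanged, $\Tilde{W}_B^{L+1}\Tilde{\phi}^L_B(x)=W_B^{L+1}\phi^L_B(x)=\hat{f}_B(x)$, while $\hat{f}_{M_t}(x)=\bigl(tW_A^{L+1}+(1-t)\Tilde{W}_B^{L+1}\bigr)\phi^L_{M_t}(x)$ and $\hat{f}_A(x)=W_A^{L+1}\phi^L_A(x)$; hence
\[
\begin{aligned}
\hat{f}_{M_t}(x) &- t\hat{f}_A(x) - (1-t)\hat{f}_B(x) \\
&= tW_A^{L+1}\bigl(\phi^L_{M_t}(x)-\phi^L_A(x)\bigr) + (1-t)\Tilde{W}_B^{L+1}\bigl(\phi^L_{M_t}(x)-\Tilde{\phi}^L_B(x)\bigr).
\end{aligned}
\]
Routing each activation difference through $\ubar{\phi}^L(x)$ and using $(a+b)^2\le 2a^2+2b^2$, both bracketed errors obey $\E_{P,Q}\|\cdot\|_2^2 \le 4\epsilon^2 m_L$.

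I would then push these vectors through the output matrices. Conditioning on the first $L$ weight matrices of both networks (on which the bracketed vectors and the permutations depend, but $W_A^{L+1},W_B^{L+1}$ do not) and using that the rows of $W_A^{L+1}$ and of $W_B^{L+1}$ are i.i.d.\ $\gN(0,I_{m_L}/m_L)$ — so $\E\|W^{L+1}v\|_2^2=\tfrac{m_{L+1}}{m_L}\|v\|_2^2$ for fixed $v$, and $W_B^{L+1}\Pi_L^T$ has the same law as $W_B^{L+1}$ by exchangeability — I obtain $\E_{P,Q}\|W_A^{L+1}(\phi^L_{M_t}-\phi^L_A)\|_2^2\le 4m_{L+1}\epsilon^2$ and likewise for the $B$-term, whence, by the triangle inequality, $0\le t\le1$, and Jensen, $\E_{P,Q}\|\hat{f}_{M_t}(x)-t\hat{f}_A(x)-(1-t)\hat{f}_B(x)\|_2 \le 2\sqrt{m_{L+1}}\,\epsilon$. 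Finally, by convexity and $1$-Lipschitzness of $\gL(\cdot,y)$ (\Cref{loss_convex}),
\[
\gL(\hat{f}_{M_t}(x),y)\le t\gL(\hat{f}_A(x),y)+(1-t)\gL(\hat{f}_B(x),y)+\|\hat{f}_{M_t}(x)-t\hat{f}_A(x)-(1-t)\hat{f}_B(x)\|_2;
\]
taking $\E_P$ and applying Markov's inequality to the $Q$-expectation of the residual term gives, with $Q$-probability at least $1-\delta_Q$, a residual $\le 2\sqrt{m_{L+1}}\,\epsilon/\delta_Q\le 4\sqrt{m_{L+1}}\,\epsilon/\delta_Q^2$, which is the stated bound.

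\textbf{Main obstacle.} The delicate step is the treatment of the output layer: it carries no nonlinearity, so neither \Cref{new_property} nor \Cref{newbiglemma} applies to it, and one must argue directly that a fresh Gaussian weight matrix contracts (in second moment) the \emph{random, permutation-dependent} activation errors at layer $L$ — this is where the independence of $W_A^{L+1}$ (resp.\ $W_B^{L+1}$) from $\phi^L_{M_t},\phi^L_A,\Tilde{\phi}^L_B,\Pi_1,\dots,\Pi_L$ and the exchangeability of the Gaussian rows (to absorb $\Pi_L$) must be used carefully. A secondary nuisance is that \Cref{new_property} only provides $L^2(P\otimes Q)$ control, so the high-probability, per-$t$ form of the conclusion is extracted by a Markov step — which is precisely where the (loose) $1/\delta_Q^2$ factor appears — rather than by a bound uniform in $t$.
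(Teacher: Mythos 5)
Your proposal is correct and follows essentially the same route as the paper's own proof: invoke \Cref{normal:newbiglemmasuccessive} (i.e.\ \Cref{new_property} at layer $L$ with $E_L=\epsilon^2$), decompose $\hat{f}_{M_t}-t\hat{f}_A-(1-t)\hat{f}_B$ through the linear last layer, bound the fresh Gaussian matrices' action on the activation errors using their independence from the first $L$ layers and the permutations, then apply Jensen, Markov, and convexity plus $1$-Lipschitzness of the loss. The only minor difference is the final probabilistic step: the paper applies two successive Markov inequalities (one accounting for the auxiliary random measures $\hat{\mu}_{\Tilde{m}_\ell}$ on which the permutations depend, one for the draw of the networks), which is exactly where its $\tfrac{4}{\delta_Q^2}$ factor originates, whereas your single Markov step over the joint randomness yields $2\sqrt{m_{L+1}}\,\epsilon/\delta_Q$, which still implies the stated bound since $2/\delta_Q\leq 4/\delta_Q^2$.
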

\end{mdframed}

\textbf{Discussion.} The minimal width at layer $\ell$ needed for \Cref{normal:new_theorem_LMC} is recursively $\Tilde{m}_l\sim \epsilon^{-\Tilde{m}_{l-1}}$. Applied to randomly initialized two-layer networks, we need a hidden layer's dimension of $\epsilon^{-m_0}$ as opposed to \citet{entezari2021role} which prove a bound of $\epsilon^{-(2m_0+4)}$.

\subsection{Tightness of the bound dependency with respect to the error tolerance}

We discuss here the tightness of the minimal width $\Tilde{m}_\ell$ we require in \Cref{normal:newbiglemmasuccessive} with respect to the error tolerance $\epsilon$. The recursive exponential growth of the width in the form $\Tilde{m}_\ell \sim \left(\frac{1}{\epsilon}\right)^{\Tilde{m}_{\ell-1}}$ is a consequence of the convergence rate of Wasserstein distance of empirical measures in dimension $\Tilde{m}_{\ell-1}$ at the rate $\nicefrac{1}{\Tilde{m}_{\ell-1}}$. \Cref{theorem:lower_bound} provides a corresponding lower bound which shows that this recursive exponential growth is tight at the precise rate $\left(\tfrac{1}{\epsilon}\right)^{\Tilde{m}_{\ell-1}}$ (just take $n=\Tilde{m}_{\ell-1}, \, m=\Tilde{m}_\ell, \, \mu=\mu_\ell, \, x=\phi_A^{\ell-1}(x), W_{A,B}=W_{A,B}^{\ell}$). A proof is given in \Cref{appendix:lower_bound}.

\begin{mdframed}[style=MyFrame2]
    \begin{restatable}{theorem}{theoremlowerbound}
    \label{theorem:lower_bound}
    Let $n\geq 1, x\sim P\in \gP_1(\sR^n)$ and $\mu \in \gP(\sR^n)$ such that $\frac{d\mu}{d\Leb}\leq F_1$. Suppose $\Sigma=\E[xx^T]$ is full rank $n$. Let $m\geq 1$ and $W_A,W_B\in \gM_{m, n}(\sR)$ whose rows are drawn i.i.d. from $\mu$.
    Then, there exists $F_0$ such that 
    \begin{equation*}
        \E_{W_A,W_B}[\min_{\Pi\in \gS_m}\E_{P}\|(W_A-\Pi W_B)x\|_2^2] \geq F_0 \left(\frac{1}{m}\right)^{2/n}
    \end{equation*}
    \end{restatable}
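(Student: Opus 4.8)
The plan is to reduce the claim to a two-sample lower bound for the Wasserstein distance of empirical measures of $\mu$, and to prove that bound by a direct covering argument. \emph{Reduction.} Writing $\pi\in\gS_m$ for the permutation associated with $\Pi$, so that $[\Pi W_B]_{i:}=[W_B]_{\pi(i):}$, one has $\E_P\|(W_A-\Pi W_B)x\|_2^2=\sum_{i=1}^m([W_A]_{i:}-[W_B]_{\pi(i):})^\top\Sigma([W_A]_{i:}-[W_B]_{\pi(i):})$. Since $\Sigma$ has full rank $n$, $\lambda_{\min}(\Sigma)>0$ and $v^\top\Sigma v\ge\lambda_{\min}(\Sigma)\|v\|_2^2$, so this is at least $\lambda_{\min}(\Sigma)\sum_i\|[W_A]_{i:}-[W_B]_{\pi(i):}\|_2^2$. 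Minimizing over $\pi$ and using that a linear functional over doubly stochastic matrices is minimized at a permutation (Birkhoff--von Neumann), $\min_{\pi}\tfrac1m\sum_i\|[W_A]_{i:}-[W_B]_{\pi(i):}\|_2^2=\gW_2^2(\hat\mu_A,\hat\mu_B)$, where $\hat\mu_A=\tfrac1m\sum_i\delta_{[W_A]_{i:}}$ and $\hat\mu_B=\tfrac1m\sum_i\delta_{[W_B]_{i:}}$ are empirical measures of $m$ i.i.d.\ samples from $\mu$. Hence $\min_{\Pi}\E_P\|(W_A-\Pi W_B)x\|_2^2\ge\lambda_{\min}(\Sigma)\,m\,\gW_2^2(\hat\mu_A,\hat\mu_B)$, and it suffices to show $\E[\gW_2^2(\hat\mu_A,\hat\mu_B)]\ge c\,m^{-2/n}$ for some $c=c(n,\mu,F_1)>0$; the surviving factor $m$ in fact upgrades the conclusion to $F_0\,m^{1-2/n}$, matching the width scaling $(1/\epsilon)^{\Tilde{m}_{\ell-1}}$ discussed above.

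\emph{The covering lower bound.} Since $\mu$ is a probability measure, fix an axis-aligned cube $Q_0$ of finite side $L$ with $\mu(Q_0)\ge\tfrac12$; then $F_1L^n\ge\mu(Q_0)\ge\tfrac12$. Partition $Q_0$ into $N$ congruent subcubes of side $s=LN^{-1/n}$; since $\tfrac{d\mu}{d\Leb}\le F_1$, each subcube $Q_j$ has $\mu(Q_j)\le F_1s^n=F_1L^n/N$. Condition on $\hat\mu_B=\tfrac1m\sum_i\delta_{y_i}$ and set $r=s/2$: each ball $B(y_i,r)$ lies in a cube of side $s$ and so meets at most $2^n$ of the $Q_j$, whence the set $G$ of indices $j$ with $Q_j\cap\bigcup_i B(y_i,r)=\emptyset$ satisfies $|G|\ge N-2^n m$, and every point of $\bigcup_{j\in G}Q_j$ is at distance $>r$ from all atoms of $\hat\mu_B$. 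Therefore, for any coupling of $\hat\mu_A$ and $\hat\mu_B$ the $\hat\mu_A$-mass in $\bigcup_{j\in G}Q_j$ is displaced by more than $r$, giving $\gW_2^2(\hat\mu_A,\hat\mu_B)\ge r^2\sum_{j\in G}\hat\mu_A(Q_j)$. Taking $\E$ over $\hat\mu_A$, independent of $\hat\mu_B$ hence of $G$, $\E[\gW_2^2(\hat\mu_A,\hat\mu_B)]\ge r^2\,\E\big[\sum_{j\in G}\mu(Q_j)\big]\ge r^2\big(\mu(Q_0)-2^n m\,F_1L^n/N\big)$. Choosing $N=\lceil 2^{n+2}F_1L^n m\rceil$ makes the subtracted term at most $\tfrac14$ (and keeps $N>2^n m$, using $F_1L^n\ge\tfrac12$), so $\E[\gW_2^2(\hat\mu_A,\hat\mu_B)]\ge\tfrac14 r^2$ with $r^2=\tfrac14 L^2N^{-2/n}=\Theta_n\big(F_1^{-2/n}\big)\,m^{-2/n}$. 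Combined with the reduction, this yields the claim with $F_0$ a constant depending only on $n$, $\lambda_{\min}(\Sigma)$ and $F_1$.

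\emph{Main obstacle.} The delicate point is the calibration in the second step: picking $N\asymp m$ so that simultaneously (i) enough subcubes avoid all $m$ balls $B(y_i,r)$ and (ii) the surviving subcubes still carry an $\Omega(1)$ fraction of the $\mu$-mass, all while the subcube diameter stays $\asymp m^{-1/n}$ so the forced displacement $r$ has the right order; the remaining ingredients (the quadratic-form rewriting and Birkhoff--von Neumann) are routine. Deducing the bound from one-sample rates (Fournier--Guillin type) is tempting, but a lower bound on $\gW_2(\hat\mu_A,\hat\mu_B)$ does not follow formally from one on $\gW_2(\hat\mu_m,\mu)$, which is why the argument is carried out directly on the two-sample distance.
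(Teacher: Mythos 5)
Your proof is correct, and it takes a genuinely different route from the paper in the key step. The reduction is the same as the paper's: rewrite $\E_P\|(W_A-\Pi W_B)x\|_2^2$ as a quadratic form in $\Sigma$, lower bound it by $\lambda_{\min}(\Sigma)$ times the sum of squared row distances, and identify the minimum over permutations with ($m$ times) $\gW_2^2(\hat{\mu}_A,\hat{\mu}_B)$. Where the paper then invokes Proposition 6 of \citet{weed2019sharp} — a one-sample lower bound $\gW_2^2(\hat{\mu}_A,\mu)\geq F_3 m^{-2/n}$ obtained from the covering-number bound of \Cref{lemma:packing2} — and passes from $\mu$ to $\hat{\mu}_B$ via convexity of the optimal cost (\Cref{lemma:villani}, i.e.\ $\gW_2^2(\hat{\mu}_A,\E_{W_B}[\hat{\mu}_B])\le\E_{W_B}[\gW_2^2(\hat{\mu}_A,\hat{\mu}_B)]$), you prove the two-sample bound directly with a self-contained grid argument: about $m$ cells of side of order $m^{-1/n}$, of which at most $2^n m$ can be touched by the radius-$r$ balls around the atoms of $\hat{\mu}_B$, so a constant fraction of the $\mu$-mass (hence, in expectation, of the $\hat{\mu}_A$-mass) must be transported a distance larger than $r$. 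Your version buys elementariness (no external citation) and makes explicit the stronger bound $F_0\,m^{1-2/n}$, which the paper's ingredients also deliver but which is not stated; the paper's route buys brevity and reuses machinery (packing numbers, the Weed--Bach results) already deployed elsewhere in the appendix. Two small remarks: your closing claim that the two-sample lower bound ``does not follow formally'' from the one-sample one is mistaken — it does follow, precisely via the Jensen/convexity inequality above, and that is exactly the paper's argument; and your choice $N=\lceil 2^{n+2}F_1L^n m\rceil$ should be rounded up to a perfect $n$-th power so that $Q_0$ can actually be partitioned into congruent subcubes, which only changes the constants and not the rate.
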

\end{mdframed}

\textbf{Remark 1.} Using an effective width $\Tilde{m}_{\ell-1}$ smaller and independent of the real width $m_{\ell-1}$ allows to show LMC for networks of constant hidden width $m_1=m_2=\ldots=m_L$ as soon as they verify $m_1\geq \Tilde{m}_1, \ldots, m_L\geq \Tilde{m}_L$ where $\Tilde{m}_1,\ldots,\Tilde{m}_L$ are defined in~\Cref{normal:newbiglemmasuccessive}. Without this trick, we need a recursive exponential growth of the real width $m_\ell \sim \left(\tfrac{1}{\epsilon}\right)^{m_{\ell-1}}$. 

\textbf{Remark 2.} Motivated by the fact that feature learning may concentrate the weight distribution on low dimensional sub-space, we could extend our proofs to the case where the underlying weight distribution has a support with smaller dimension to get recursive bounds no longer at rate $\Tilde{m}_{\ell-1}$ but at a smaller one.
Note this is unlikely to happen as we expect the matrix of weight vectors of a given layer to be full rank.
Therefore, we study in the next section the case when this matrix is approximately low rank, or equivalently when the weight distribution is concentrated around a low dimensional approximated support.

\subsection{Approximately low dimensional supported measures}
\label{section:low_dim}

For the sake of clarity, assume from now on that the layer $\ell-1$ of network $A$ has been permuted such that for $\gI^{\ell-1}=\{I_1^{\ell-1},\ldots,I_{\Tilde{m}_{\ell-1}}^{\ell-1}\}$ (given in \Cref{new_property}) we have $I_1^{\ell-1}=\{1, \ldots, p_{\ell-1}\},\,\ldots,\, I_{\Tilde{m}_{\ell-1}}^{\ell-1}=\{m_{\ell-1}-p_{\ell-1}+1, \ldots, m_{\ell-1}\}$ with $p_{\ell-1}=\nicefrac{m_{\ell-1}}{\Tilde{m}_{\ell-1}}\}$. This assumption is mild since we can always consider a permuted version of network $A$ without changing the problem.

Motivated by the discussion in \Cref{appendix:approx_low_dim_cov_matrix} we consider the model where the weights at layer $\ell$ are initialized i.i.d. multivariate Gaussian $\mu_\ell=\gN(0,\Sigma^{\ell-1})$ with
\begin{equation*}
    \Sigma^{\ell-1}\eqdef \Diag\left(\lambda^\ell_1I_{p_{\ell-1}}, \lambda_2^\ell I_{p_{\ell-1}}, \ldots,\lambda^\ell_{\Tilde{m}_{\ell-1}}I_{p_{\ell-1}}\right)
\end{equation*}
with $\tfrac{1}{m_{\ell-1}}\tfrac{\Tilde{m}_{\ell-1}}{k_{\ell-1}}\geq \lambda_1^\ell \geq \lambda_2^\ell \geq \ldots \geq \lambda^\ell_{\Tilde{m}_{\ell-1}}$ with $k_{\ell-1}\leq \Tilde{m}_{\ell-1}$ an approximate dimension of the support of the underlying weights distribution. Note that to balance the low dimensionality of the weights distribution, we have replaced the upper-bound on the eigenvalues $\tfrac{1}{\Tilde{m}_{\ell-1}}$ by the greater value $\tfrac{1}{m_{\ell-1}}\tfrac{\Tilde{m}_{\ell-1}}{k_{\ell-1}}$ to avoid vanishing activations when $\ell$ grows which would have made our result vacuous.

The following assumption states that the weights distribution $\mu_{\ell}^{\gI^{\ell-1}}$ at layer $\ell$ considered in $\gP_1(\sR^{\Tilde{m}_{\ell-1}})$ (with the operation explicited in \Cref{section:preliminaries}) is approximately of dimension $k_{\ell-1}= e\Tilde{m}_{\ell-1}$. The approximation becomes more correct as $\eta \rightarrow 0$.

\begin{assumption}[Approximately low-dimensionality]
    \label{approx_lowdim}
    $\exists \eta,e \in (0,1), \forall \ell \in [L], \, \frac{\sqrt{\sum_{j=k_{\ell-1}}^{\Tilde{m}_{\ell-1}}\lambda^\ell_j}}{4\sqrt{\sum_{j=1}^{k_{\ell-1}}\lambda^\ell_j}}\leq \eta, \, \tfrac{k_{\ell-1}}{\Tilde{m}_{\ell-1}}= e$
\end{assumption}

\begin{mdframed}[style=MyFrame2]
    \begin{restatable}{theorem}{approxlowdimtheoremLMC}
        \label{low_dim:newtheoremLMC}
    Under \Cref{approx_lowdim,loss_convex}, given $\epsilon > 0$, if $em_0\geq 5$ there exists minimal widths $\Tilde{m}_1, \ldots, \Tilde{m}_L$ such that if $\eta^{-k_0}\geq m_1\geq \Tilde{m}_1,\ldots,\eta^{-k_{L-1}}\geq m_L\geq \Tilde{m}_L$, \Cref{new_property} is verified at the last hidden layer $L$ for $\ubar{E}_L=1,E_L=\epsilon^2$.
    Moreover, $\forall \ell \in [L], \,\exists T'_\ell$ which does only depend on $L,e,\ell,$ such that one can define recursively $\Tilde{m}_\ell$ as
    \begin{equation*}
\Tilde{m}_{\ell}=\Tilde{\gO}\left(\frac{T'_\ell}{\epsilon}\right)^{k_{\ell-1}}=\Tilde{\gO}\left(\frac{T'_\ell}{\epsilon}\right)^{e\Tilde{m}_{\ell-1}}
    \end{equation*}
    where $\Tilde{m}_0=m_0$. Moreover there exists permutations of hidden layers $1,\ldots,L$ of network $B$ s.t. $\forall t \in [0,1]$, with $Q$-probability at least $1-\delta_{Q}$: 
    \begin{multline*}
\E_P\left[\gL\left(\hat{f}_{M_t}(x),y\right)\right]\leq t\E_P\left[\gL\left(\hat{f}_A(x),y\right)\right]\\
+(1-t)\E_P\left[\gL\left(\hat{f}_B(x),y\right)\right]+\tfrac{4\sqrt{m_{L+1}}}{\sqrt{e}\delta_Q^2}\epsilon
    \end{multline*}
    \end{restatable}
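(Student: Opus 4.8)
The plan is to mirror the proof of \Cref{normal:new_theorem_LMC}, replacing the isotropic Gaussian estimates by ones adapted to the covariance $\Sigma^{\ell-1}$ of \Cref{approx_lowdim}. Concretely, I would (a) show that this model makes \Cref{new_assumption:1,new_assumption:2} hold with constants $C_1,C_2$ that can be pushed below any prescribed level at the price of enlarging $\Tilde{m}_{\ell+1}$; (b) apply \Cref{newbiglemma} successively for $\ell=0,\dots,L-1$ to propagate \Cref{new_property} to the last hidden layer with $\ubar{E}_L=1$, $E_L=\epsilon^2$, which is essentially \Cref{normal:newbiglemmasuccessive} with the new constants; and (c) use convexity and $1$-Lipschitzness of $\gL$ (\Cref{loss_convex}) together with a Markov argument over $Q$ to convert the resulting activation proximity into the loss bound, exactly as for \Cref{normal:new_theorem_LMC}. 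Throughout I use the reduction of the ``For the sake of clarity'' paragraph: the equi-partition $\gI^{\ell}$ produced by \Cref{new_property} at layer $\ell$ may be assumed in the canonical block form aligned with $\Sigma^{\ell}$, since permuting the columns of network $A$ leaves the problem unchanged.

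\Cref{new_assumption:2} is immediate from the Gaussian row structure: $\E_Q\|W_A^{\ell+1}X\|_2^2=m_{\ell+1}\,X^{\top}\Sigma^{\ell}X\le m_{\ell+1}\lambda^{\ell+1}_1\|X\|_2^2$, and the rescaled eigenvalue cap $\lambda^{\ell+1}_1\le\frac{1}{m_\ell}\frac{\Tilde{m}_\ell}{k_\ell}=\frac{1}{e\,m_\ell}$ gives $C_2=1/e$; the surrogate measure built below obeys the same eigenvalue budget, so the estimate also covers $W_{\Tilde{m}_{\ell+1}}$. The crux is \Cref{new_assumption:1}. The block-summation pushforward of $\gN(0,\Sigma^{\ell})$ onto $\sR^{\Tilde{m}_\ell}$ is a centered Gaussian $\nu_{\ell+1}$ whose spectrum is dominated coordinatewise by $\big(p_\ell\lambda^{\ell+1}_j\big)_{j}$, and by \Cref{approx_lowdim} its bottom $\Tilde{m}_\ell-k_\ell$ coordinates carry at most $16\eta^2$ times the mass of the top $k_\ell$. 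I would then (i) discard those coordinates, at a $\gW_2^2$ cost of order $\eta^2$ times the head variance (itself $\gO(1)$ by the eigenvalue cap); (ii) on the remaining $k_\ell$-dimensional Gaussian invoke the classical convergence rate $\E\,\gW_2^2(\hat{\nu}_n,\nu)=\Tilde{\gO}(n^{-2/k_\ell})$ --- valid because $k_\ell=e\Tilde{m}_{\ell-1}\ge e m_0\ge5$, which is precisely why the hypothesis $e m_0\ge5$ is assumed --- combined with a triangle inequality linking $\hat{\mu}^{\gI^{\ell}}_{A,\ell+1}$, its population law $\nu_{\ell+1}$, and the surrogate $\hat{\mu}^{\gI^{\ell}}_{\Tilde{m}_{\ell+1}}$ (a near-optimal $\Tilde{m}_{\ell+1}$-point quantization of the projected Gaussian, hence independent of $A,B$); and (iii) take $\Tilde{m}_{\ell+1}$ of the form $\Tilde{\gO}\big(T'_{\ell+1}/\epsilon\big)^{k_\ell}$, large enough that $C_1$ falls below the level the recursion of \Cref{newbiglemma} requires. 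This produces the recursive growth $\Tilde{m}_{\ell}\sim(1/\epsilon)^{k_{\ell-1}}=(1/\epsilon)^{e\Tilde{m}_{\ell-1}}$, shown tight by \Cref{theorem:lower_bound}.

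To conclude, I unroll \Cref{newbiglemma}: iterating $\ubar{E}_{\ell+1}=C_2\ubar{E}_\ell$ and $E_{\ell+1}=2C_2E_\ell+2C_1\Tilde{m}_\ell\ubar{E}_\ell$ from suitably normalised input-layer constants, with $C_2=1/e$ and the $C_1$ at each step chosen small enough, gives $\ubar{E}_L=1$ and $E_L\le\epsilon^2$; the constants $T'_\ell$ (depending only on $L,e,\ell$) absorb the accumulated $(2/e)^{L}$, the per-layer width $\Tilde{m}_\ell$ multiplying $C_1$, and the $\eta^2$-tail. With \Cref{new_property} at layer $L$, a triangle inequality through $\ubar{\phi}^L$ gives $\E_{P,Q}\|\phi^L_{M_t}(x)-(t\phi^L_A(x)+(1-t)\Tilde{\phi}^L_B(x))\|_2^2=\gO(\epsilon^2 m_L)$; pushing this through the final, linear layer via the \Cref{new_assumption:2}-type estimate (which carries the extra $1/\sqrt{e}$) and using convexity and $1$-Lipschitzness of $\gL$ yields $\E_{P,Q}[\gL(\hat{f}_{M_t}(x),y)]\le t\,\E_{P,Q}[\gL(\hat{f}_A(x),y)]+(1-t)\,\E_{P,Q}[\gL(\hat{f}_B(x),y)]+\tfrac{c\sqrt{m_{L+1}}}{\sqrt{e}}\epsilon$; a Markov argument over $Q$, as in the proof of \Cref{normal:new_theorem_LMC} (which is what produces the $\delta_Q^{-2}$), upgrades this to the claimed bound holding for all $t$ with $Q$-probability at least $1-\delta_Q$, with the announced constant $\tfrac{4\sqrt{m_{L+1}}}{\sqrt{e}\,\delta_Q^2}\epsilon$.

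The main obstacle is not any single estimate but the quantitative bookkeeping over the $L$ layers: one must calibrate how small $C_1$ --- equivalently how large $\Tilde{m}_{\ell+1}$ --- must be so that, simultaneously, the accumulated factors $2/e$, the widths $\Tilde{m}_\ell$ multiplying $C_1$, and the irreducible $\eta^2$-tail all stay controlled; it is this interplay, and in particular the tail error (paid once per neuron), that forces the upper constraints $m_\ell\le\eta^{-k_{\ell-1}}$ and confines the argument to the regime where $\epsilon$ is not much smaller than $\eta$. A secondary point is to check that using an arbitrary equi-partition $\gI^{\ell}$ rather than the canonical one changes nothing: one verifies that \Cref{approx_lowdim} and the eigenvalue cap are preserved under block summation, which reduces to trace preservation.
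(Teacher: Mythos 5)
Your proposal matches the paper's proof in all essentials: you verify \Cref{new_assumption:1,new_assumption:2} for the block-diagonal Gaussian model with $C_2=\nicefrac{1}{e}$ and $C_1=\Tilde{\gO}\bigl(\Tilde{m}_\ell^{-2/k_{\ell-1}}\bigr)$ valid only in the window $m_\ell\leq \eta^{-k_{\ell-1}}$ (the paper's \Cref{low_dim:new_assumption1,low_dim:new_assumption2}), unroll \Cref{newbiglemma} $L$ times to get $E_L=\epsilon^2$, and conclude exactly as in \Cref{normal:new_theorem_LMC}, with the final-layer estimate contributing the extra $\nicefrac{1}{\sqrt{e}}$ and the two Markov steps the $\delta_Q^{-2}$. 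The only (minor) deviations are internal: you obtain the approximately-low-dimensional Wasserstein rate by discarding the $\eta$-tail coordinates and using the $k$-dimensional rate plus a triangle inequality, where the paper proves \Cref{expect_wass_lowdim} directly via covering numbers of a product of balls, and you use a quantization surrogate rather than the paper's i.i.d.\ sample (for which the second-moment bound required by \Cref{new_assumption:2} is immediate rather than asserted); both choices lead to the same bounds.
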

\end{mdframed}

\textbf{Discussion.} We give a proof in \Cref{appendix:proof_low_dim_LMC}. For $\eta$ small enough, the distribution of weights is approximately lower dimensional. It yields faster convergence rates until $m$ becomes exponentially big in $\eta$. This prevents the previous recursive exponential growth of width with respect to depth, though asymptotically, we recover the same rates as in \Cref{normal:new_theorem_LMC}. 
The smaller $e$, the lower dimensional are the distributions, and the less the width needs to grow when $\epsilon\rightarrow 0$. The problem in that model is that the constant $T'_i$ explodes if $e\rightarrow 0$, which prevents using a model with fixed $k_\ell$ across the layers for the weight distribution. We want to highlight here that the proof can be extended to such a case, but we need to assume that the constant $C_2$ is bounded and not depending on $e$ across the layers in \Cref{newbiglemma} (recall that with our proof, we had $C_2=\tfrac{1}{e}$). This assumption seems coherent because the average activations don't explode across layers in the model.
Assuming this, the bound we obtain for $\Tilde{m}_\ell$ in \Cref{low_dim:newtheoremLMC} is completely independent on $\Tilde{m}_{\ell-1}$, and there is no recursive exponential growth in the width needed across the layers. We give a more explicit discussion in \Cref{apprendix:discuss_low_dim}.

\subsection{LMC for sub-Gaussian distributions}

Still under the setting of~\Cref{new_assumption:0} assume that the underlying distribution $\mu_\ell$ verifies for each layer $\ell\in [L+1]$: if $X\sim \mu_\ell$ then, $\forall j\neq k \in [m_{l-1}], X_j \amalg X_k$. Moreover $\forall i \in [\Tilde{m}_{\ell-1}], \forall j,k \in I^{\ell-1}_i$,
\begin{equation*}
     \E[X_j^2]=\E[X_k^2]=\lambda_i^\ell
\end{equation*}
Finally suppose the variables are sub-Gaussian i.e., $\exists K > 0, \forall i \in [\Tilde{m}_{\ell-1}], \forall j\in I^{\ell-1}_i,\, \forall c > 0$,
\begin{equation*}
    \sP(|X_j|\geq c)\leq 2\exp(-\frac{c^2}{K\lambda_i^\ell})
\end{equation*}
We explain in \Cref{appendix:new_subgaussian} why both \Cref{normal:new_theorem_LMC} (in the case $\lambda_1^\ell=\ldots=\lambda^\ell_{\Tilde{m}_{l-1}}=\nicefrac{1}{m_{\ell-1}}$) and \Cref{low_dim:newtheoremLMC} hold with mild modifications in the constants.

It extends our previous result considerably to LMC for any large enough networks whose weights are i.i.d. and whose underlying distribution has a sub-Gaussian tail (for example uniform distribution).

\subsection{Link with dropout stability}

In \Cref{appendix:dropout_stability}, we build a first step towards unifying our study with the dropout stability viewpoint \citep{kuditipudi2019explaining,shevchenko2020landscape} by showing in a simplified setting how networks become dropout stable in the same asymptotics on the width as the one needed in our~\Cref{normal:new_theorem_LMC}.
\section{EXPERIMENTS}

\label{section:experiments}
Our previous study shows the influence of the dimension of the underlying weight distribution on LMC effectiveness. Based on this insight we develop a new weight matching method at the crossroads between previous naive weight matching (WM) and activation matching (AM) methods \citep{ainsworth2022git}. Given $n$ training points $x_i, \, i\in [n]$, denote $Z_A^\ell \in \gM_{m_{\ell},n}(\sR)$ (respectively $Z_B^\ell$) the activations $\phi^\ell_A(x_i)$ for the $n$ data points $x_i$. Further denote $\Sigma_A^\ell \eqdef \tfrac{1}{n}Z_A^\ell[Z_A^\ell]^T\approx \E_P\left[\phi_A^\ell(x)[\phi_A^{\ell}(x)]^T\right]$. We aim at finding for each layer $\ell$ the optimal permutation $\Pi$ minimizing the cost (respectively for naive WM, our new WM method and AM):
\begin{align}
        &\min_{\Pi \in S_{m_\ell}}  \left\|W_A^\ell-\Pi W_B^\ell \Pi_{\ell-1}^T\right\|_2^2\,, \tag{Naive WM}\\ 
        &\min_{\Pi \in S_{m_\ell}}\left\|W_A^\ell-\Pi W_B^\ell \Pi_{\ell-1}^T\right\|^2_{2,\Sigma_A^{\ell-1}}\,, \tag{WM (ours)}\\
        &\min_{\Pi \in S_{m_\ell}} \left\|Z_A^\ell-\Pi Z_B^\ell \right\|_2^2 \,, \tag{AM}        
\end{align}
where $\|\cdot\|_{2,\Sigma_A^{\ell-1}}$ is the norm\footnote{Semi-norm in full generality (if $\Sigma_A^{\ell-1}$ is not full rank)} induced by the scalar product $(X,Y)\mapsto \tr(X\Sigma_A^{\ell-1} Y^T)$. We both theoretically support the gain of our method in \Cref{theorem:gain} and empirically verify that this method constantly and substantially outperforms naive Weight Matching across different learning rates when training with SGD.

\looseness=-1
We train a three hidden layer MLP of width $512$ on MNIST with learning rates varying between $10^{-4}$ and $10^{-1}$ across $4$ runs. We plot on \Cref{fig:approx_dim_final} the approximate dimension of the considered covariance matrix for each matching method: $W_A^\ell[W_A^\ell]^T$ for WM (naive),  $W_A^\ell\Sigma_A^{\ell-1}[W_A^\ell]^T$ for WM (ours) and $\Sigma_A^{\ell}$ for AM (see~\S\ref{appendix:explanation_method}). Our code is available at \url{https://github.com/damienferbach/OT_LMC/tree/main}.

\begin{figure}[H]
     \centering
     \begin{subfigure}[b]{0.48\textwidth}
         \includegraphics[width=\linewidth]{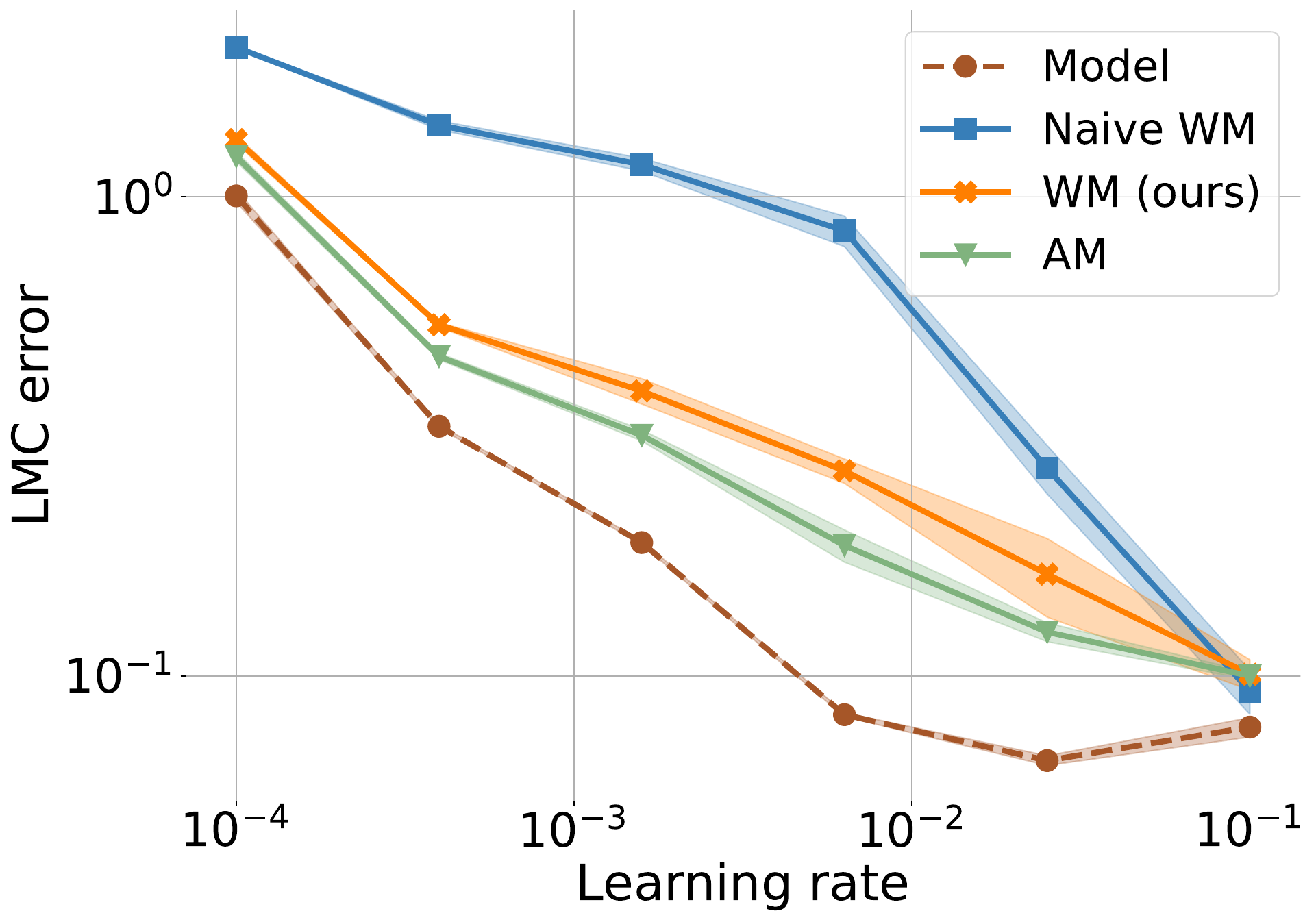}
         \caption{Mean test loss of the trained networks $A$ and $B$ and error barrier on the linear path $M_t, \, t\in [0,1]$ across different learning rate values for each matching problem.}
     \end{subfigure}
     \hfill
     \begin{subfigure}[b]{0.48\textwidth}
              \includegraphics[width=\linewidth]{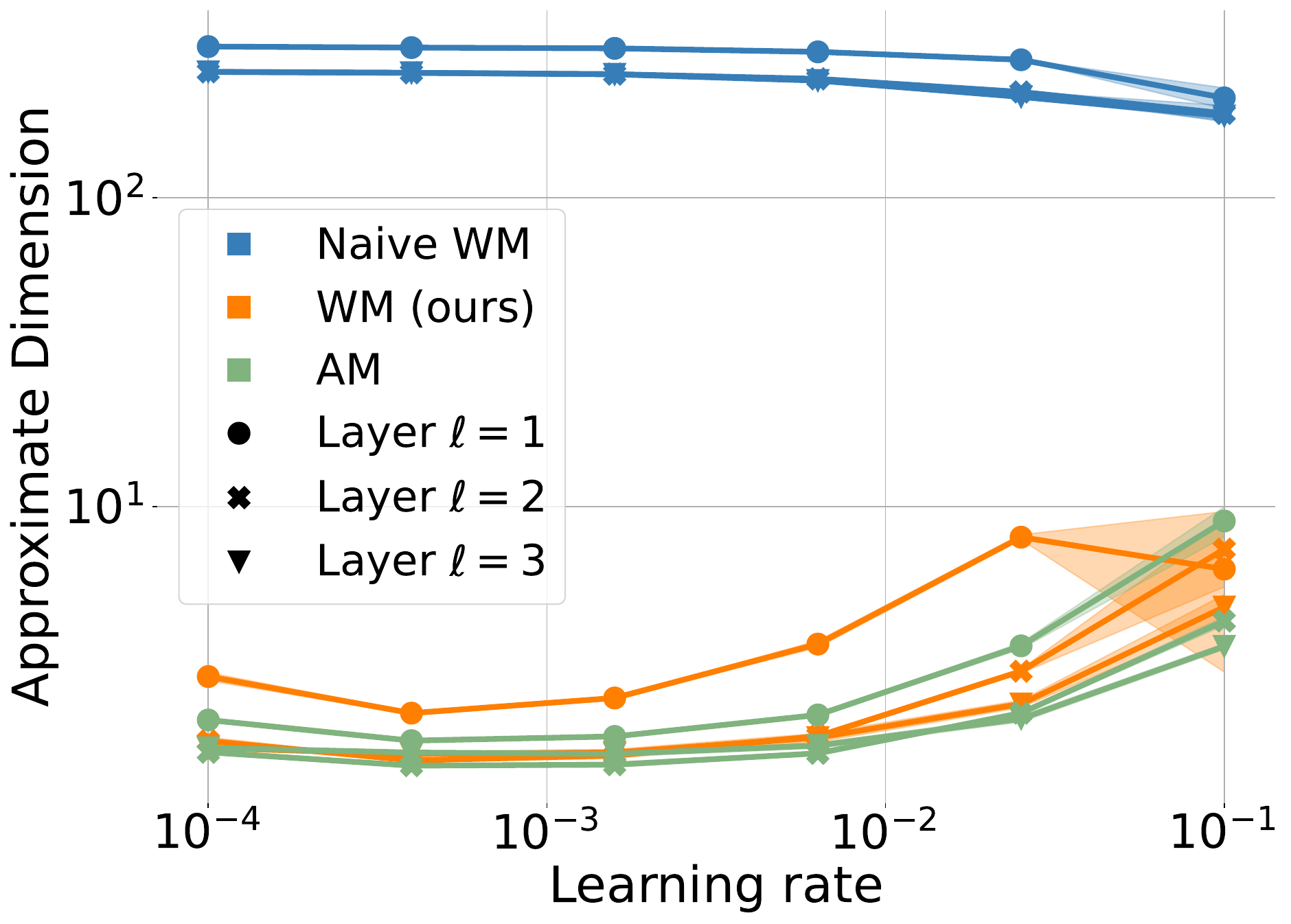}
     \caption{Approximate dimension $\Dim(S) \eqdef \nicefrac{\tr(S)^2}{\tr(S^2)}$ of the matrices considered in the matching problems at each layer.
     \newline ~
     }
     \label{fig:approx_dim_final}
     \end{subfigure}
     \caption{\small
     Statistics of the average network $M$ over the linear path between networks $A$ and $B$ using respectively weight matching (blue), weight matching using covariance of activations and activations (green), and activation matching (orange) \vspace{-.4cm}}
\label{fig:LMC_error}
\end{figure}

We see on \Cref{fig:LMC_error} the detrimental effect of high approximate dimension on LMC effectiveness, therefore validating our theoretical approach. Note that for a learning rate of $10^{-1}$ the correlation is less clear but a trend is visible on decreasing dimension for naive WM as it performs better (and increasing dimension for AM and our WM method as it performs comparatively less well). An alternative would be to use a proxy taking the diameter of the distributions into account (and not only the dimension of their support).
Finally, experiments on Adam lead to less clear results that we did not report as more experimental investigation is needed. In particular, understanding the impact of the optimizer on the independence of weights during training is crucial, as it is a central assumption in our study.

\section{DISCUSSION}

\looseness=-1
Optimal transport serves as a good framework to study linear mode connectivity of neural networks. This paper uses convergence rates of empirical measures in Wasserstein distance to upper bound the test error of the linear combination of two networks in weight space modulo permutation symmetries. Our main assumption is the independence of all neuron's weight vectors inside a given layer. This assumption is trivially true at initialization but remains valid for wide two-layers networks trained with SGD. We experimentally demonstrate the correlation between the dimension of the underlying weight distribution with LMC effectiveness and design a new weight matching method that significantly outperforms existing ones.
A natural direction for future work is to focus on the behaviour of the weights distribution inside each layer of DNNs and their independence. Moreover, extending our results to only assuming approximate independence of weights is a natural direction as it seems a more realistic setting.

\subsubsection*{Acknowledgements}

The work of B. Goujaud and A. Dieuleveut is partially supported by ANR-19-CHIA-0002-01/chaire SCAI, and Hi!Paris.
This work was partly funded by the French government under management of Agence Nationale de la Recherche as part of the ``Investissements d’avenir'' program, reference ANR-19-P3IA-0001 (PRAIRIE 3IA Institute). D. Ferbach acknowledges a stipend from ENS Paris as fonctionnaire stagiaire. This work was partly done during an internship of D. Ferbach at Ecole Polytechnique. 

\bibliography{bibliography}
\bibliographystyle{abbrvnat}

\appendix
\newpage
\onecolumn

\section{EXTENDED RELATED WORK}

\citet{frankle2020linear} were the ones to coin the term Linear Mode Connectivity and the first to recognize the importance of this structure. Notably, they studied this structure through its connections with pruning and the lottery ticket hypothesis \citep{frankle2018lottery,malach2020proving,pensia2020optimal,ferbach2022general}.

It is worth noting that some recent works in the literature study linear mode connectivity specifically in a layer-wise manner, especially due to the permutation symmetries we described in \cref{section:intro}. For example, \citet{zhou2023going,adilova2023layerwise} both study the effect of layer-wise averaging when connecting two deep neural networks. This is aligned with our theoretical study since we recursively align deep networks layer after layer.

The mode connectivity framework has been used as a tool to better understand the similarity between two models or the effect of a training procedure on the trained model. For example, \cite{lubana2023mechanistic} introduces mechanistic similarity to quantify how two models react to the same alteration of the data in latent space. They show relations between mode connectivity and mechanistic similarity. Especially they prove that if two models cannot be linear mode connected, then they are mechanistically dissimilar. Moreover, \cite{mirzadeh2020linear} use the mode connectivity framework to study whether continual and sequential learning (two different training procedure for multitask learning) are converging to a similar solution. Finally, \cite{qin2022exploring} explores mode connectivity of pretrained langage models, especially how hyper-parameters affect mode connectivity and how mode connectivity evolves during training.

Computational optimal transport has been leveraged by \cite{singh2020model,akash2022wasserstein} to find paths between two models in parameter space. The latter formulates the model fusion problem as a Wasserstein barycenter problem.

Past works have studied mode connectivity through the lens of model averaging with applications in federated learning \citep{yurochkin2019bayesian,wang2020federated}. \cite{zhou2023mode} studies the effect of data heterogeneity in federated learning on mode connectivity of global modes.

\section{PROOFS AND DETAILS ABOUT OPTIMAL TRANSPORT THEORY FOR LMC}

\subsection{Background on optimal transport and convexity lemma}

Optimal transport is a mathematical framework that aims at quantifying distances between distributions. We refer to the books \citet{villani2009optimal} and \citet{peyre2019computational} (focused on computational aspects and applications to data science) for an extensive overview of this topic.

\begin{definition}[Wasserstein distance \citep{villani2009optimal}]
\label{def:wass}
    Let $(\mathcal{X},d)$ a Polish metric space, $p\in [1,\infty)$. $\forall \mu,\nu \in \gP_1(\mathcal{X})$, define the $p$-Wasserstein distance between $\mu$ and $\nu$ by:
    \begin{equation*}
\gW_p(\mu,\nu)=\left(\inf_{\pi\in\Pi(\mu,\nu)}\int_{\mathcal{X}^2}d^p(x,y)d\pi(x,y)\right)^{\nicefrac{1}{p}}
    \end{equation*}
\end{definition}

Recall that $\Pi(\mu,\nu)$ denotes the set of coupling between $\mu$ and $\nu$, i.e.,

\begin{equation*}
    \pi \in \Pi(\mu,\nu)\Leftrightarrow \pi \in \gP_1(\mathcal{X}^2) \text{ with marginals } \mu,\nu
\end{equation*}

This defines a distance and especially it satisfies the triangular inequality. Moreover we state a Jensen type inequality proved in \citet{villani2009optimal}:

\begin{lemma}[Convexity of the optimal cost (Theorem 4.8 in \citet{villani2009optimal})]
\label{lemma:villani}
    Let $d:\sR^n\times \sR^n\rightarrow \sR_+$ a distance and for $p\in [1,\infty)$, $\gW_p: \gP_1(\sR^n)^2\rightarrow\sR_+$ its associate Wasserstein distance. Let $(\Theta,\lambda)$ be a probability space and $\mu_\theta, \nu_\theta$ two measurable functions on that space taking values in $\gP_1(\sR^n)$. Then,
    \begin{equation*}
        \gW_p^p\left(\int_\Theta \mu_\theta \lambda(d\theta),\int_\Theta \nu_\theta \lambda(d\theta)\right)\leq \int_\Theta \gW_p^p(\mu_\theta,\nu_\theta)\lambda(d\theta)
    \end{equation*}
\end{lemma}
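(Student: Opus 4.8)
The plan is to build an explicit coupling of the two barycentric measures by averaging optimal couplings pointwise in $\theta$, and then to estimate its cost with Tonelli's theorem. Write $\bar\mu \eqdef \int_\Theta \mu_\theta\,\lambda(d\theta)$ and $\bar\nu \eqdef \int_\Theta \nu_\theta\,\lambda(d\theta)$ for the two mixtures appearing on the left-hand side; these are well-defined Borel probability measures on $\sR^n$ because $\theta\mapsto\mu_\theta(A)$ and $\theta\mapsto\nu_\theta(A)$ are measurable for every Borel $A$. If $\int_\Theta \gW_p^p(\mu_\theta,\nu_\theta)\,\lambda(d\theta)=+\infty$ the inequality is trivial, so assume this integral is finite.

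\textbf{Step 1: a measurable family of optimal plans.} For each $\theta$ the set $\Pi(\mu_\theta,\nu_\theta)$ is nonempty and, since $d^p$ is nonnegative and lower semicontinuous, the infimum in \Cref{def:wass} is attained; let $\mathrm{Opt}(\theta)$ denote the nonempty closed set of optimal couplings. Since $\theta\mapsto(\mu_\theta,\nu_\theta)$ is measurable and the optimal-transport cost depends measurably (indeed lower semicontinuously) on its marginals, a measurable selection theorem of Kuratowski--Ryll-Nardzewski type --- exactly the step used in the proof of Theorem~4.8 of \citet{villani2009optimal} --- provides a measurable map $\theta\mapsto\pi_\theta$ with $\pi_\theta\in\mathrm{Opt}(\theta)$, so that $\int_{\sR^n\times\sR^n} d^p(x,y)\,d\pi_\theta(x,y)=\gW_p^p(\mu_\theta,\nu_\theta)$ for every $\theta$. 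If one prefers to bypass selecting exact minimizers, one selects measurably an $\epsilon$-optimal family instead, carries out Steps 2--3, and sends $\epsilon\to0$ at the end.

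\textbf{Step 2: the averaged coupling and its marginals.} Define $\pi\eqdef\int_\Theta \pi_\theta\,\lambda(d\theta)$, i.e. $\pi(C)\eqdef\int_\Theta\pi_\theta(C)\,\lambda(d\theta)$ for Borel $C\subseteq\sR^n\times\sR^n$; measurability of $\theta\mapsto\pi_\theta(C)$ makes this a Borel probability measure. For any Borel $A\subseteq\sR^n$ one has $\pi(A\times\sR^n)=\int_\Theta\pi_\theta(A\times\sR^n)\,\lambda(d\theta)=\int_\Theta\mu_\theta(A)\,\lambda(d\theta)=\bar\mu(A)$, and likewise $\pi(\sR^n\times A)=\bar\nu(A)$, hence $\pi\in\Pi(\bar\mu,\bar\nu)$. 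Then, using $\pi$ as a (not necessarily optimal) competitor in the definition of $\gW_p^p(\bar\mu,\bar\nu)$ and Tonelli's theorem on the nonnegative integrand $(x,y,\theta)\mapsto d^p(x,y)$,
\[
\gW_p^p(\bar\mu,\bar\nu)\ \le\ \int_{\sR^n\times\sR^n} d^p(x,y)\,d\pi(x,y)\ =\ \int_\Theta\Big(\int_{\sR^n\times\sR^n} d^p(x,y)\,d\pi_\theta(x,y)\Big)\lambda(d\theta)\ =\ \int_\Theta \gW_p^p(\mu_\theta,\nu_\theta)\,\lambda(d\theta),
\]
which is the asserted inequality.

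The only genuinely delicate ingredient is Step 1 --- producing a jointly measurable selection of optimal (or $\epsilon$-optimal) transport plans $\theta\mapsto\pi_\theta$ --- which rests on a standard measurable selection theorem together with lower semicontinuity of the optimal cost. Steps 2 and 3 are then pure bookkeeping: identifying the marginals of a mixture of couplings and applying Tonelli. Since the statement is quoted verbatim as Theorem~4.8 of \citet{villani2009optimal}, one may alternatively cite it directly, in which case the argument above is precisely the content of its short proof.
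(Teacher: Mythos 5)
Your argument is correct, but it takes a more self-contained route than the paper, whose entire proof is a one-liner: it checks that $d^p(\cdot,\cdot)$ is continuous, nonnegative, and that the associated optimal cost functional is exactly $\gW_p^p(\cdot,\cdot)$, and then invokes Theorem~4.8 of \citet{villani2009optimal} as a black box. You instead reprove that theorem in this setting: measurably selecting (exactly or $\epsilon$-)optimal plans $\theta\mapsto\pi_\theta$, averaging them into $\pi=\int_\Theta\pi_\theta\,\lambda(d\theta)$, verifying that the marginals of the mixture are $\bar\mu,\bar\nu$, and concluding with Tonelli and suboptimality of $\pi$. Your handling of the only delicate point (the measurable selection, with the $\epsilon$-optimal fallback avoiding existence of minimizers) is sound, and your closing remark that one could instead cite Theorem~4.8 verbatim is precisely what the paper does. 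What the paper's route buys is brevity and delegation of the measurability issues to Villani; what your route buys is transparency about where the inequality comes from (a mixture of couplings is itself a coupling) and independence from the cited reference. Both are acceptable proofs of the lemma.
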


\begin{proof}
    To apply Theorem $4.8$ in \citet{villani2009optimal}, we just need to notice that $d^p(\cdot,\cdot)$ is continuous, $d^p(\cdot,\cdot)\geq 0$ and the associated optimal cost functional is $\gW_p^p(\cdot,\cdot)$.
\end{proof}

\subsection{Validity of the OT viewpoint: Birkhoff's theorem}
\label{validity_OT_viewpoint}
We have motivated in \Cref{section:preliminaries} the following minimization problem (\Cref{eq:P_l}):

\begin{equation*}
    \begin{split}
            \Pi_\ell&=\argmin_{\Pi\in \gS_{m_\ell}} \|W_A^\ell-\Pi W_B^\ell \Pi_{\ell -1}^T\|_2^2\\
            &=\argmin_{\pi \in \gS_{m_\ell}} \frac{1}{m_\ell}\sum_{i=1}^{m_\ell}\|[W_{A}^\ell]_{i:}-[W_B^\ell \Pi^T_{\ell -1}]_{\pi_i:}\|_2^2
    \end{split}
\end{equation*}

Since LMC effectiveness will be related to the effectiveness of this optimization problem, we want to quantify the minimization error:

\begin{equation*}
\min_{\Pi\in \gS_{m_\ell}} \|W_A^\ell-\Pi W_B^\ell \Pi_{\ell -1}^T\|_2^2
\end{equation*}

We highlight the similarity with previous \Cref{def:wass}. The main difference being that the latter minimizes the cost among all couplings $\pi \in \Pi(\mu,\nu)$ between the two distributions, especially transport plans that can split mass. Permutation must be on the other hand seen as Monge maps, i.e. deterministic maps. This ambiguity is solved with the following lemma:

\begin{lemma}[Proposition 2.1 in \citet{peyre2019computational}]
    \label{lemma:birkhoff}
    Let $m,n$ integers and $x_1, ..., x_m, y_1, ..., y_m$, $2m$ points of $\sR^n$. Let $\hat{\mu}_m=\frac{1}{m}\sum_{i=1}^m\delta_{x_i}, \hat{\nu}_m=\frac{1}{m}\sum_{i=1}^m\delta_{y_i}$ their associated empirical measures. Consider $d: \sR^n \times \sR^n \rightarrow \sR_+$ a distance function and for $p\in [1,\infty), \gW_p$ its associated Wasserstein distance.
    Then one has:
    \begin{equation*}
        \gW_p:=\inf_{\pi \in \Pi(\hat{\mu}_m, \hat{\nu}_m)}\left(\int_{\sR^n\times \sR^n} d(x,y)^p d\pi(x,y)\right)^{\frac{1}{p}}=\min_{\pi \in \gS_m}\left( \frac{1}{m}\sum_{i=1}^m d(x_i,y_{\sigma_i})^p\right)^{\frac{1}{p}}
    \end{equation*}
\end{lemma}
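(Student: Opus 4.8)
The plan is to reduce the infimum over all couplings to a finite-dimensional linear program over the Birkhoff polytope and then invoke the Birkhoff--von Neumann theorem.

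First I would dispatch the inequality ``$\leq$''. Given any permutation $\sigma \in \gS_m$, the measure $\pi_\sigma \eqdef \frac{1}{m}\sum_{i=1}^m \delta_{(x_i, y_{\sigma_i})}$ belongs to $\Pi(\hat{\mu}_m, \hat{\nu}_m)$: its first marginal is $\frac{1}{m}\sum_i \delta_{x_i} = \hat{\mu}_m$ and, since $\sigma$ is a bijection, its second marginal is $\frac{1}{m}\sum_i \delta_{y_{\sigma_i}} = \hat{\nu}_m$. Hence $\gW_p^p \leq \int d^p \, d\pi_\sigma = \frac{1}{m}\sum_i d(x_i, y_{\sigma_i})^p$ for every $\sigma$, and minimizing over $\sigma$ gives $\gW_p^p \leq \min_{\sigma \in \gS_m} \frac{1}{m}\sum_i d(x_i, y_{\sigma_i})^p$.

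For the reverse inequality I would argue as follows. Any coupling $\pi \in \Pi(\hat{\mu}_m, \hat{\nu}_m)$ is concentrated on the product of the supports of its marginals, since $\pi\big((\sR^n\setminus\{x_1,\dots,x_m\})\times\sR^n\big) = \hat\mu_m(\sR^n\setminus\{x_1,\dots,x_m\}) = 0$ and symmetrically on the second factor. Thus $\pi$ is encoded by a nonnegative matrix $P=(P_{ij})_{1\le i,j\le m}$ with $\sum_j P_{ij} = 1/m$ and $\sum_i P_{ij} = 1/m$, i.e.\ $P = \frac1m Q$ with $Q$ doubly stochastic; conversely every such matrix yields a valid coupling. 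Writing $C_{ij}\eqdef d(x_i,y_j)^p$ and $\langle Q,C\rangle \eqdef \sum_{i,j}Q_{ij}C_{ij}$, the transport cost is the linear functional $\int d^p\, d\pi = \sum_{i,j}P_{ij}C_{ij} = \frac1m\langle Q, C\rangle$, so $\gW_p^p = \frac1m \min_{Q\in\gB_m}\langle Q,C\rangle$ where $\gB_m$ is the compact convex Birkhoff polytope of $m\times m$ doubly stochastic matrices. A linear functional on a nonempty compact convex polytope attains its minimum at an extreme point, and by the Birkhoff--von Neumann theorem the extreme points of $\gB_m$ are precisely the permutation matrices $\{\Pi_\sigma : \sigma\in\gS_m\}$. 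Therefore $\gW_p^p = \frac1m\min_{\sigma\in\gS_m}\langle\Pi_\sigma,C\rangle = \frac1m\min_{\sigma\in\gS_m}\sum_i d(x_i,y_{\sigma_i})^p$, and combining with the first inequality and taking $p$-th roots yields the stated equality.

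The argument is essentially mechanical; the only point needing a little care is the case where some of the $x_i$ (or $y_j$) coincide, in which case the matrix $P$ associated to a coupling is not unique, but one satisfying the row/column constraints always exists, and the correspondence still preserves the cost, so nothing changes. I expect the main (mild) obstacle to be writing the reduction to the finite linear program cleanly --- justifying that couplings of discrete measures correspond exactly to scaled doubly stochastic matrices with matching cost --- together with a precise citation of the Birkhoff--von Neumann theorem and of the fact that a linear program attains its optimum at a vertex; none of this is deep.
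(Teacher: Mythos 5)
Your proof is correct and follows essentially the same route as the paper: identify couplings of the two empirical measures with (rescaled) doubly stochastic matrices, note the cost is a linear functional, and invoke Birkhoff's theorem so the infimum is attained at a permutation matrix. Your write-up is simply a more detailed version of the paper's argument, with the easy ``$\leq$'' direction and the handling of repeated points made explicit.
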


\begin{proof}
$\Pi(\hat{\mu}_m, \hat{\nu}_m)$ is the convex envelope of its extremal points which are described by permutations by Birkhoff's theorem. Moreover,
\begin{equation*}
    \pi \in \gP_1(\sR^n\times \sR^n) \rightarrow \int d(x,y)^p d\pi(x,y)
\end{equation*}
is linear and therefore its infimum is attained on one extremal point of $\Pi(\hat{\mu}_m, \hat{\nu}_m)$
\end{proof}

This lemma implies equality between the Wasserstein distance between two empirical measures and the minimum cost over the set of permutations. We can therefore restrict our study to permutations while still using tools from general optimal transport theory and convergence rates of empirical measures in Wasserstein distance.

\subsection{Technical lemmas}

The following lemma will be very useful in the following and shows that Binomial variables are concentrated around their expectation.

\begin{lemma}[Hoeffding inequality for Binomial variables]
\label{lemma:binomial}
    Let $\gB_{\frac{p}{2},m}\sim \gB(\frac{p}{2},m)$ a binomial variable with $p\in [0,1]$. Then,
    \begin{equation*}
        \sP(\frac{\gB_{\frac{p}{2},m}}{m}\geq p)\leq \exp(-\frac{p^2m}{2})
    \end{equation*}
\end{lemma}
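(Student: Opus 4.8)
The statement to prove is Lemma~\ref{lemma:binomial}: for $\mathcal{B}_{p/2,m}\sim\mathcal{B}(p/2,m)$ one has $\mathbb{P}(\mathcal{B}_{p/2,m}/m\geq p)\leq \exp(-p^2m/2)$.

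\textbf{Plan of proof.} The plan is to apply the standard one-sided Hoeffding bound for sums of bounded independent random variables to the Binomial variable written as a sum of i.i.d. Bernoulli$(p/2)$ indicators. Write $\mathcal{B}_{p/2,m}=\sum_{i=1}^m X_i$ with $X_i\in\{0,1\}$ independent and $\mathbb{E}[X_i]=p/2$. Then the event $\{\mathcal{B}_{p/2,m}/m\geq p\}$ is exactly $\{\sum_i X_i\geq mp\}$, i.e. $\{\sum_i (X_i-\mathbb{E}[X_i])\geq m(p-p/2)\}=\{\sum_i(X_i-p/2)\geq mp/2\}$. So we need a deviation of $t:=mp/2$ above the mean for a sum of $m$ variables each supported in an interval of length $1$.

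\textbf{Key steps in order.} First I would recall Hoeffding's inequality in the form: if $Y_1,\dots,Y_m$ are independent with $Y_i\in[a_i,b_i]$, then $\mathbb{P}(\sum_i(Y_i-\mathbb{E}Y_i)\geq t)\leq \exp\!\big(-2t^2/\sum_i(b_i-a_i)^2\big)$. Here $b_i-a_i=1$ for all $i$, so $\sum_i(b_i-a_i)^2=m$. Second, plug in $t=mp/2$: the bound becomes $\exp(-2(mp/2)^2/m)=\exp(-2\cdot m^2p^2/4\,/\,m)=\exp(-mp^2/2)$, which is exactly the claimed bound. Third, I would note that $p\in[0,1]$ guarantees $p/2\in[0,1]$ so that the Bernoulli$(p/2)$ distribution is well-defined, and that $t=mp/2\geq 0$ so the one-sided Hoeffding bound applies in the stated direction.

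\textbf{Main obstacle.} There is essentially no obstacle — this is a direct citation of Hoeffding's inequality with the parameters matched. The only thing to be careful about is the bookkeeping: making sure the deviation one feeds into Hoeffding is $p/2$ above the mean (not $p$), since the mean of the normalized Binomial is $p/2$ and the threshold is $p$; this factor of $2$ in the deviation is precisely what produces the $\exp(-p^2m/2)$ rate rather than a weaker constant. I would state the proof in two or three lines: express $\mathcal{B}_{p/2,m}$ as a sum of i.i.d. Bernoulli$(p/2)$, apply Hoeffding's one-sided inequality with deviation $mp/2$ and range $1$ per summand, and simplify the exponent.

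\begin{proof}
Write $\mathcal{B}_{p/2,m}=\sum_{i=1}^m X_i$ where $X_1,\dots,X_m$ are i.i.d.\ Bernoulli random variables with mean $p/2$ (this is well-defined since $p\in[0,1]$), so each $X_i$ takes values in $[0,1]$. The event $\{\mathcal{B}_{p/2,m}/m\geq p\}$ coincides with $\{\sum_{i=1}^m (X_i-p/2)\geq mp/2\}$. By the one-sided Hoeffding inequality applied to the independent variables $X_i\in[0,1]$ with deviation $t=mp/2\geq 0$,
\begin{equation*}
    \mathbb{P}\!\left(\sum_{i=1}^m (X_i-p/2)\geq \frac{mp}{2}\right)\leq \exp\!\left(-\frac{2\,(mp/2)^2}{m}\right)=\exp\!\left(-\frac{p^2 m}{2}\right).
\end{equation*}
This is exactly the claimed bound.
\end{proof}
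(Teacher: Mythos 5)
Your proof is correct and follows the same route as the paper, which simply cites Hoeffding's inequality after noting that a Binomial variable is a sum of independent Bernoulli variables; you merely spell out the parameter bookkeeping (deviation $mp/2$, range $1$ per summand) that the paper leaves implicit. No issues.
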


Proof: This is a simple application of Hoeffding concentration inequality \citep{phillips2012chernoff} since a Binomial variable is a sum of independent Bernoulli variables.

\begin{lemma}
\label{cut_transport_plan}
    Let $a,b> 0$ such that $a+b=1$ and let $\mu_1,\mu_2,\nu_1,\nu_2\in \gP_1(\mathcal{X})$ with $(\gX,d)$ a Polish space.
Then, $\forall p\in [1,\infty)$:

\begin{equation}
    \gW_p^p(a\mu_1+b\mu_2,a\nu_1+b\nu_2)\leq a\gW_p^p(\mu_1,\nu_1)+b\gW_p^p(\mu_2,\nu_2)
\end{equation}
\end{lemma}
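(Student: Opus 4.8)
The plan is to prove Lemma~\ref{cut_transport_plan} by gluing together optimal transport plans for the two pairs $(\mu_1,\nu_1)$ and $(\mu_2,\nu_2)$. First I would invoke the definition of the $p$-Wasserstein distance (\Cref{def:wass}): for $i\in\{1,2\}$ let $\pi_i\in\Pi(\mu_i,\nu_i)$ be an optimal coupling, so that $\int_{\gX^2} d^p(x,y)\,d\pi_i(x,y)=\gW_p^p(\mu_i,\nu_i)$ (existence of optimal plans on a Polish space is standard, or alternatively one can work with near-optimal plans and let the slack go to $0$). Then I would form the convex combination $\pi\eqdef a\pi_1+b\pi_2$.

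The key step is to check that $\pi$ is an admissible coupling between $a\mu_1+b\mu_2$ and $a\nu_1+b\nu_2$: its first marginal is $a(\text{proj}_1)_\#\pi_1+b(\text{proj}_1)_\#\pi_2=a\mu_1+b\mu_2$, and similarly for the second marginal, using that push-forward is linear. Hence $\pi\in\Pi(a\mu_1+b\mu_2,\,a\nu_1+b\nu_2)$. Therefore, by definition of the Wasserstein distance as an infimum over couplings,
\begin{equation*}
\gW_p^p(a\mu_1+b\mu_2,\,a\nu_1+b\nu_2)\leq \int_{\gX^2} d^p(x,y)\,d\pi(x,y)= a\int_{\gX^2} d^p\,d\pi_1+b\int_{\gX^2} d^p\,d\pi_2 = a\gW_p^p(\mu_1,\nu_1)+b\gW_p^p(\mu_2,\nu_2),
\end{equation*}
which is exactly the claimed inequality.

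Alternatively, this is an immediate special case of \Cref{lemma:villani} (convexity of the optimal cost, Theorem~4.8 in \citet{villani2009optimal}): take the probability space $(\Theta,\lambda)$ to be the two-point space $\Theta=\{1,2\}$ with $\lambda(\{1\})=a$, $\lambda(\{2\})=b$, and set $\mu_\theta,\nu_\theta$ accordingly; then $\int_\Theta\mu_\theta\,\lambda(d\theta)=a\mu_1+b\mu_2$, $\int_\Theta\nu_\theta\,\lambda(d\theta)=a\nu_1+b\nu_2$, and the conclusion of that lemma reads precisely $\gW_p^p(a\mu_1+b\mu_2,a\nu_1+b\nu_2)\leq a\gW_p^p(\mu_1,\nu_1)+b\gW_p^p(\mu_2,\nu_2)$. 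There is essentially no obstacle here; the only point requiring a word of care is the marginal computation for the glued plan (or, in the second approach, verifying the measurability hypotheses of \Cref{lemma:villani}, which are trivial for a two-point index set). I would present the direct gluing argument as the main proof since it is self-contained and elementary, and mention the reduction to \Cref{lemma:villani} as a remark.
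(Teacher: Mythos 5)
Your proof is correct, and it actually contains the paper's argument verbatim as your "alternative": the paper proves this lemma in one line by applying \Cref{lemma:villani} with the two-point mixing measure $\sP\left((\mu_\Theta,\nu_\Theta)=(\mu_1,\nu_1)\right)=a$, $\sP\left((\mu_\Theta,\nu_\Theta)=(\mu_2,\nu_2)\right)=b$, exactly as in your remark. Your main route --- gluing optimal (or near-optimal) plans $\pi_1,\pi_2$ into $\pi=a\pi_1+b\pi_2$, checking by linearity of the push-forward that $\pi\in\Pi(a\mu_1+b\mu_2,\,a\nu_1+b\nu_2)$, and bounding the infimum by the cost of $\pi$ --- is a genuinely more elementary and self-contained argument: it only uses the definition of $\gW_p$ and avoids invoking the (much more general) convexity theorem from \citet{villani2009optimal}, at the small price of having to justify existence of optimal couplings on a Polish space or to run an $\epsilon$-slack argument. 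The paper's reduction buys brevity and consistency with \Cref{lemma:villani}, which it has already stated for other purposes; your direct proof buys transparency. Either version is a complete and valid proof of the statement.
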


\begin{proof}
    Just apply \Cref{lemma:villani} with $\mu_\Theta,\nu_\Theta$ such that $\sP\left((\mu_\Theta,\nu_\Theta)=(\mu_1,\nu_1)\right)=a$ and $\sP\left((\mu_\Theta,\nu_\Theta)=(\mu_2,\nu_2)\right)=b$.
\end{proof}

\begin{lemma}[Hölder, Remark $6.6$ in \citet{villani2009optimal}]
    \label{Hölder} Let $(\mathcal{X},d)$ a Polish space, let $p, q \in [1,\infty]$ such that $p\leq q$:
    \begin{equation}
        \forall \mu,\nu\in \gP_1(\mathcal{X}),\, \gW_p\leq \gW_q
    \end{equation}
\end{lemma}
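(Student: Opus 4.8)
\textbf{Proof plan for Lemma \ref{Hölder} ($\gW_p \leq \gW_q$ for $p \leq q$).}
The statement is the classical monotonicity of $p$-Wasserstein distances in the exponent, valid on any Polish metric space. The plan is to fix $\mu,\nu \in \gP_1(\mathcal{X})$ and an arbitrary coupling $\pi \in \Pi(\mu,\nu)$, and compare the two integral functionals $\left(\int d^p \, d\pi\right)^{1/p}$ and $\left(\int d^q \, d\pi\right)^{1/q}$ against this same $\pi$. Since $\pi$ is a probability measure on $\mathcal{X}^2$, the function $x \mapsto x^{q/p}$ is convex on $[0,\infty)$ (because $q/p \geq 1$), so Jensen's inequality applied to the nonnegative random variable $d(x,y)^p$ under $\pi$ gives
\begin{equation*}
    \left(\int_{\mathcal{X}^2} d(x,y)^p \, d\pi(x,y)\right)^{q/p} \leq \int_{\mathcal{X}^2} \left(d(x,y)^p\right)^{q/p} d\pi(x,y) = \int_{\mathcal{X}^2} d(x,y)^q \, d\pi(x,y).
\end{equation*}
Raising both sides to the power $1/q$ (a monotone operation on $[0,\infty)$) yields $\left(\int d^p\, d\pi\right)^{1/p} \leq \left(\int d^q \, d\pi\right)^{1/q}$.

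The second and final step is to pass to the infimum over couplings. Taking the infimum over $\pi \in \Pi(\mu,\nu)$ on the right-hand side and noting that the left-hand side is bounded above by the right for \emph{every} $\pi$, we get $\left(\int d^p \, d\pi\right)^{1/p} \leq \gW_q(\mu,\nu)$ for all $\pi$; taking the infimum over $\pi$ on the left then gives $\gW_p(\mu,\nu) \leq \gW_q(\mu,\nu)$. One small caveat worth a sentence: to make sure the quantities are well-defined and finite we should note that $\mu,\nu \in \gP_1(\mathcal{X})$ together with finiteness of $\gW_q$ is what is implicitly assumed (if $\gW_q = +\infty$ the inequality is vacuous), and that $\Pi(\mu,\nu)$ is nonempty since the product coupling always belongs to it.

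There is essentially no obstacle here: the only thing to be careful about is the direction of the exponent substitution (it is $x \mapsto x^{q/p}$ with $q/p\geq 1$ that is convex, not $x\mapsto x^{p/q}$) and the fact that Jensen requires a probability measure, which is exactly why we work with a fixed coupling $\pi$ rather than trying to manipulate the Wasserstein distances directly. Alternatively one could invoke Hölder's inequality on $(\mathcal{X}^2,\pi)$ with exponents $q/p$ and its conjugate applied to $d^p \cdot 1$, which is the route hinted at by the lemma's name; this gives $\int d^p \, d\pi \leq \left(\int d^q \, d\pi\right)^{p/q}$ directly, and is entirely equivalent to the Jensen argument above. Either way the proof is two lines plus the infimum passage.
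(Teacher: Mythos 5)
Your proposal is correct and follows essentially the same route as the paper, which simply states ``apply H\"older's inequality'' (citing Remark 6.6 of \citet{villani2009optimal}); you have merely filled in the standard details by applying Jensen/H\"older to $d^p$ under a fixed coupling $\pi$ and then passing to the infimum over $\Pi(\mu,\nu)$. No gap.
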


\begin{proof}
    Just apply Hölder's inequality.
\end{proof}

\subsection{Definitions and technical lemma on packing numbers}

Let $n \in \sN^*$, $S \subset \sR^n$ and $d(\cdot,\cdot)$ a distance of $\sR^n$. We recall the definitions of packing numbers and covering numbers as well as two lemmas stated and proved in  \citet{classpacking}:

\begin{definition}[$\epsilon$-covering number \citep{weed2019sharp}]
    The $\epsilon$-covering of a set $S$ denoted $\gN_\epsilon(S)$ is the minimum number $m$ of closed balls $B_1,\ldots,B_m$ of radius $\epsilon$ such that $S\subset \bigcup_{i=1}^mB_i$.
\end{definition}

\begin{definition}[$\epsilon$-packing number]
    The $\epsilon$-packing number of a set $S$ denoted $\gP_\epsilon(S)$ is the maximum number $m$ of distinct points $\theta_1,...,\theta_m\in S$ such that $\forall i \neq j$, $\|\theta_i-\theta_j\|> \epsilon$.
\end{definition}

\begin{lemma}
\label{lemma:packing}
    If the distance $d(\cdot,\cdot)$ comes from a norm, ($d(x,y)=\|x-y\|)$, denoting $\Leb_n$ the Lebesgue measure in $\sR^n$ we have: $\forall S\subset \sR^n, \forall \epsilon > 0$,
    \begin{equation*}
        \gN_\epsilon(S)\leq \frac{\Leb_n(S+\gB(0,\epsilon/2))}{\Leb_n(\gB(0,\epsilon/2))}
    \end{equation*}
\end{lemma}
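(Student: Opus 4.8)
The plan is to route through the packing number and then run the classical volume--packing estimate. First I would reduce to proving $\gN_\epsilon(S)\le \gP_\epsilon(S)$ together with the volumetric bound $\gP_\epsilon(S)\le \Leb_n(S+\gB(0,\epsilon/2))/\Leb_n(\gB(0,\epsilon/2))$. For the first inequality I would take a family $\theta_1,\dots,\theta_m\in S$ realizing an $\epsilon$-packing that is maximal for inclusion (so $m\le \gP_\epsilon(S)$, with equality achievable when $\gP_\epsilon(S)$ is finite): by maximality every $x\in S$ must satisfy $\|x-\theta_i\|\le \epsilon$ for some $i$, since otherwise $\theta_1,\dots,\theta_m,x$ would remain $\epsilon$-separated; hence the $m$ closed balls $\gB(\theta_1,\epsilon),\dots,\gB(\theta_m,\epsilon)$ cover $S$, so $\gN_\epsilon(S)\le m$.

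For the volumetric bound, fix the points $\theta_1,\dots,\theta_m$ of that packing and look at the closed balls $\gB(\theta_i,\epsilon/2)$. They are pairwise disjoint, because $y\in \gB(\theta_i,\epsilon/2)\cap \gB(\theta_j,\epsilon/2)$ with $i\neq j$ would force $\|\theta_i-\theta_j\|\le \|\theta_i-y\|+\|y-\theta_j\|\le \epsilon$, contradicting $\|\theta_i-\theta_j\|>\epsilon$ (this is where $d$ being a genuine norm-distance, hence satisfying the triangle inequality, is used). Moreover each $\gB(\theta_i,\epsilon/2)\subset S+\gB(0,\epsilon/2)$ since $\theta_i\in S$. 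Using translation invariance of $\Leb_n$, so that $\Leb_n(\gB(\theta_i,\epsilon/2))=\Leb_n(\gB(0,\epsilon/2))$ for every $i$, together with disjointness, I would conclude
\begin{equation*}
m\,\Leb_n\big(\gB(0,\epsilon/2)\big)=\sum_{i=1}^m \Leb_n\big(\gB(\theta_i,\epsilon/2)\big)=\Leb_n\Big(\bigcup_{i=1}^m \gB(\theta_i,\epsilon/2)\Big)\le \Leb_n\big(S+\gB(0,\epsilon/2)\big),
\end{equation*}
and since $0<\Leb_n(\gB(0,\epsilon/2))<\infty$ this gives $m\le \Leb_n(S+\gB(0,\epsilon/2))/\Leb_n(\gB(0,\epsilon/2))$. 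Chaining with $\gN_\epsilon(S)\le m$ finishes the argument.

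There is no real obstacle here; the only points needing a word of care are the degenerate cases. If $S=\emptyset$ both sides are $0$. If $\gP_\epsilon(S)=\infty$ (for instance $S$ unbounded), then there exist arbitrarily large $\epsilon$-separated families, and the disjoint-balls computation above applied to them forces $\Leb_n(S+\gB(0,\epsilon/2))=\infty$, so the claimed inequality is trivially true. Finally, measurability of $S+\gB(0,\epsilon/2)$ for an arbitrary set $S$ is a non-issue: one simply reads $\Leb_n$ on the right-hand side as outer Lebesgue measure, which is all that the inclusion $\bigcup_i \gB(\theta_i,\epsilon/2)\subseteq S+\gB(0,\epsilon/2)$ needs.
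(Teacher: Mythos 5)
Your proof is correct and follows essentially the same route as the paper: bound $\gN_\epsilon(S)$ by $\gP_\epsilon(S)$ using maximality of a packing, then bound the packing number by the classical volume argument with disjoint balls of radius $\epsilon/2$ contained in $S+\gB(0,\epsilon/2)$. Your treatment of the degenerate cases (empty $S$, infinite packing number, outer measure) is a bit more careful than the paper's, but the argument is the same.
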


\begin{proof}
    We prove first $\gN_\epsilon(S)\leq \gP_{\epsilon}(S)$. Indeed considering $m=\gP_\epsilon(S)$ and $\theta_1,\ldots,\theta_m$ associated, we know by definition of the $\epsilon$-packing number that $\forall \theta \in S,\exists i \in [m]$ such that $\|\theta_i-\theta\|\leq \epsilon$. This shows that $S \subset \bigcup_{i=1}^m\gB(\theta_i,\epsilon)$

    Now, on the other hand, we know that all balls $\gB(\theta_i,\frac{\epsilon}{2})$ are disjoint and $\bigcup_{i=1}^m\gB(\theta_i,\frac{\epsilon}{2})\subset S+\gB(0,\epsilon/2)$. This yields the result by a volume (i.e., Lebesgue measure) argument.
\end{proof}

\begin{lemma}
    \label{lemma:packing2}
     If the distance $d(\cdot,\cdot)$ comes from a norm, ($d(x,y)=\|x-y\|)$ we have: $\forall S\subset \sR^n, \forall \epsilon > 0$,
    \begin{equation*}
        \gN_\epsilon(S)\geq \left(\frac{1}{\epsilon}\right)^n\frac{\Leb_n(S)}{\Leb_n(\gB(0,1))}
    \end{equation*}
\end{lemma}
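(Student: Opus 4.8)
The statement is a standard volume-counting lower bound, dual to the upper bound established in \Cref{lemma:packing}, and the argument is short. The plan is to start from an arbitrary $\epsilon$-covering of $S$ and bound the Lebesgue measure of $S$ from above by the total measure of the covering balls, then solve for the number of balls.

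Concretely, let $m = \gN_\epsilon(S)$ and let $B_1,\ldots,B_m$ be closed balls of radius $\epsilon$ with $S \subset \bigcup_{i=1}^m B_i$. First I would use monotonicity and countable subadditivity of $\Leb_n$ to write
\begin{equation*}
    \Leb_n(S) \leq \Leb_n\!\left(\bigcup_{i=1}^m B_i\right) \leq \sum_{i=1}^m \Leb_n(B_i).
\end{equation*}
Next, since the distance comes from a norm, each $B_i$ is a translate of $\gB(0,\epsilon)$, which is the image of $\gB(0,1)$ under the dilation by $\epsilon$; translation invariance of $\Leb_n$ together with the scaling identity $\Leb_n(\gB(0,\epsilon)) = \epsilon^n \Leb_n(\gB(0,1))$ gives $\Leb_n(B_i) = \epsilon^n \Leb_n(\gB(0,1))$ for every $i$. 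Combining these two facts yields $\Leb_n(S) \leq m\,\epsilon^n \Leb_n(\gB(0,1))$, hence $m \geq \left(\tfrac{1}{\epsilon}\right)^n \tfrac{\Leb_n(S)}{\Leb_n(\gB(0,1))}$. Since this holds for the minimal covering, it holds for $\gN_\epsilon(S)$, which is the claim.

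There is no real obstacle here: the only points needing care are that $\Leb_n(\gB(0,1))$ is finite and positive (so division is legitimate) and that the dilation scaling of Lebesgue measure by a factor $\epsilon^n$ is being invoked correctly; if $\Leb_n(S)=\infty$ the inequality is vacuously understood as forcing $\gN_\epsilon(S)=\infty$, consistent with the statement. This completes the plan.
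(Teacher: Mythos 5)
Your argument is correct and is exactly the paper's proof: cover $S$ by the $\gN_\epsilon(S)$ balls of radius $\epsilon$, bound $\Leb_n(S)$ by subadditivity and the scaling identity $\Leb_n(\gB(0,\epsilon))=\epsilon^n\Leb_n(\gB(0,1))$, and solve for the number of balls. You simply spell out the volume/homogeneity argument the paper states in one line, so there is nothing further to add.
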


\begin{proof}
    Notice that given a covering $\bigcup_{i=1}^{\gN_\epsilon(S)}\gB(x_i,\epsilon)\supset S$, by a volume argument we get 
    \begin{equation*}
        \Leb_n(S)\leq \epsilon^n\gN_\epsilon(S)\Leb_n(\gB(0,1))
    \end{equation*}
\end{proof}

where we have used homogeneity of the norm.

\subsection{Lemmas on convergence rates of empirical measures}

This section is devoted to proving convergence rates in Wasserstein distance of empirical measure towards the underlying distribution whose points are drawn. More precisely, given $\mu\in \gP_1(\sR^n)$ a probability measure on an euclidean space and $p\in [1,\infty)$, we will focus on bounding the quantity $\E\left[\gW_p^p(\hat{\mu}_m,\mu)\right]$ as $m$ grows, where $\hat{\mu}_m$ is a random empirical measure i.e., $\hat{\mu}_m=\frac{1}{m}\sum_{i=1}^m\delta_{x_i}$ where $x_i \ssim \mu$.

We will first prove the following lemma:

\begin{lemma}
\label{exp_ext}
    Consider $X_n\sim \gN\left(0,\tfrac{I_n}{n}\right)$ a random variable whose law is parametrized by $n\in \sN^*$. There exists a universal constant $D_0$ such that 
    \begin{equation*}
        \forall n \in \sN^*, \forall c > 1, \E[\|X_n\|_2^2\;|\;\|X_n\|_2> c]\leq D_0c^2
    \end{equation*}
    Jensen inequality implies:
    \begin{equation*}
        \forall n \in \sN^*, \forall c > 1, \E[\|X_n\|_2;|\;\|X_n\|_2> c]\leq \sqrt{D_0}c
    \end{equation*}
\end{lemma}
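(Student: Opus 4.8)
The statement is a concentration/integrability fact about the squared norm of a Gaussian vector $X_n \sim \gN(0, I_n/n)$, conditioned on being large. Write $R_n^2 := \|X_n\|_2^2$; then $n R_n^2 \sim \chi^2_n$, so $n R_n^2$ is a sum of $n$ i.i.d.\ $\chi^2_1$ variables and has mean $n$, i.e.\ $\E[R_n^2] = 1$. The quantity to control is
\begin{equation*}
    \E[R_n^2 \mid R_n > c] = \frac{\E[R_n^2 \,\mathds{1}_{R_n > c}]}{\sP(R_n > c)}.
\end{equation*}
The plan is to bound the numerator from above and the denominator from below, uniformly in $n$, and show the ratio is $\le D_0 c^2$ for all $c > 1$.

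\textbf{Step 1: split into two regimes.} The behavior of the conditioning is very different depending on whether $c$ is below or above a multiple of the "typical scale" of $R_n$, which is $\approx 1$ (since $\E[R_n^2] = 1$ and $R_n$ concentrates around $1$ for large $n$; for small $n$ it has a spread-out density but still order-$1$ scale). So I would fix a threshold, say compare $c$ with $2$ (any constant $> 1$ works). \emph{Case A: $1 < c \le 2$.} Here the claim is easy: $\E[R_n^2 \mathds{1}_{R_n > c}] \le \E[R_n^2] = 1 \le c^2$, and $\sP(R_n > c) = \sP(R_n > 2)$ is bounded below by a \emph{universal} positive constant independent of $n$ — this needs a short argument (see Step 3). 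Then the ratio is at most $1/\sP(R_n > 2) =: D_0'$, and since $c^2 \ge 1$, this is $\le D_0' c^2$. \emph{Case B: $c > 2$.} Here the tail is genuinely small and I need the Gaussian/Gamma tail estimates.

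\textbf{Step 2: the large-$c$ regime via Gamma tail bounds.} For $c > 2$, use the Laplace-transform (Chernoff) bound for $\chi^2_n$: for the upper tail of a $\chi^2_n$ variable $Y$, $\sP(Y \ge n t) \le (t e^{1-t})^{n/2}$ for $t \ge 1$. With $Y = nR_n^2$ and $t = c^2$ this gives $\sP(R_n > c) \le (c^2 e^{1 - c^2})^{n/2}$. For the numerator I would write $\E[R_n^2 \mathds{1}_{R_n>c}] = \int_{c^2}^\infty \sP(R_n^2 > s)\,ds + c^2 \sP(R_n > c)$ (integration by parts / layer-cake), bound $\sP(R_n^2 > s) \le (s e^{1-s})^{n/2}$ for $s \ge c^2 > 1$, and estimate the resulting integral. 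Because the integrand decays at least like $e^{-(s-1)/2}$ (the $n=1$ case is the slowest; for $n \ge 1$ the bound $(se^{1-s})^{n/2} \le s^{n/2} e^{-n(s-1)/2}$... actually one must be a touch careful — the cleanest uniform-in-$n$ route is to note $(se^{1-s})^{1/2} \le e^{-(s-1)/2 + \frac12\log s}$ and for $s \ge 4$, $\frac12\log s \le \frac{s-1}{4}$, so each factor is $\le e^{-(s-1)/4}$, hence $(se^{1-s})^{n/2} \le e^{-n(s-1)/4} \le e^{-(s-1)/4}$). This yields $\E[R_n^2\mathds{1}_{R_n>c}] \le C(e^{-(c^2-1)/4} \cdot c^2 \cdot \text{stuff})$ while $\sP(R_n>c) \ge$ ... \emph{here is the catch} (Step 3).

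\textbf{Step 3 — the main obstacle: lower-bounding $\sP(R_n > c)$ uniformly in $n$.} The ratio blows up unless the denominator is not too small. For $c$ close to $1$ this is fine for every fixed $n$, but I need it \emph{uniform in $n$}, and for large $n$, $R_n \to 1$ so $\sP(R_n > c) \to 0$ for any $c > 1$ — at first sight fatal. The resolution: for large $n$ and $c > 1$ the numerator is \emph{also} exponentially small, and the two exponential rates must be compared. The standard tool is the \emph{matching lower bound} on the $\chi^2_n$ upper tail (e.g.\ via a change-of-measure / tilting argument, or the Bahadur–Rao / Cramér lower bound, or explicit Gamma-density estimates): for $t > 1$, $\sP(R_n^2 \ge nt) \ge \frac{c'}{\sqrt{n}} (t e^{1-t})^{n/2}$ for a universal $c'$ (valid for $t$ in a bounded range, say $1 < t \le 16$, which covers $c \le 4$; for $c > 4$ one refines). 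Dividing, the $(t e^{1-t})^{n/2}$ factors cancel and one is left with a ratio that is polynomial in $n$ and in $c$ times $c^2$, and the polynomial-in-$n$ factors ($\sqrt n$ from the lower bound, and powers of $n$ from integrating the numerator) must cancel or be absorbed. I expect this bookkeeping — getting the $n$-dependence to cancel cleanly and the final bound to come out proportional to $c^2$ rather than, say, $c^2 \log n$ or $c^4$ — to be the real work. A cleaner alternative worth trying first: prove the \emph{equivalent} statement that $\E[(R_n^2 - c^2)_+] \le (D_0 - 1) c^2 \sP(R_n > c)$, or directly estimate $\E[R_n^2 \mid R_n^2 > c^2]$ using the Gamma density $f(s) \propto s^{n/2 - 1} e^{-ns/2}$: conditioning a Gamma on $\{s > c^2\}$ with $c^2 > 1 > $ mode, the conditional law is dominated (in the sense of integrating $s$) by shifting, and $\E[s \mid s > c^2] \le c^2 + \frac{2}{n}\cdot(\text{correction}) \le c^2 + \text{const}$, which for $c > 1$ is $\le (1 + \text{const})c^2$. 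This density-level argument likely avoids the delicate two-sided tail matching entirely and is the route I would actually pursue.

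Finally, once $\E[R_n^2 \mid R_n > c] \le D_0 c^2$ is established, the Jensen consequence $\E[R_n \mid R_n > c] \le \sqrt{D_0}\, c$ follows immediately from conditional Jensen applied to the concave function $\sqrt{\cdot}$: $\E[R_n \mid R_n > c] = \E[\sqrt{R_n^2} \mid R_n > c] \le \sqrt{\E[R_n^2 \mid R_n > c]} \le \sqrt{D_0}\,c$.
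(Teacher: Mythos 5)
Your proposal does not yet prove the lemma: the one concrete step you write out is false, and the step that would carry the whole weight is only announced. In Case A you assert that $\sP(R_n>2)$ is bounded below by a universal positive constant independent of $n$; it is not, since $nR_n^2\sim\chi^2_n$ and the Chernoff bound gives $\sP(R_n>2)=\sP(\chi^2_n\ge 4n)\le (4e^{-3})^{n/2}$, which decays exponentially in $n$. You notice precisely this tension in Step 3, but the proposed resolutions remain sketches: the Cram\'er/Bahadur--Rao matching-lower-bound route is explicitly deferred as the real bookkeeping work, and the Gamma-density alternative you say you would actually pursue rests on the unproven assertion that $\E[S\mid S>c^2]\le c^2+\mathrm{const}$ uniformly in $n$ (with $S=R_n^2$), which is essentially the lemma itself. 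That step is not routine: writing $f(s)\propto s^{n/2-1}e^{-ns/2}$, the easy estimates $f(c^2+u)\le f(c^2)e^{-u/c^2}$ for the numerator and $f(c^2+u)\ge f(c^2)e^{-nu/2}$ for the tail leave a ratio that still carries a factor of $n$, so the $n$-dependence does not cancel without a sharper ingredient. One clean way to complete your route is the reliability-theory fact that a Gamma density with shape $n/2\ge 1$ is log-concave, hence has increasing hazard rate and decreasing mean residual life, so $\E[S-c^2\mid S>c^2]\le \E[S]=1$ for all $n\ge 2$, with $n=1$ handled by a direct computation with the standard normal tail; with that ingredient your argument would close (and would arguably be cleaner than the paper's), but as written none of Case A, the tail matching, or the density-shift claim is established. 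The final Jensen step is fine and matches the paper.

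For comparison, the paper never needs a lower bound on the tail probability in isolation: it writes $\E[\|X_n\|_2^2\mid\|X_n\|_2>c]$ as a ratio of radial integrals, absorbs the contribution of $[c,2c]$ into $4c^2$, and compares each remaining chunk $\int_{mc}^{(m+1)c}$ directly to the single denominator chunk $\int_c^{2c}$ at the level of integrands, obtaining terms bounded by $c^2e^{-n\left(\frac{c^2}{2}(m^2-1)-2\log(2m)\right)}$ whose sum over $m\ge 2$ it argues vanishes as $c\to\infty$ uniformly in $n$. If you prefer to keep your framework, the shortest repair is the mean-residual-life argument above; otherwise adopt the paper's chunk-by-chunk comparison of numerator and denominator.
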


\begin{proof}
    We write 
    \begin{equation*}
    \begin{split}
        \E[\|X_n\|_2^2|\|X_n\|_2> c]&=\frac{\int_c^{+\infty} r^2 r^{n-1} e^{-\tfrac{r^2n}{2}}dr}{\int_c^{+\infty} r^{n-1} e^{-\tfrac{r^2n}{2}}dr}\\
        &\leq 4c^2+\sum_{m=2}^{\infty}\frac{\int_{mc}^{(m+1)c} r^2 r^{n-1} e^{-\tfrac{r^2n}{2}}dr}{\int_c^{2c} r^{n-1} e^{-\tfrac{r^2n}{2}}dr}\\
        &\leq 4c^2 +\sum_{m=2}^{\infty}\frac{\int_{0}^{1} ((m+1)c)^{n+1} e^{-\tfrac{((m+t)c)^2n}{2}}cdt}{\int_0^{1} c^{n-1} e^{-\tfrac{((1+t)c)^2n}{2}}cdt}
    \end{split}  
    \end{equation*}

But, $\forall t \in [0,1], \forall m \geq 2$, \begin{equation*}
    \frac{((m+1)c)^{n+1} e^{-\tfrac{((m+t)c)^2n}{2}}}{c^{n-1} e^{-\tfrac{((1+t)c)^2n}{2}}}\leq c^2 (m+1)^{n+1}e^{-\tfrac{c^2n}{2}(m^2-1)}\leq c^2 (2m)^{2n}e^{-\tfrac{c^2n}{2}(m^2-1)}\leq c^2 e^{-n\left(\tfrac{c^2}{2}(m^2-1)-2\log(2m)\right)}
\end{equation*}

It is clear that uniformly in $n$, $\sum_{m=2}^{\infty}c^2 e^{-n\left(\tfrac{c^2}{2}(m^2-1)-2\log(2m)\right)}\xrightarrow[c\rightarrow \infty]{} 0$ which proves the lemma.
\end{proof}

We will now prove a bound on the rate of convergence in Wasserstein distance of an empirical measure to the underlying distribution when this one has a bounded support.
Denote $\gB_2^k(0,r)$ the euclidean ball centered around $0$ of radius $r$ in dimension $k$.

\begin{lemma}
\label{supp:ball}
    Let $\mu\in \gP_1(\sR^n)$ be a measure whose support is included in $\gB_2^n\left(0,\frac{1}{12}\right)\subset \sR^n$ with $n\geq 5$
    Then, $\forall m \geq 1$ we have
    \begin{equation*}
        \E[\gW_2^2(\hat{\mu}_m, \mu)]\leq D_1 \left(\frac{1}{m}\right)^{2/n}
    \end{equation*}
Where $D_1=27^2\left(2+\frac{1}{\sqrt{3}-1}\right)$

\end{lemma}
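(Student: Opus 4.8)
The plan is to reduce the convergence rate for a compactly supported measure to a covering-number argument. Since $\mu$ is supported in $\gB_2^n(0,\tfrac{1}{12})$, the idea is to discretize space into a hierarchy of dyadic shells/cells of geometrically decreasing radius, transport each piece of $\hat\mu_m$ to a representative point, and bound the total transport cost by summing, over scales, the (squared) radius at that scale times the mass that still needs to move at that scale. The mass at a given scale is controlled by the number of cells, i.e.\ by the covering number $\gN_\epsilon$, which by \Cref{lemma:packing} is at most $\mathrm{Leb}_n(S+\gB(0,\epsilon/2))/\mathrm{Leb}_n(\gB(0,\epsilon/2)) = \gO((1/\epsilon)^n)$ for $S$ a fixed ball. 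This is the standard Dudley-type chaining bound for Wasserstein distances of empirical measures (as in \citet{weed2019sharp}), and the exponent $2/n$ comes precisely from balancing $m$ sample points against $\sim\epsilon^{-n}$ cells.

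Concretely, the steps I would carry out are: (i) Fix a scale parameter and partition $\gB_2^n(0,\tfrac{1}{12})$ into $\gN_{\epsilon_k}$ cells of diameter $\epsilon_k = 2^{-k}\cdot(\text{const})$ for $k=0,1,2,\dots$, refining as $k$ grows. (ii) Build a transport plan between $\hat\mu_m$ and $\mu$ level by level: at level $k$, move mass only between cells at resolution $\epsilon_k$ whose $\hat\mu_m$-mass and $\mu$-mass disagree; the cost incurred at level $k$ is at most $\epsilon_k^2$ times the total mass transported at that level. (iii) Bound the mass transported at level $k$: it is at most the number of level-$k$ cells, $\gN_{\epsilon_k}(S) = \gO((1/\epsilon_k)^n) = \gO(2^{kn})$, times the typical fluctuation $1/m$ of the empirical mass in a single cell — more carefully, using that $\E|\hat\mu_m(C)-\mu(C)| \le \sqrt{\mu(C)/m}$ for each cell $C$ (binomial variance, cf.\ \Cref{lemma:binomial}) and Cauchy–Schwarz over cells, the transported mass at level $k$ in expectation is $\gO(\sqrt{\gN_{\epsilon_k}/m}) = \gO(\sqrt{2^{kn}/m})$. (iv) Sum the contributions: $\E[\gW_2^2(\hat\mu_m,\mu)] = \gO\big(\sum_k \epsilon_k^2 \sqrt{2^{kn}/m}\big) = \gO\big(\tfrac{1}{\sqrt m}\sum_k 2^{-2k} 2^{kn/2}\big)$, and then truncate the sum at the level $k^*$ where $2^{k^*n}\approx m$ (beyond which we just bound the remaining transport cost by the diameter times the remaining mass $\le 1$), yielding $\gO(m^{-2/n})$ after optimizing; the geometric-series constant $\frac{1}{\sqrt 3 - 1}$ and the prefactor $27^2$ appearing in $D_1$ come from the specific choice of refinement ratio (here ratio $\sqrt 3$) and the radius $\tfrac{1}{12}$ rescaled by $27$.

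The main obstacle is the bookkeeping in step (iii)–(iv): one must choose the refinement ratio and the truncation level so that the geometric series in $k$ actually converges (this forces $n \ge 5$, since for small $n$ the factor $2^{kn/2-2k}$ fails to be summable and the naive bound degrades — for $n\le 4$ one needs log corrections, which is exactly why the hypothesis $n\ge5$ appears), and one must handle the ``leftover'' mass beyond the truncation level by the crude diameter bound, checking it is also $\gO(m^{-2/n})$. A secondary technical point is ensuring the level-by-level transport plan is globally consistent (a valid coupling), which is handled by the standard telescoping/hierarchical-partition construction: assign each sample to the nested sequence of cells containing it and match masses greedily within each parent cell. I expect \Cref{lemma:packing} to supply the covering-number input directly, and \Cref{lemma:binomial} (or a direct variance computation) to supply the per-cell concentration, so no new ingredients are needed beyond careful summation.
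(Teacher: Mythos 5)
Your proposal is correct in outline, but it takes a different route from the paper: the paper's proof is essentially a verification-plus-citation, while yours re-derives the underlying finite-sample bound from scratch. Concretely, the paper uses \Cref{lemma:packing} to check that $\gN_{\epsilon'}\bigl(\gB_2^n(0,\nicefrac{1}{12})\bigr)\leq (3\epsilon')^{-n}$ for $\epsilon'\leq \nicefrac{1}{27}$, and then invokes Proposition 15 of \citet{weed2019sharp} as a black box, which directly supplies the rate $m^{-2/n}$ for $n\geq 5$ \emph{and} the explicit constant $27^2\bigl(2+\tfrac{1}{\sqrt{3}-1}\bigr)$. What you propose is, in effect, a proof of that proposition: the multiscale partition, per-cell binomial fluctuation $\E|\hat\mu_m(C)-\mu(C)|\leq\sqrt{\mu(C)/m}$, Cauchy--Schwarz over cells to get $\sqrt{\gN_{\epsilon_k}/m}$, and truncation at $\gN_{\epsilon_{k^*}}\approx m$, with $n\geq 5$ ensuring the geometric factor $\epsilon_k^{2-n/2}$ makes the truncated sum comparable to its last term. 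That is the standard and correct mechanism, and it has the advantage of being self-contained; the citation route buys brevity and, importantly, the exact constant appearing in the statement. Two points would need care if you carried your version out. First, the leftover mass beyond the truncation level must be transported \emph{within cells of the finest scale}, at cost $\epsilon_{k^*}^2\approx m^{-2/n}$ per unit mass; bounding it by the diameter of the whole support times the remaining mass, as your parenthetical literally says, gives only an $O(1)$ term and would ruin the bound, so make sure the hierarchical coupling closes at scale $\epsilon_{k^*}$. Second, your dyadic (ratio-$2$) refinement yields \emph{some} universal constant, but not the specific $D_1=27^2\bigl(2+\tfrac{1}{\sqrt{3}-1}\bigr)$ of the statement; that constant reflects the particular scale ratio $3$ and threshold $\epsilon'\leq\nicefrac{1}{27}$ used by \citet{weed2019sharp} (the $\sqrt{3}$ is the geometric ratio $3^{(2p-n)/2}$ at the marginal case $n=2p+1$), so to prove the lemma as stated you would either mirror those choices or restate the lemma with an unspecified universal constant, which is all the rest of the paper actually uses.
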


\begin{proof}
    We know from \Cref{lemma:packing} that when considering $\|\cdot\|_2$ the distance for defining covering number, $\forall \epsilon' \leq \tfrac{1}{6}$: 
    \begin{equation*}
\gN_{\epsilon'}(\gB_2^n(0,\nicefrac{1}{12}))=\left(\frac{\tfrac{1}{12}+\tfrac{\epsilon'}{2}}{\tfrac{\epsilon'}{2}}\right)^n\leq \left(\frac{\tfrac{1}{6}+\epsilon'}{\epsilon'}\right)^n\leq (3\epsilon')^{-n}
    \end{equation*}
    and therefore also when $\epsilon'\leq \tfrac{1}{27}$
Applying Proposition 15 from \citet{weed2019sharp} we get that that since $\Supp(\mu)\subset \gB_2^n(0,\nicefrac{1}{12})\subset \gB_2^n(0,\nicefrac{1}{12})+\gB_2^n(0,\epsilon)$ for any $\epsilon> 0$,
if $n\geq 5$,
\begin{equation*}
    \forall m \geq 1, \E[\gW_2^2(\hat{\mu}_m, \mu)]\leq 27^2\left(2+\frac{1}{\sqrt{3}-1}\right)  \left(\frac{1}{m}\right)^{2/n}
\end{equation*}
as well as if $n\geq 3$,

\begin{equation*}
    \forall m \geq 1, \E[\gW_1(\hat{\mu}_m, \mu)]\leq 27\left(2+\frac{1}{\sqrt{3}-1}\right)  \left(\frac{1}{m}\right)^{1/n}
\end{equation*}

\end{proof}

We can prove the same kind of inequality when $\mu$ concentrates mass around an approximately low dimensional set.

\begin{lemma}
\label{supp:ball_approx}
    Let $\mu\in \gP_1(\sR^n)$ be a measure whose support is included in $\gB_2^k\left(0,\nicefrac{1}{12}\right)\times \gB^{n-k}_2(0,r)$ with $k\geq 5$.
    Then, $\forall m \leq (3r)^{-k}$ we have
    \begin{equation*}
        \E[\gW_2^2(\hat{\mu}_m, \mu)]\leq D_1\left(\frac{1}{m}\right)^{2/k}
    \end{equation*}
    where $D_1=27^2\left(2+\frac{1}{\sqrt{3}-1}\right)$

\end{lemma}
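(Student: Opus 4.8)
\textbf{Proof plan for Lemma \ref{supp:ball_approx}.}
The idea is to reduce the approximately-low-dimensional case to the genuinely low-dimensional case already handled in Lemma \ref{supp:ball}, by treating the thin directions as negligible once $m$ is small enough. Write a point of $\sR^n$ as $(z,w)$ with $z\in \sR^k$ and $w\in \sR^{n-k}$, so that $\Supp(\mu)\subset \gB_2^k(0,\nicefrac{1}{12})\times \gB_2^{n-k}(0,r)$. Let $\pi:\sR^n\to\sR^k$ be the projection $(z,w)\mapsto z$, and let $\nu\eqdef\pi_\#\mu$ be the pushforward; then $\Supp(\nu)\subset \gB_2^k(0,\nicefrac{1}{12})$. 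For the empirical measure $\hat\mu_m=\frac1m\sum_i\delta_{(z_i,w_i)}$ with $(z_i,w_i)\ssim\mu$, the projected empirical measure $\widehat\nu_m\eqdef\pi_\#\hat\mu_m=\frac1m\sum_i\delta_{z_i}$ is exactly an empirical measure of $m$ i.i.d.\ draws from $\nu$.

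First I would split the transport cost using the triangle inequality for $\gW_2$ together with an intermediate measure. A clean choice is $\bar\mu\eqdef$ the measure $\hat\mu_m$ with its $w$-coordinates zeroed out, i.e.\ the law pushed through $(z,w)\mapsto(z,0)$; and similarly compare $\mu$ with its zeroed-out version $\bar\mu_\infty$. Then
\begin{equation*}
\gW_2(\hat\mu_m,\mu)\leq \gW_2(\hat\mu_m,\bar\mu)+\gW_2(\bar\mu,\bar\mu_\infty)+\gW_2(\bar\mu_\infty,\mu).
\end{equation*}
The first and third terms are each bounded by $r$ in $\gW_2$, since the identity coupling moves mass only in the $w$-directions, over a distance at most $\diam$ of $\gB_2^{n-k}(0,r)$, hence cost $\le (2r)^2$ per unit mass — actually it suffices to note each $w$-coordinate lies in a ball of radius $r$, so $\gW_2^2(\hat\mu_m,\bar\mu)\le 4r^2$ and likewise for the third term. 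The middle term is $\gW_2(\widehat\nu_m,\nu)$ viewed in $\sR^k$ (embedding $\sR^k\hookrightarrow\sR^n$ isometrically), which by Lemma \ref{supp:ball} applied with dimension $k\ge 5$ is at most $\sqrt{D_1}\,(1/m)^{1/k}$. Squaring and using $(a+b+c)^2\le 3(a^2+b^2+c^2)$ gives $\E[\gW_2^2(\hat\mu_m,\mu)]\lesssim r^2 + D_1(1/m)^{2/k}$. The constraint $m\le (3r)^{-k}$ is precisely what makes $r^2 \le (1/m)^{2/k}/9$, so the $r^2$ term is absorbed into the $(1/m)^{2/k}$ term and one recovers the stated bound with the same constant $D_1=27^2\left(2+\frac{1}{\sqrt3-1}\right)$ (the covering-number computation in the proof of Lemma \ref{supp:ball} already has enough slack — the exponent was carried out for $\epsilon'\le 1/27$ rather than $1/6$ — to swallow the extra factor of $3$).

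The main obstacle is bookkeeping the constants so that the final bound comes out with exactly $D_1$ rather than a larger constant: one must verify that the factor-of-$3$ loss from $(a+b+c)^2\le 3(a^2+b^2+c^2)$, combined with the $4r^2$ bounds on the two side terms and the restriction $m\le (3r)^{-k}$, still fits inside the slack that Lemma \ref{supp:ball}'s proof leaves (it bounds $\gN_{\epsilon'}(\gB_2^k(0,\nicefrac1{12}))\le (3\epsilon')^{-k}$ but only needs $\epsilon'\le\nicefrac1{27}$, so there is a comfortable margin). A secondary point to check is that projecting and zeroing coordinates genuinely produces valid couplings and that $\widehat\nu_m$ has the correct i.i.d.\ law, which is immediate since projection commutes with forming empirical measures. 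No measurability or support subtleties arise beyond those already handled in Lemma \ref{supp:ball}.
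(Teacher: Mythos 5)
Your route is genuinely different from the paper's. The paper proves this lemma in one line by re-running the argument of Lemma~\ref{supp:ball}: it observes that $\Supp(\mu)\subset \gB_2^k(0,\nicefrac{1}{12})\times\{0\}^{n-k}+\gB_2^n(0,r)$, that the $\epsilon'$-covering numbers of the slab $\gB_2^k(0,\nicefrac{1}{12})\times\{0\}^{n-k}$ are bounded by those of $\gB_2^k(0,\nicefrac{1}{12})$, hence by $(3\epsilon')^{-k}$ for $\epsilon'\le\nicefrac{1}{27}$, and then applies Proposition 15 of Weed--Bach directly to $\mu$ in $\sR^n$; the restriction $m\le(3r)^{-k}$ is exactly the regime in which that proposition still sees the support as $k$-dimensional, and the constant $D_1$ comes out unchanged because it is the same proposition with the same covering bound. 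You instead reduce to the genuinely $k$-dimensional Lemma~\ref{supp:ball} by projecting: triangle inequality through the zeroed-out measures, $O(r^2)$ for the two side terms, $\gW_2(\widehat\nu_m,\nu)$ for the middle term, and absorption of $r^2$ via $m\le(3r)^{-k}$. This is a valid and arguably more self-contained argument (it never touches the covering-number machinery again), and all steps check out: the projected points are i.i.d.\ from $\nu$, the zeroing maps give couplings of cost at most $r^2$ each, and $(1/m)^{2/k}\ge 9r^2$ under the stated constraint.

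The one claim you have not actually justified is that the final constant is exactly $D_1$. Your decomposition gives at best $3\bigl(D_1+2r^2\bigr)(1/m)^{2/k}$-type bounds, i.e.\ roughly $3D_1$, and the "slack" in the covering estimate (validity for $\epsilon'\le\nicefrac{1}{27}$ rather than $\nicefrac{1}{6}$) is a range-of-validity margin, not a multiplicative reserve in the constant of Lemma~\ref{supp:ball}; it does not absorb the factor $3$ from $(a+b+c)^2\le 3(a^2+b^2+c^2)$. Even a sharper gluing coupling (optimal in the $z$-coordinates, arbitrary in the $w$-coordinates) only yields $D_1+4r^2$-type constants. This is immaterial for the paper, since the constant is folded into the universal constants $D_2',E_2'$ of Lemma~\ref{expect_wass_lowdim}, but as a proof of the lemma with the stated constant $D_1$ your argument should either be restated with a slightly larger universal constant or follow the paper's direct application of Proposition 15 to the fattened support.
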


\begin{proof}
    This is the same proof as before, just notice that $\Supp(\mu)\subset \gB_2^k(0,\nicefrac{1}{12})\times \{0\}^{n-k}+\gB^n_2(0,r)$ and as before:

    \begin{equation}
        \gN_{\epsilon'}(\gB_2^k(0,\nicefrac{1}{12})\times \{0\}^{n-k})\leq \gN_{\epsilon'}(\gB_2^k(0,\nicefrac{1}{12}))\leq (3\epsilon')^{-k}
    \end{equation}
if $\epsilon'\leq \tfrac{1}{27}$.
\end{proof}

We will now extend our results to unbounded variables very concentrated around bounded sets, beginning with multivariate normal random variable.

\begin{lemma}
\label{expect_wass_normal}
    Consider a centered multivariate normal distribution $\mu$ on $\sR^n$ with covariance matrix $\Diag(\lambda_1,\ldots,\lambda_n)$ where $\frac{1}{n}\geq \lambda_1\geq \ldots \geq \lambda_n\geq 0$. There exists two universal constants $D_2,E_2$ such that $\forall n \geq 5, \forall m \in \sN^*$, if $m\geq E_2^{n}$ then, 
    \begin{equation*}
        \E[\gW_2^2(\hat{\mu}_m,\mu)]\leq \frac{D_2}{n}\log(m)\left(\frac{1}{m}\right)^{2/n}
    \end{equation*}
In that case,

\begin{equation*}
        \E[\gW_1(\hat{\mu}_m,\mu)]\leq \frac{\sqrt{D_2}}{\sqrt{n}}\sqrt{\log(m)}\left(\frac{1}{m}\right)^{1/n}
\end{equation*}

\end{lemma}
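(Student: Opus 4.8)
\textbf{Proof strategy for Lemma \ref{expect_wass_normal}.}
The plan is to reduce the unbounded Gaussian case to the bounded-support case of \Cref{supp:ball} by truncation. Since $\mu = \gN(0, \Diag(\lambda_1,\ldots,\lambda_n))$ with all $\lambda_i \leq \tfrac{1}{n}$, the random vector $X \sim \mu$ is stochastically dominated (in norm) by $X_n \sim \gN(0, \tfrac{I_n}{n})$, so $\E\|X\|_2^2 \leq 1$ and, more importantly, $\|X\|_2$ concentrates: by standard Gaussian concentration $\sP(\|X\|_2 > c) \leq \sP(\|X_n\|_2 > c)$ is exponentially small once $c$ exceeds a constant. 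I would fix a truncation radius $R = R(m)$ of order $\sqrt{\tfrac{\log m}{n}}$, chosen so that $\sP(\|X\|_2 > R)$ is at most $m^{-2}$ or so.

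\textbf{Key steps.} First, split $\mu = (1-p)\,\mu_{\leq R} + p\,\mu_{>R}$, where $\mu_{\leq R}$ is $\mu$ conditioned on $\{\|X\|_2 \leq R\}$, $\mu_{>R}$ is $\mu$ conditioned on the complement, and $p = \sP(\|X\|_2 > R)$. Correspondingly, realize the empirical measure $\hat\mu_m$ via the same split: $m_{\leq}$ of the $m$ samples land in the ball and $m_{>} = m - m_{\leq}$ outside, with $m_{>} \sim \gB(p,m)$. Use \Cref{cut_transport_plan} (after conditioning on the multinomial split, as in \Cref{lemma:binomial}) to bound $\gW_2^2(\hat\mu_m, \mu)$ by a convex combination of $\gW_2^2$ of the in-ball parts and $\gW_2^2$ of the out-of-ball parts, plus a term controlling the mismatch between the empirical split fraction $m_{\leq}/m$ and $1-p$. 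Second, for the in-ball part, rescale by $1/(12R)$ so the support sits inside $\gB_2^n(0,\tfrac{1}{12})$, apply \Cref{supp:ball} to get a bound of $D_1 (1/m_{\leq})^{2/n}$, then rescale back, picking up a factor $(12R)^2 = \gO(\tfrac{\log m}{n})$; since $m \geq E_2^n$ forces $m_{\leq} \geq m/2$ with overwhelming probability, $(1/m_{\leq})^{2/n} \leq 2^{2/n}(1/m)^{2/n} = \gO((1/m)^{2/n})$. Third, for the out-of-ball part, bound its $\gW_2^2$ crudely by its second moment, which by \Cref{exp_ext} is $\gO(R^2)$ conditionally, and weight it by $p \leq m^{-2}$, so this contributes $\gO(R^2 m^{-2})$, negligible. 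Fourth, handle the split-fluctuation term: with $p$ small, $m_{>}$ is tightly concentrated (Chernoff, \Cref{lemma:binomial}), so the probability that $m_{\leq}/m < 1/2$ is exponentially small, and on that bad event bound $\gW_2^2$ trivially by the diameter-type quantity $\gO(\E\|X\|_2^2) = \gO(1)$, times an exponentially small probability — again negligible provided $m \geq E_2^n$ for suitable $E_2$. Assembling, $\E[\gW_2^2(\hat\mu_m,\mu)] \leq \tfrac{D_2}{n}\log(m)(1/m)^{2/n}$. The $\gW_1$ bound then follows immediately from \Cref{Hölder}: $\gW_1 \leq \gW_2$, so $\E[\gW_1] \leq \E[\gW_2] \leq \sqrt{\E[\gW_2^2]} \leq \tfrac{\sqrt{D_2}}{\sqrt n}\sqrt{\log m}\,(1/m)^{1/n}$ by Jensen.

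\textbf{Main obstacle.} The delicate point is choosing $R(m)$ and the threshold $E_2$ so that \emph{all} error terms — the rescaling factor on the in-ball part, the exponentially small tail mass $p$, and the exponentially small probability of an unbalanced split — are simultaneously controlled, while $R$ stays of order $\sqrt{(\log m)/n}$ to produce exactly the claimed $\tfrac{\log m}{n}$ prefactor. In particular one must verify that the constraint $m \geq E_2^n$ (rather than a weaker $m \geq \mathrm{poly}(n)$) is what makes the concentration bounds on the binomial split beat the $(1/m)^{2/n}$ main term; this is the same phenomenon as in \Cref{supp:ball} where dimension-dependent covering numbers force an exponential-in-$n$ sample size before the asymptotic rate kicks in. The rest is bookkeeping with \Cref{cut_transport_plan,exp_ext,lemma:binomial}.
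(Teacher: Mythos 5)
Your proposal follows essentially the same route as the paper's proof: truncate the Gaussian at radius of order $\sqrt{(\log m)/n}$, apply \Cref{supp:ball} to the rescaled in-ball part, control the tail via the conditional second-moment bound of \Cref{exp_ext}, handle the empirical split with \Cref{lemma:binomial} and \Cref{cut_transport_plan}, and deduce the $\gW_1$ bound from \Cref{Hölder} and Jensen. The only difference is calibration (you push the tail mass down to $m^{-2}$, whereas the paper takes $f=(1/m)^{2/n}$ so that the tail term merely matches the main term), which is immaterial to the conclusion.
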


\begin{proof}
    We will use previous \Cref{supp:ball}. The problem is that it applies only for a bounded distribution. Therefore we will have to bound the mass of a multivariate Gaussian outside of a euclidean ball. We will prove the lemma for $\lambda_1=...=\lambda_n=\frac{1}{n}$ by noticing that it extends for smaller eigenvalues by rescaling the axis.

    Let $f \in (0,1), c> 0, X \sim \mu$.

    Lemma 1 from \citet{weed2019sharp}
    tells us:
    \begin{equation*}
        \sP\left(\|X\|_2^2\geq c^2 \sum_{i=1}^n\lambda_i\right)\leq e^{-\tfrac{c^2}{4}}
        \end{equation*}

        Noticing $\sum_{i=1}^n \lambda_i\leq 1$ and taking $c=2\sqrt{\log\left(\tfrac{1}{f}\right)}$ we get that:

        \begin{equation*}
            \sP(\|X\|_2^2\geq c^2 )\leq f
        \end{equation*}

        Using \Cref{lemma:binomial} with $p=2f$ we get that with probability at least $1-\exp(-2f^2m)$, a fraction at least $1-2f$ of vectors $x_i$ lies in $\gB_2(0,c)$. We denote $H_f$ this event such that $\sP(H_f)\geq 1-\exp(-2f^2m)$.

Further denote $I$ the corresponding set of indices for $x_i$, $\hat{\mu}_{m,I}=\frac{\sum_{i\in I}\delta_{x_i}}{|I|}$ and $\hat{\mu}_{m,I^c}=\frac{\sum_{i\notin I}\delta_{x_i}}{|I^c|}$. Finally for a Borel set $U\subset \sR^n$ denote $\mu_{|U}$ the renormalized restricted measure $\mu$ on $U$: $\mu_{|U}=\frac{1}{\mu(U)}\mu\mathds{1}_U$

We will now consider two cases:

\paragraph{1st case} We consider the case $\frac{|I|}{m}\leq \mu(\gB(0,c))$ and denote $\Case_1$ this set.

In that case, we can write using \Cref{cut_transport_plan}:

    \begin{equation*}
    \begin{cases}
\gW_2^2(\hat{\mu}_m,\mu)&\leq \frac{|I|}{m}\gW_2^2(\hat{\mu}_{m,I}, \mu_{|\gB_2(0,c)})+\frac{|I^c|}{m}\gW_2^2\left(\hat{\mu}_{m,I^c}, \frac{\mu_{|\gB_2(0,c)}-\frac{|I|}{m}\mu_{|\gB_2(0,c)}}{\frac{|I^c|}{m}}\right)\\
\gW_1(\hat{\mu}_m,\mu)&\leq \frac{|I|}{m}\gW_1(\hat{\mu}_{m,I}, \mu_{|\gB_2(0,c)})+\frac{|I^c|}{m}\gW_1\left(\hat{\mu}_{m,I^c}, \frac{\mu_{|\gB_2(0,c)}-\frac{|I|}{m}\mu_{|\gB_2(0,c)}}{\frac{|I^c|}{m}}\right)

    \end{cases}
\end{equation*}

By previous \Cref{supp:ball}, we know the existence of $D_1$ such that if $n\geq 5$:

\begin{equation*}
    \E\left[\gW_2^2(\hat{\mu}_{m,I}, \mu_{|\gB_2(0,c)})\right]\leq D_1 (12c)^2 \left(\frac{1}{|I|}\right)^{2/n}\leq (144D_1)c^2 \left(\frac{1}{\frac{|I|}{m}m}\right)^{2/n}
\end{equation*}

Therefore we get since $\frac{2}{n}\leq 1$ and $\frac{|I|}{n}\leq 1$:

\begin{equation*}
    \E\left[\frac{|I|}{m}\gW_2^2(\hat{\mu}_{m,I}, \mu_{|\gB_2(0,c)})\right]\leq (144D_1)c^2\left(\frac{1}{m}\right)^{2/n}
\end{equation*}

We know from \Cref{exp_ext} the existence of a universal constant $D_0$ such that: $\forall c \geq 1, \E[\|X\|_2^2|\|X\|_2\geq c]\leq D_0c^2$.
Using a triangular inequality

\begin{equation*}
    \begin{split}
        \E\left[\gW_2^2(\hat{\mu}_{m,I^c}, \mu_{|\gB_2(0,c)^c})\right]&\leq 4D_0c^2
    \end{split}
\end{equation*}

Finally, conditioned on the event $H_f$ and that we are in $\Case_1$, we get that if $c\geq 1$:

\begin{equation*}
\begin{cases}
    \E[\gW_2^2(\hat{\mu}_m,\mu)|H_f\bigcap \Case_1]\leq (144D_1)c^2\left(\frac{1}{m}\right)^{2/n}+8fD_0c^2\\
    c=2\sqrt{\log(\tfrac{1}{f})}
\end{cases}
\end{equation*}

\paragraph{2nd case} We consider the case $\frac{|I|}{m}> \mu(\gB_2(0,c))$ and denote $\Case_2$ this set.

In that case, we denote $I'\subset I$ taken randomly uniformly, such that $|I'|=\max\left\{k\geq 1, \frac{k}{m}\leq \mu(\gB(0,c))\right\}$ and denote $\hat{\mu}_{m,I'}$ the renormalized empirical measure with points in $I'$ and $\hat{\mu}_{m,I\bigcap I^{'c}}$ the renormalized empirical measure with points in $I\bigcap I'^c$.

We can write:

    \begin{equation*}
    \begin{split}
\gW_2^2(\hat{\mu}_m,\mu)&\leq \frac{|I'|}{m}\gW_2^2(\hat{\mu}_{m,I'}, \mu_{|\gB_2(0,c)})+\frac{|I|-|I'|}{m}\gW_2^2(\hat{\mu}_{m,I\bigcap I'^c}, \mu_{|\gB_2(0,c)^c})\\
&+\left(\mu\big(\gB_2(0,c)\big)-\frac{|I'|}{m}\right)\gW_2^2(\hat{\mu}_{m,I\bigcap I'^c}, \mu_{|\gB_2(0,c)})+\frac{|I^c|}{m}\gW_2^2\left(\hat{\mu}_{m,I^c}, \mu_{|\gB_2(0,c)^c}\right)
    \end{split}
\end{equation*}

Provided $fm> 1$, we know that $\frac{|I'|}{m}\geq 1-3f$.

We can repeat all the previous arguments and get that if $c\geq 1$ and $m$:

\begin{equation*}
\begin{cases}
    \E[\gW_2^2(\hat{\mu}_m,\mu)|H_f\bigcap \Case_2]\leq (144D_1)c^2\left(\frac{1}{m}\right)^{2/n}+16fD_0c^2\\
    c=2\sqrt{\log(\tfrac{1}{f})}
\end{cases}
\end{equation*}

Finally,

\begin{equation*}
    \E[\gW_2^2(\hat{\mu}_m,\mu)]\leq \sP(H_f)\E[\gW_2^2(\hat{\mu}_m,\mu)|H_f]+(1-\sP(H_f))\E[\gW_2^2(\hat{\mu}_m,\mu)|H_f^c]
\end{equation*}

We can easily bound as before 
        $\E[\gW_2^2(\hat{\mu}_m,\mu)|H_f^c]\leq 4D_0c^2$

which yields finally:

\begin{equation*}
    \E[\gW_2^2(\hat{\mu}_m,\mu)]\leq (144D_1)4\log(\frac{1}{f})\left(\frac{1}{m}\right)^{2/n}+16fD_04\log(\frac{1}{f})+2\exp(-f^2m)4D_04\log(\frac{1}{f})
\end{equation*}

Taking $f=\left(\frac{1}{m}\right)^{2/n}$ we get
\begin{equation*}
\E[\gW_2^2(\hat{\mu}_m,\mu)]\leq\frac{1152D_1}{n}\log(m)\left(\frac{1}{m}\right)^{2/n}+D_0\frac{128}{n}\log(m)\left(\frac{1}{m}\right)^{2/n}+\frac{64}{n}\exp(-m^{1-4/n})\log(m)
\end{equation*}

Note now that there exists a universal constant $C> 0$ such that $\forall n \geq 5, \forall m \geq 1$:

\begin{equation*}
    \exp(-m^{1-4/n})\leq \exp(-m^{1/5})\leq C \left(\frac{1}{m}\right)^{2/5}\leq C\left(\frac{1}{m}\right)^{2/n}
\end{equation*}

Finally we get the existence of universal constants $D_2,E_2$ such that if $\left(\frac{1}{m}\right)^{2/n}\leq \frac{1}{E_2^2}$ (to ensure $c\geq 1$ take for example $E_2=\exp(\frac{1}{4})$),
\begin{equation*}
    \E[\gW_2^2(\hat{\mu}_m,\mu)]\leq \frac{D_2}{n}\log(m)\left(\frac{1}{m}\right)^{2/n}
\end{equation*}

To prove the second part of the lemma just apply \Cref{Hölder} and Jensen inequality.

\end{proof}

We can finally extend \Cref{supp:ball_approx} to unbounded distributions as we just extended \Cref{supp:ball} to unbounded distributions.

\begin{lemma}
\label{expect_wass_lowdim}
    Let $\lambda_1\geq\ldots \geq \lambda_n$ and $\mu=\gN(0, \Diag((\lambda_i)_{i=1}^n))$, with $k\geq5$. Suppose $1\geq \sum_{i=1}^k\lambda_i$.
    Denote $\eta=\frac{\sqrt{\sum_{i=k+1}^n\lambda_i}}{4\sqrt{\sum_{i=1}^k}\lambda_i}$. We know the existence of two universal constants $D_2',E'_2$ such that if $\eta^{-k}\geq m\geq E_2'^{k}$, then:
    \begin{equation*}
        \E[\gW_2^2(\hat{\mu}_m,\mu)]\leq
        \frac{D'_2}{k}\log(m)\left(\frac{1}{m}\right)^{2/k}]
    \end{equation*}

   In that case,
    \begin{equation*}
        \E[\gW_1(\hat{\mu}_m,\mu)]\leq
        \frac{\sqrt{D'_2}}{\sqrt{k}}\sqrt{\log(m)}\left(\frac{1}{m}\right)^{1/k}]
    \end{equation*}

\end{lemma}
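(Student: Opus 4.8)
The plan is to run the proof of \Cref{expect_wass_normal} essentially verbatim, with two changes: the isotropic concentration of $\|X\|_2$ is replaced by its anisotropic counterpart, and the bounded-support estimate \Cref{supp:ball} is replaced by \Cref{supp:ball_approx}. The only genuinely new bookkeeping is to choose the truncation radii so that, after a single global rescaling, the truncated measure lands in a box $\gB_2^k(0,1/12)\times\gB_2^{n-k}(0,\eta/3)$; the constraint $m\le(3\cdot\eta/3)^{-k}=\eta^{-k}$ imposed by \Cref{supp:ball_approx} is then exactly the hypothesis $m\le\eta^{-k}$ of the lemma.

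First I would decompose $X\sim\mu$ as $X=(X',X'')$ with $X'\in\sR^k$ carrying the $k$ largest eigen-directions and $X''\in\sR^{n-k}$ the rest, so that $X',X''$ are independent centered Gaussians with $\E\|X'\|_2^2=\sum_{i\le k}\lambda_i\le 1$ and $\sqrt{\E\|X''\|_2^2}=\sqrt{\sum_{i>k}\lambda_i}=4\eta\sqrt{\sum_{i\le k}\lambda_i}$. Applying Lemma 1 of \citet{weed2019sharp} to each block with $c\eqdef 2\sqrt{\log(1/f)}$ and a union bound, a single sample lies outside the box $\gB_2^k\bigl(0,c\sqrt{\sum_{i\le k}\lambda_i}\bigr)\times\gB_2^{n-k}\bigl(0,4c\eta\sqrt{\sum_{i\le k}\lambda_i}\bigr)$ with probability at most $2f$; together with \Cref{lemma:binomial} this defines, exactly as in \Cref{expect_wass_normal}, an event $H_f$ of probability $\ge 1-e^{-2f^2m}$ on which at least a $1-2f$ fraction of the $m$ samples lie in that box.

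Next I would rescale all coordinates by the single scalar $\gamma\eqdef 1/\bigl(12c\sqrt{\sum_{i\le k}\lambda_i}\bigr)$; by homogeneity of $\gW_2$, $\gW_2^2(\hat{\mu}_m,\mu)=\gamma^{-2}\gW_2^2(\text{rescaled})$ with $\gamma^{-2}\le(12c)^2$ since $\sum_{i\le k}\lambda_i\le 1$. The key point is that the ratio of the two radii of the box equals $4\eta$ regardless of $c$, so after rescaling the box becomes $\gB_2^k(0,1/12)\times\gB_2^{n-k}(0,\eta/3)$ — and \Cref{supp:ball_approx} (which needs $k\ge 5$) applies to the truncated measure precisely when $m\le\eta^{-k}$, giving $\E[\gW_2^2(\hat{\mu}_{m,I},\mu_{|\mathrm{box}})]\le D_1(1/|I|)^{2/k}$ for the in-box part. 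For the out-of-box mass (at most a $2f$ fraction) I would, as in \Cref{expect_wass_normal}, cut the transport plan with \Cref{cut_transport_plan} and bound its cost by a conditional second moment; here I need the anisotropic analogue of \Cref{exp_ext}, namely $\E[\|X'\|_2^2\mathds{1}_{\|X'\|_2\ge c\sqrt{\sum_{i\le k}\lambda_i}}]\le(\sum_{i\le k}\lambda_i)(c^2+4)e^{-c^2/4}$ (and similarly for $X''$, using $\sum_i\lambda_i\le 1+16\eta^2$), which follows by integrating the same tail bound of \citet{weed2019sharp}; note $e^{-c^2/4}=f$. The split into the two cases $|I|/m\le\mu(\mathrm{box})$ and $|I|/m>\mu(\mathrm{box})$ is copied verbatim from \Cref{expect_wass_normal}.

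Assembling the in-box and out-of-box contributions and averaging over $H_f$ and $H_f^c$ yields, with $c=2\sqrt{\log(1/f)}$,
\begin{equation*}
\E[\gW_2^2(\hat{\mu}_m,\mu)]\le C_0\Bigl(\log(1/f)\,(1/m)^{2/k}+f\log(1/f)+e^{-f^2m}\log(1/f)\Bigr)
\end{equation*}
for a universal $C_0$. Taking $f=(1/m)^{2/k}$ turns the first two terms into $\tfrac{C}{k}\log(m)(1/m)^{2/k}$, forces $c=2\sqrt{\tfrac{2}{k}\log m}\ge 1$ once $m\ge E_2'^{k}$ with $E_2'\eqdef e^{1/8}$, and makes $e^{-f^2m}=e^{-m^{1-4/k}}$ negligible compared with $(1/m)^{2/k}$ for $k\ge 5$ (the same elementary estimate as in \Cref{expect_wass_normal}); this gives $\E[\gW_2^2(\hat{\mu}_m,\mu)]\le\tfrac{D_2'}{k}\log(m)(1/m)^{2/k}$. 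The $\gW_1$ bound then follows from $\gW_1\le\gW_2$ (\Cref{Hölder}) and Jensen's inequality, exactly as at the end of \Cref{expect_wass_normal}. The main obstacle is the point flagged above: checking that the $c$-dependence cancels in the rescaling so that the box's second radius collapses to $\eta/3$, making the hypothesis $m\le\eta^{-k}$ line up exactly with \Cref{supp:ball_approx}; everything else is a direct transcription of the earlier argument.
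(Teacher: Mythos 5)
Your proposal is correct and follows essentially the same route as the paper's proof: block decomposition of $X$ with per-block Gaussian tail bounds, a Hoeffding/binomial truncation event $H_f$, rescaling so that \Cref{supp:ball_approx} applies exactly under the hypothesis $m\le\eta^{-k}$, the same two-case transport-plan split as in \Cref{expect_wass_normal}, the choice $f=(1/m)^{2/k}$, and finally \Cref{Hölder} plus Jensen for the $\gW_1$ bound. The minor deviations (your $c=2\sqrt{\log(1/f)}$ versus the paper's $2\sqrt{\log(2/f)}$, and your explicit truncated-second-moment estimate in place of the paper's use of \Cref{exp_ext}) only affect the universal constants $D_2',E_2'$.
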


\begin{proof}
    We will follow the same steps as previously.
    Let $X\sim \mu$ and denote $\ubar{X}=(X_1,\ldots,X_k)\in \sR^k$, $\bar{X}=(X_{k+1},\ldots,X_n)\in \sR^{n-k}$.
    \begin{equation*}
    \begin{split}
        &\sP\left(\|\ubar{X}\|_2^2\geq c^2\sum_{i=1}^k\lambda_i\right)\leq e^{-\frac{c^2}{4}}\\
        &\sP\left(\|\bar{X}\|_2^2\geq c^2\sum_{i=k+1}^n\lambda_i\right)\leq e^{-\frac{c^2}{4}}
    \end{split}
    \end{equation*}
    
Take $c=2\sqrt{\log(\tfrac{2}{f})}$. Then, by the same arguments as before, using \Cref{lemma:binomial} and union bounds, with probability at least $1-2\exp(-f^2m/2)$ a fraction at least $1-2f$ of points $x_i$ are in $\gB_c:=\gB_k(0, c\sqrt{\sum_{i=1}^k\lambda_i})\times \gB_{n-k}(0, c\sqrt{\sum_{i=k+1}^n\lambda_i})$. We denote $H_f$ such an event and $I$ such a set of indices.

By using \Cref{supp:ball_approx}, we know that in that case we can bound

\begin{equation*}
     \E\left[\gW_2^2(\hat{\mu}_{m,I}, \mu_{|\gB_c})|H_f\right]\leq \left(12c\sqrt{\sum_{i=1}^k\lambda_i}\right)^2C_1\left(\frac{1}{m}\right)^{2/k}\leq (12c)^2C_1\left(\frac{1}{m}\right)^{2/k} 
\end{equation*}

if $m\leq \left(\frac{3\sqrt{\sum_{i=k+1}^m\lambda_i}}{12\sqrt{\sum_{i=1}^k\lambda_i}}\right)^{-k}$ where we have used $\sum_{i=1}^k\lambda_i\leq 1$.

Moreover, we know that

\begin{equation*}
\begin{split}
      \E\left[\|X\|_2^2|X\notin \gB^k_2(0, c\sqrt{\sum_{i=1}^k\lambda_i})\times \gB_2^{n-k}(0, c\sqrt{\sum_{i=k+1}^n\lambda_i})\right]&\leq  \E\left[\|\ubar{X}\|_2^2|\ubar{X}\notin \gB^k_2(0, c\sqrt{\sum_{i=1}^k\lambda_i})\right]\\
      &+ \E\left[\|\bar{X}\|_2^2|\bar{X}\notin \gB^{n-k}_2(0, c\sqrt{\sum_{i=k+1}^n\lambda_i})\right]\\
      &\leq D_0c^2\sum_{i=1}^k\lambda_i+D_0c^2\sum_{i=k+1}^n\lambda_i\\
      &\leq 2D_0 c^2 \sum_{i=1}^k\lambda_i \\
      &\leq 2D_0c^2
\end{split}
\end{equation*}

We just need to differentiate the same two cases as in the proof of \Cref{expect_wass_normal} to get that finally, if $m\leq \left(\frac{\sqrt{\sum_{i=k+1}^m\lambda_i}}{4\sqrt{\sum_{i=1}^k\lambda_i}}\right)^{-k}$ we can bound as before, with $c=2\sqrt{\log(\tfrac{2}{f})}$:

\begin{equation*}
    \E[\gW_2^2(\hat{\mu}_m,\mu)]\leq (12c)^2C_1\left(\frac{1}{m}\right)^{2/k} +16f*4C_0c^2+2\exp(-\frac{f^2m}{2})*4*2C_0c^2
\end{equation*}

Taking $f=\left(\frac{1}{m}\right)^{2/k}$, we see as before the existence of $E_2',D_2'$ such that if $E_2'^{k}\leq m\leq \eta^{-k}$ then,

\begin{equation*}
    \E[\gW_2^2(\hat{\mu}_m,\mu)]\leq \frac{D'_2}{k}\log(m)\left(\frac{1}{m}\right)^{2/k}
\end{equation*}

We choose $E'_2=\sqrt{2e^{\nicefrac{-1}{4}}}$ such that $c> 1\Leftrightarrow 2\sqrt{\log(\tfrac{2}{f})}> 1\Leftrightarrow 2\sqrt{\log(\tfrac{2}{\left(\frac{1}{m}\right)^{2/k}})}> 1\Leftrightarrow m> \sqrt{2e^{\nicefrac{-1}{4}}}^k$

For the second part of the lemma, apply \Cref{Hölder} and Jensen inequality.
    
\end{proof}

\subsection{Proof of \Cref{newbiglemma}}
\label{appendix:proof_newbiglemma}

\begin{mdframed}[style=MyFrame2]
    \newbiglemma*
\end{mdframed}

\begin{proof}

We know from \Cref{new_assumption:1} the existence of a random empirical measure with $\Tilde{m}_{l+1}$ points $\hat{\mu}_{\Tilde{m}_{l+1}}$ such that $\E[\gW_2^2(\hat{\mu}_A^{\gI^\ell},\hat{\mu}^{\gI^\ell}_{\Tilde{m}_{l+1}})]\leq C_1$. Therefore by using \Cref{lemma:birkhoff} and a carefully chosen permutation, we can consider the (random) matrix $W_{\Tilde{m}_{l+1}}$ associated to $W_A^{\ell+1}$ such that $\E[\|W_{\Tilde{m}_{l+1}}^{\gI^\ell}-W_A^{\gI^\ell}\|_2^2]\leq C_1m_{\ell+1}$. Note that since $W_{\Tilde{m}_{\ell+1}}$ comes from an empirical measure with only $\Tilde{m}_{\ell+1}$ points one can denote $\gI^{\ell+1}=\{I^{\ell+1}_1,\ldots,I^{\ell+1}_{\Tilde{m}_{\ell+1}}\}$ the (random) equi-partition of $[m_{\ell+1}]$ delimiting its equal rows.
In the same way, since $A$ and $B$ have the same weights distribution, we can find a permutation $\Pi_{\ell+1}$ of the layer $\ell+1$ of  network $B$ such that denoting $\Tilde{W}^{\ell+1}_B=\Pi_{\ell+1} W_B^{\ell+1} \Pi_{\ell}^T$ and taking expectations over the choice of the weights matrices, we have $\E[\|\Tilde{W}^{\ell+1}_B-W_{\Tilde{m}_{\ell+1}}\|_2^2]\leq m_{\ell+1}C_1$. Consider $W^{\ell+1}_{M_t}=tW_A^{\ell+1}+(1-t)\Tilde{W}_B^{\ell+1}$ and denote $\ubar{\phi}^{\ell+1}(x)=\sigma(W_{\Tilde{m}_{\ell+1}}\ubar{\phi}^\ell(x)), \phi_A^{\ell+1}(x)=\sigma(W^{\ell+1}_A\phi^\ell_A(x)), \phi^{\ell+1}_B(x)=\sigma(W_B^{\ell+1}\phi^\ell_B(x)), \phi^{\ell+1}_{M_t}(x)=\sigma(W^{\ell+1}_{M_t}\phi^\ell_{M_t}(x))$. It is clear that $\forall k \in [\Tilde{m}_{\ell+1}], \forall i,j \in I^{\ell+1}_k, \ubar{\phi}^{\ell+1}_i(x)=\ubar{\phi}^{\ell+1}_j(x)$ by definition of the choice of the equi-partition $\gI^{\ell+1}$.

We will finally denote $\ubar{\phi}'^{\ell+1}(x)\in \sR^{\Tilde{m}_{\ell+1}}, \ubar{\phi}'^\ell(x)\in \sR^{\Tilde{m}_\ell}$ the vectors $\ubar{\phi}^{\ell+1}(x), \ubar{\phi}^\ell(x)$ where we have kept only one index in each of the elements of the partitions respectively $\gI^{\ell+1},\gI^{\ell}$.

Moreover using that the non-linearity is pointwise $1$-Lipschitz and $\sigma(0)=0$,

\begin{equation*}
    \E[\|\ubar{\phi}^{\ell+1}(x)\|_2^2]\leq \E_{P,Q}[\E[\|W_{\Tilde{m}_{\ell+1}}\ubar{\phi}^\ell(x)\|_2^2|\ubar{\phi}^\ell(x)]]\leq \E_{P,Q}[C_2\frac{m_{\ell+1}}{m_\ell}\|\ubar{\phi}^\ell(x)\|_2^2]\leq m_{\ell+1}C_2\ubar{E}_\ell
\end{equation*}
which yields $\ubar{E}_{\ell+1}=C_2\ubar{E}_\ell$ where we have used \Cref{new_assumption:2}.

Then

\begin{equation*}
\begin{split}
    \E_{P,Q}[\|\phi^{\ell+1}_A(x)-\ubar{\phi}^{\ell+1}(x)\|_2^2]&\leq \E_{P,Q}[\|W^{\ell+1}_A\phi^\ell_A(x)-W_{\Tilde{m}_{\ell+1}}\ubar{\phi}^\ell(x)\|_2^2\\
    &\leq 2\E_{P,Q}[\|W_A^{\ell+1}(\phi^\ell_A(x)-\ubar{\phi}^\ell(x))\|_2^2+\|(W_A^{\ell+1}-W_{\Tilde{m}_{\ell+1}})\ubar{\phi}^\ell\|_2^2]\\
    &\leq 2m_{\ell+1}C_2E_\ell+2\E_{P,Q}[\|(W_A^{\gI^\ell}-W^{\gI^\ell}_{\Tilde{m}})\ubar{\phi}'^\ell\|_2^2]\\
    &\leq 2m_{\ell+1}C_2E_\ell+2\E_{P,Q}[\|(W_A^{\gI_\ell}-W^{\gI_\ell}_{\Tilde{m}_{\ell+1}})\|_2^2]\E_{P,Q}\|\ubar{\phi}'^\ell\|_2^2]\\
    &\leq 2m_{\ell+1}C_2E_\ell+2m_{\ell+1}C_1\frac{\Tilde{m}_{\ell}}{m_\ell}\E_{P,Q}\|\ubar{\phi}^\ell(x)\|_2^2\\
    &\leq 2m_{\ell+1}C_2E_\ell+2m_{\ell+1}C_1\Tilde{m}_\ell\ubar{E}_\ell
\end{split} 
\end{equation*}

where we have used $(W^{\ell+1}_A-W_{\Tilde{m}_{\ell+1}})\ubar{\phi}^\ell(x)=(W_A^{\gI^\ell}-W^{\gI^\ell}_{\Tilde{m}_{\ell+1}})\ubar{\phi}'^\ell$ and $\|\ubar{\phi}'^\ell(x)\|_2^2=\frac{\Tilde{m}_\ell}{m_\ell}\|\ubar{\phi}^\ell(x)\|_2^2$.

We do the same computations for $\E_{P,Q}[\|\phi^{\ell+1}_B(x)-\ubar{\phi}^{\ell+1}(x)\|_2^2]$.

Finally,

\begin{equation*}
    \begin{split}
        \E_{P,Q}[\|\phi^{\ell+1}_{M_t}(x)-\ubar{\phi}^{\ell+1}(x)\|_2^2]&\leq t\E_{P,Q}[\|W^{\ell+1}_A\phi^\ell_{M_t}(x)-W_{\Tilde{m}_{\ell+1}}\ubar{\phi}^\ell(x)\|_2^2]+(1-t)\E_{P,Q}[\|W^{\ell+1}_B\phi_{M_t}(x)-W_{\Tilde{m}_{\ell+1}}\ubar{\phi}^\ell(x)\|_2^2]\\
        &\leq 2C_2E_\ell m_{\ell+1}+2C_1\Tilde{m}_\ell\ubar{E}_\ell m_{\ell+1}
    \end{split}
\end{equation*}
where we have used convexity for the first inequality and then the same proof as above for both terms.
This yields $E_{\ell+1}=2C_2E_\ell+2C_1\Tilde{m}_\ell\ubar{E}_\ell$.

\end{proof}

\subsection{Proof of \Cref{normal:newbiglemmasuccessive}}
\label{appendix:new_LMC_normal}

\begin{lemma}[Version of \Cref{new_assumption:1} for normal distribution]
\label{normal:new_assumption1}
Consider $\mu_{\ell}\in \gP_1(\sR^{m_{\ell-1}})$ a multivariate Gaussian distribution with covariance matrix $\Sigma^\ell=\Diag(\lambda^\ell_1 I_{p_{\ell-1}}, \ldots, \lambda^\ell_{\Tilde{m}_\ell}I_{p_{\ell-1}})$ where $p_{\ell-1}=\frac{m_{\ell-1}}{\Tilde{m}_{\ell-1}}$. Suppose that $\frac{1}{m_{\ell-1}}\geq \lambda^\ell_1,\geq\ldots\geq\lambda_{\Tilde{m}_{\ell-1}}^\ell$ with $\Tilde{m}_{\ell-1}\geq 5$. Then, there exists two universal constants $D_3,E_3$ such that $\forall \Tilde{m}_\ell\ge E_3^{\Tilde{m}_{\ell-1}}$ there exists a random empirical measure $\hat{\mu}_{\Tilde{m}}$ with only $\Tilde{m}_\ell$ points such that $\forall m_\ell \geq \Tilde{m}_\ell$

\begin{equation*}
    \E[\gW_2^2(\hat{\mu}^{\gI^{\ell-1}}_A, \hat{\mu}^{\gI^{\ell-1}}_{\Tilde{m}_\ell})]\leq \frac{D_3}{\Tilde{m}_{\ell-1}}\log(\Tilde{m}_\ell)\left(\frac{1}{\Tilde{m}_\ell}\right)^{\nicefrac{2}{\Tilde{m}_{\ell-1}}}
\end{equation*}
In that case:

\begin{equation*}
    \E[\gW_1(\hat{\mu}^{\gI^{\ell-1}}_A, \hat{\mu}^{\gI^{\ell-1}}_{\Tilde{m}_\ell})]\leq \frac{\sqrt{D_3}}{\sqrt{\Tilde{m}_{\ell-1}}}\sqrt{\log(\Tilde{m}_\ell)}\left(\frac{1}{\Tilde{m}_\ell}\right)^{1/\Tilde{m}_{\ell-1}}
\end{equation*}

\end{lemma}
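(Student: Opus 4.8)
The plan is to reduce the statement to Lemma~\ref{expect_wass_normal}, applied in the \emph{reduced} dimension $\tilde m_{\ell-1}$, after identifying the law of the rows of $W_A^\ell$ once the columns within each block of the partition $\gI^{\ell-1}$ have been summed. First I would compute the pushforward $\mu_\ell^{\gI^{\ell-1}}$ of $\mu_\ell=\gN(0,\Sigma^\ell)$ under the map that sums the coordinates inside each block $I_k^{\ell-1}$ of the equi-partition $\gI^{\ell-1}$ of $[m_{\ell-1}]$. Since $\Sigma^\ell=\Diag(\lambda_1^\ell I_{p_{\ell-1}},\ldots,\lambda_{\tilde m_{\ell-1}}^\ell I_{p_{\ell-1}})$ is constant on each block and each block has exactly $p_{\ell-1}=m_{\ell-1}/\tilde m_{\ell-1}$ coordinates, summing the $p_{\ell-1}$ coordinates of block $k$ yields a $\gN(0,p_{\ell-1}\lambda_k^\ell)$ variable, independent across $k$. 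Hence $\mu_\ell^{\gI^{\ell-1}}=\gN\big(0,\Diag(p_{\ell-1}\lambda_1^\ell,\ldots,p_{\ell-1}\lambda_{\tilde m_{\ell-1}}^\ell)\big)$ on $\sR^{\tilde m_{\ell-1}}$, and the hypothesis $\lambda_k^\ell\le 1/m_{\ell-1}$ gives $p_{\ell-1}\lambda_k^\ell\le 1/\tilde m_{\ell-1}$ for all $k$, with the eigenvalues still ordered decreasingly. So $\mu_\ell^{\gI^{\ell-1}}$ satisfies the hypotheses of Lemma~\ref{expect_wass_normal} with ambient dimension $n=\tilde m_{\ell-1}\ge 5$.

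Next I would take $\hat\mu_{\tilde m_\ell}$ to be the empirical measure of $\tilde m_\ell$ i.i.d.\ samples from $\mu_\ell$, drawn freshly and independently of $A$ and $B$; then $\hat\mu_{\tilde m_\ell}^{\gI^{\ell-1}}$ is the empirical measure of $\tilde m_\ell$ i.i.d.\ draws from $\mu_\ell^{\gI^{\ell-1}}$, while $\hat\mu_A^{\gI^{\ell-1}}$ is that of $m_\ell$ i.i.d.\ draws from $\mu_\ell^{\gI^{\ell-1}}$. By the triangle inequality for $\gW_2$ together with $(a+b)^2\le 2a^2+2b^2$,
\begin{equation*}
\gW_2^2\big(\hat\mu_A^{\gI^{\ell-1}},\hat\mu_{\tilde m_\ell}^{\gI^{\ell-1}}\big)\le 2\,\gW_2^2\big(\hat\mu_A^{\gI^{\ell-1}},\mu_\ell^{\gI^{\ell-1}}\big)+2\,\gW_2^2\big(\hat\mu_{\tilde m_\ell}^{\gI^{\ell-1}},\mu_\ell^{\gI^{\ell-1}}\big),
\end{equation*}
and I would bound each expectation on the right with Lemma~\ref{expect_wass_normal}, once with $m=m_\ell$ and once with $m=\tilde m_\ell$; the admissibility condition $m\ge E_2^{\tilde m_{\ell-1}}$ follows from $m_\ell\ge\tilde m_\ell\ge E_3^{\tilde m_{\ell-1}}$ as soon as $E_3\ge E_2$. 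To turn the $m_\ell$ term into a $\tilde m_\ell$ term I would use that $m\mapsto m^{-2/n}\log m$ is nonincreasing for $m\ge e^{n/2}$, which is guaranteed for $m_\ell\ge\tilde m_\ell\ge E_3^{\tilde m_{\ell-1}}$ once we also require $E_3\ge e^{1/2}$ (so $E_3^{\tilde m_{\ell-1}}\ge e^{\tilde m_{\ell-1}/2}$). Collecting constants gives the claimed bound with $D_3=4D_2$ and $E_3=\max(E_2,e^{1/2})$; the $\gW_1$ estimate then follows from $\gW_1\le\gW_2$ (Lemma~\ref{Hölder}) and Jensen's inequality $\E[\gW_1]\le\sqrt{\E[\gW_2^2]}$. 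Throughout, Lemma~\ref{lemma:birkhoff} justifies that these Wasserstein distances between empirical measures are exactly the permutation-minimized costs appearing in Assumption~\ref{new_assumption:1}.

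The computations here are routine; the points needing care are (i) getting the pushforward variances right — the factor $p_{\ell-1}$ is precisely what makes the eigenvalue bound $1/\tilde m_{\ell-1}$ come out, which is what allows Lemma~\ref{expect_wass_normal} to be invoked in the reduced dimension $\tilde m_{\ell-1}$ rather than $m_{\ell-1}$ (this is the whole purpose of the column-summing operation); (ii) keeping the two layer indices $\ell$ and $\ell-1$ straight in the Wasserstein rates; and (iii) choosing $E_3$ large enough to satisfy simultaneously the hypothesis of Lemma~\ref{expect_wass_normal} and the monotonicity threshold used to replace $m_\ell$ by $\tilde m_\ell$. I do not expect any genuine obstacle beyond this bookkeeping, since the key analytic work is already carried out in Lemma~\ref{expect_wass_normal}.
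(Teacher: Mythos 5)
Your proposal is correct and follows essentially the same route as the paper's proof: reduce to the block-summed Gaussian in dimension $\Tilde{m}_{\ell-1}$, apply Lemma~\ref{expect_wass_normal} twice (for $m_\ell$ and $\Tilde{m}_\ell$ points), combine via the triangle inequality with $D_3=4D_2$, replace $m_\ell$ by $\Tilde{m}_\ell$ using monotonicity of $x\mapsto \log(x)x^{-2/\Tilde{m}_{\ell-1}}$ beyond $e^{\Tilde{m}_{\ell-1}/2}$ (hence $E_3=\max\{\sqrt{e},E_2\}$), and get the $\gW_1$ bound from $\gW_1\leq\gW_2$ plus Jensen. Your explicit computation of the pushforward variances $p_{\ell-1}\lambda_k^\ell\leq 1/\Tilde{m}_{\ell-1}$ is in fact slightly more careful than the paper's wording, which states the isotropic case, but it is the same argument.
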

\begin{proof}
    We know that the distribution on the rows of $W^{\gI^{\ell-1}}_A$ in $\sR^{\Tilde{m}_{\ell-1}}$ is multivariate Gaussian with covariance matrix $\frac{I_{\Tilde{m}_{\ell-1}}}{\Tilde{m}_{\ell-1}}$ since each parameters is obtained by summing the $p_{\ell-1}$ corresponding parameters of the row of $W_A^\ell$ which has covariance matrix $\frac{I_{m_{\ell-1}}}{m_{\ell-1}}$ by hypothesis.
    Therefore using \Cref{expect_wass_normal}, we know the  existence of constants $D_2,E_2$ such that if $m_\ell\geq E_2^{\Tilde{m}_{\ell-1}}$,
    $\E[\gW_2^2(\hat{\mu}^{\gI^{\ell-1}}_A, \mu_\ell^{\gI^{\ell-1}})]\leq \frac{D_2}{\Tilde{m}_{\ell-1}}\log(m_\ell)\left(\frac{1}{m_\ell}\right)^{\nicefrac{2}{\Tilde{m}_{\ell-1}}}$.

Therefore considering $\hat{\mu}_{\Tilde{m}_\ell}$ with the same law but only a fixed number $\Tilde{m}_\ell$ of elements, we get for $m_\ell\geq \Tilde{m}_\ell$:

\begin{equation*}
\begin{split}
    &\E[\gW_2^2(\hat{\mu}^{\gI^{\ell-1}}_A, \mu^{\gI^{\ell-1}}_\ell)]\leq \frac{D_2}{\Tilde{m}_{\ell-1}}\log(m_\ell)\left(\frac{1}{m_\ell}\right)^{\nicefrac{2}{\Tilde{m}_{\ell-1}}}\leq \frac{D_2}{\Tilde{m}_{\ell-1}}\log(\Tilde{m}_\ell)\left(\frac{1}{\Tilde{m}_\ell}\right)^{\nicefrac{2}{\Tilde{m}_{\ell-1}}}\\
        &\E[\gW_2^2(\hat{\mu}_{\Tilde{m}_\ell}^{\gI^{\ell-1}}, \mu_\ell^{\gI^{\ell-1}})]\leq \frac{D_2}{\Tilde{m}_{\ell-1}}\log(\Tilde{m}_\ell)\left(\frac{1}{\Tilde{m}_\ell}\right)^{\nicefrac{2}{\Tilde{m}_{\ell-1}}}
\end{split}
\end{equation*}

Indeed, the first inequality can be obtained by noticing that $\left(x\mapsto \log(x)\left(\frac{1}{x}\right)^{\nicefrac{2}{\Tilde{m}_{\ell-1}}}\right)$ is decreasing for $x \geq \sqrt{e}^{\Tilde{m}_{\ell-1}}$ and hence one can just increase the constant $E_3$ considered: taking $E_3=\max\{\sqrt{e},E_2\}$ and $D_3=4D_2$, by triangular inequality, 

\begin{equation*}
    \E[\gW_2^2(\hat{\mu}^{\gI^{\ell-1}}_A, \hat{\mu}^{\gI^{\ell-1}}_{\Tilde{m}_\ell})]\leq 2\left(\E[\gW_2^2(\hat{\mu}^{\gI^{\ell-1}}_A, \mu_\ell^{\gI^{\ell-1}})]+\E[\gW_2^2(\hat{\mu}^{\gI^{\ell-1}}_{\Tilde{m}_\ell}, \mu_\ell^{\gI^{\ell-1}})]\right)\leq \frac{D_3}{\Tilde{m}_{\ell-1}}\log(\Tilde{m}_\ell)\left(\frac{1}{\Tilde{m}_\ell}\right)^{\nicefrac{2}{\Tilde{m}_{\ell-1}}}
\end{equation*}

For the second part of the lemma, just apply \Cref{Hölder} and Jensen inequality.

\end{proof}

\begin{lemma}[Version of \Cref{new_assumption:2} for normal variable]
\label{normal:new_assumption2}
$\forall X \in \sR^{m_{\ell-1}}$ we have:

\begin{equation*}
\begin{split}
    &\E[\|W^\ell_AX\|_2^2]\leq \frac{m_\ell}{m_{\ell-1}}\|X\|_2^2\\
    &\E[\|W_{\Tilde{m}_\ell}X\|_2^2]\leq \frac{m_\ell}{m_{\ell-1}}\|X\|_2^2
\end{split}
\end{equation*}    
\end{lemma}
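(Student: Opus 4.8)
The plan is to reduce both inequalities to a single elementary second-moment identity for Gaussian rows, using that $X$ is deterministic so that the expectation runs only over the draw of the weights.

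First I would treat $W_A^\ell$. Expanding $\|W_A^\ell X\|_2^2 = \sum_{i=1}^{m_\ell}([W_A^\ell]_{i:}\cdot X)^2$ and using linearity of expectation together with the fact that the rows $[W_A^\ell]_{i:}$ are i.i.d.\ $\gN(0, I_{m_{\ell-1}}/m_{\ell-1})$,
\[
\E[\|W_A^\ell X\|_2^2] = \sum_{i=1}^{m_\ell} X^T\,\E\!\left[[W_A^\ell]_{i:}[W_A^\ell]_{i:}^T\right] X = \sum_{i=1}^{m_\ell}\tfrac{1}{m_{\ell-1}}\|X\|_2^2 = \tfrac{m_\ell}{m_{\ell-1}}\|X\|_2^2 ,
\]
which is in fact an equality.

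Second, for $W_{\Tilde m_\ell}$ I would recall from \Cref{normal:new_assumption1} and the Birkhoff step in the proof of \Cref{newbiglemma} that the matrix $W_{\Tilde m_\ell}\in \gM_{m_\ell, m_{\ell-1}}(\sR)$ is obtained from the reduced empirical measure $\hat\mu_{\Tilde m_\ell}$ by taking $p_\ell\eqdef m_\ell/\Tilde m_\ell$ exact copies of each of its $\Tilde m_\ell$ atoms $v_1,\ldots,v_{\Tilde m_\ell}$, which are i.i.d.\ samples from $\mu_\ell = \gN(0, I_{m_{\ell-1}}/m_{\ell-1})$. Grouping the $m_\ell$ rows by their common value gives
\[
\E[\|W_{\Tilde m_\ell}X\|_2^2] = p_\ell\sum_{j=1}^{\Tilde m_\ell}\E[(v_j\cdot X)^2] = p_\ell\,\Tilde m_\ell\,\tfrac{1}{m_{\ell-1}}\|X\|_2^2 = \tfrac{m_\ell}{m_{\ell-1}}\|X\|_2^2 ,
\]
again an equality. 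In particular, specializing \Cref{new_assumption:2} to the normal case this yields $C_2 = 1$.

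There is essentially no obstacle here; the computation is routine. The only point that needs a little care is to invoke the precise structure of $W_{\Tilde m_\ell}$ — namely that its distinct rows are genuine i.i.d.\ draws from $\mu_\ell$, so that the block repetition merely multiplies the per-row contribution by $p_\ell$ — which is exactly how that matrix was produced from $\hat\mu_{\Tilde m_\ell}$ in the preceding arguments.
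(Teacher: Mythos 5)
Your proposal is correct and follows essentially the same route as the paper: a direct row-wise second-moment computation for $W_A^\ell$ (the paper likewise obtains equality with $\tfrac{m_\ell}{m_{\ell-1}}\|X\|_2^2$), and for $W_{\Tilde m_\ell}$ the same reduction to the $\Tilde m_\ell$-row matrix with i.i.d.\ rows of law $\mu_\ell$, the paper phrasing the block-repetition step as $\|W_{\Tilde m_\ell}X\|_2^2=\tfrac{m_\ell}{\Tilde m_\ell}\|W'_{\Tilde m_\ell}X\|_2^2$ where you instead group the cloned rows directly. Your observation that the equi-partition structure (each atom repeated $p_\ell$ times) is the only point requiring care matches the paper's argument, and the conclusion $C_2=1$ is the same.
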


\begin{proof}
    First, given $X\in \sR^{m_{\ell-1}}, \forall i \in [m_\ell], (W^\ell_AX)_i=\sum_{j=1}^{m_{\ell-1}}(W^\ell_A)_{i,j}X_j$ where $(W^\ell_A)_{i,j}$ are iid following $\gN(0,\frac{1}{m_{\ell-1}})$.
    Therefore, $\forall i \in [m_\ell], \E[(W^\ell_AX)_i^2]=\frac{1}{m_{\ell-1}}\|X\|_2^2$.
    Finally,

    \begin{equation*}
\E[\|W^\ell_AX\|_2^2]=\sum_{i=1}^{m_{\ell}}\E[(W_A^\ell X)_i^2]=\frac{m_\ell}{m_{\ell-1}}\|X\|_2^2
    \end{equation*}

    Since the weight matrix associated to $\hat{\mu}_{\Tilde{m}_\ell}$ denoted  $W'_{\Tilde{m}_\ell}\in \gM_{\Tilde{m}_\ell,m_{\ell-1}}(\sR)$ has only $\Tilde{m}_\ell$ raws, we have expanded it to a matrix $W_{\Tilde{m}_\ell}\in \gM_{m_\ell,m_{\ell-1}}(\sR)$ by cloning raws in the same element of the partition $\gI^\ell$ given by the pairing between $\hat{\mu}^{\gI^\ell}_A, \hat{\mu}_{\Tilde{m}_{\ell-1}}^{\gI^\ell}$ that minimizes the Wasserstein distance.
    Since $W'_{\Tilde{m}_\ell}\in \gM_{\Tilde{m}_\ell,m_{\ell-1}}(\sR)$ built in the proof of \Cref{normal:new_assumption1} has the same law on raws $\mu_\ell$ as $W^\ell_A,W^\ell_B$ but with only $\Tilde{m}_{\ell}$ raws, we can use what preceeds to get:
\begin{equation*}
    \forall X \in \sR^{m_{\ell-1}}, \E[\|W'_{\Tilde{m}_{\ell}}X\|_2^2]\leq \frac{\Tilde{m}_\ell}{m_{\ell-1}}\|X\|_2^2
\end{equation*}

    Noting that $\|W_{\Tilde{m}_\ell}X\|_2^2=\frac{m_\ell}{\Tilde{m}_\ell}\|W'_{\Tilde{m}_\ell}X\|_2^2$, we get 

    \begin{equation*}
         \forall X \in \sR^{m_{\ell-1}}, \E[\|W_{\Tilde{m}_\ell}X\|_2^2]\leq \frac{m_\ell}{m_{\ell-1}}\|X\|_2^2
    \end{equation*}
    which concludes the proof of the lemma.
\end{proof}

Having the two assumptions we need, we can prove \Cref{normal:newbiglemmasuccessive}.

We recall it here:

\begin{mdframed}[style=MyFrame2]
    \normalnewbiglemmasuccessive*
\end{mdframed}

\begin{proof}
    From $\E_{x \sim P}[\|x\|_2^2]\leq m_0$ we get immediately \Cref{new_property} at the input layer with $\ubar{E}_0=1, E_0=0$.

    By the recursive relation of \Cref{newbiglemma} and using \Cref{normal:new_assumption1,normal:new_assumption2}, we get \Cref{new_property} at each hidden layer $\ell\in [L]$ with $\Tilde{m}_\ell$ to be chosen later with $m_\ell\geq \Tilde{m}_\ell\geq \min\{5, E_3^{\Tilde{m}_{l-1}}\}$ and:

\begin{equation*}
    \begin{cases}
        \ubar{E}_\ell=1\\
        E_\ell=\sum_{i=1}^\ell 2^{\ell+1-i}D_3\log(\Tilde{m}_i)\left(\frac{1}{\Tilde{m}_i}\right)^{\nicefrac{2}{\Tilde{m}_{i-1}}}\ubar{E}_{i-1}
    \end{cases}
\end{equation*}

Therefore, just take 
$\forall i \in [L], 
\log(\Tilde{m}_i)\left(\frac{1}{\Tilde{m}_i}\right)^{\frac{2}{\Tilde{m}_{i-1}}}\leq \epsilon^2\frac{1}{2^{L+1-i}L}$, i.e. 

\begin{equation*}
    \Tilde{m}_i=\Tilde{\gO}\left(\frac{T_i}{\epsilon}\right)^{\Tilde{m}_{i-1}}
\end{equation*}

where $T_i=\sqrt{2^{L+1-i}L}$ and the notation $\Tilde{\gO}(\cdot)$ hides logarithmic terms.

In that case,

\begin{equation*}
    \begin{cases}
        \ubar{E}_L = 1\\
        E_L = \epsilon^2
    \end{cases}
\end{equation*}

\end{proof}

\subsection{Proof of \Cref{normal:new_theorem_LMC}}
\label{appendix:new_theorem_normal_LMC}
We prove here \Cref{normal:new_theorem_LMC} that we recall:

\begin{mdframed}[style=MyFrame2]
            Under normal initialization of the weights, for $m_1\geq \Tilde{m}_1,\ldots,m_L\geq \Tilde{m}_L$ as defined in \Cref{normal:newbiglemmasuccessive}, $m_0\geq 5$, and under \Cref{loss_convex} we know that $\forall t \in [0,1]$, with $Q$-probability at least $1-\delta_{Q}$, there exists permutations of hidden layers $1,\ldots,L$ of network $B$ that are independent of $t$, such that: 
            \begin{equation*}
                \E_P\left[\gL\left(\hat{f}_{M_t}(x),y\right)\right]\leq t\E_P\left[\gL\left(\hat{f}_A(x),y\right)\right]+
                (1-t)\E_P\left[\gL\left(\hat{f}_B(x),y\right)\right]+\tfrac{4\sqrt{m_{L+1}}}{\delta_Q^2}\epsilon
            \end{equation*}
\end{mdframed}

\begin{proof}

Under assumptions of \Cref{normal:newbiglemmasuccessive}, given $A,B$, we know the existence of (random) permutations of the hidden layers $\Pi_1,\ldots,\Pi_L$ such that for $1\leq \ell \leq L$, denoting $M_t$ the mean network of weight matrix at layer $\ell$: $tW_A^\ell+(1-t)\Pi_\ell W_B^\ell \Pi_{\ell-1}^T$ we know the existence of $\ubar{\phi}^L:\sR^{m_0}\rightarrow \sR^{m_L}$ such that:

\begin{equation*}
\begin{split}
        &\E_{P,Q}[\|\phi_{M_t}^L(x)-\phi_A^L(x)\|_2^2]\leq \epsilon^2 m_L \\
        &\E_{P,Q}[\|\phi_{M_t}^L(x)-\phi_B^L(x)\|_2^2]\leq \epsilon^2 m_L
\end{split}
\end{equation*}

Then, by convexity, we get at the last layer:

\begin{equation*}
\begin{split}
        \E_{P,Q}[\|\left(tW_A^{L+1}+(1-t)W_B^{L+1}\Pi_{L}^T\right)]\phi_{M_t}^L(x)&-tW_A^{L+1}\phi_A^L(x)-(1-t)W_B^{L+1}\Pi_{L}^T\phi_B^L(x)\|_2^2]\\
        &\leq t\E[\|W_A^{L+1}(\phi_{M_t}^{L}(x)-\phi_A^L(x))\|_2^2]\\
        &+(1-t)\E[\|W_B^{L+1}\Pi_L^T(\phi_{M_t}^{L}(x)-\phi_B^L(x))\|_2^2]]\\
        &\leq \epsilon^2 m_{L+1}
\end{split}
\end{equation*}

Finally, by Jensen inequality, 

\begin{equation*}
    \E[\|\left(tW_A^{L+1}+(1-t)W_B^{L+1}\Pi_{L}^T\right)]\phi_{M_t}^L(x)-tW_A^{L+1}\phi_A^L(x)-(1-t)W_B^{L+1}\Pi_{L}^T\phi_B^L(x)\|_2]\leq \sqrt{m_{L+1}}\epsilon
\end{equation*}

Therefore, we get by applying two successive Markov lemma, that with probability at least $1-\delta_Q$ over the choice of the networks $A,B$:

\begin{equation*}
    \E_{x \sim P}[\|\left(tW_A^{L+1}+(1-t)W_B^{L+1}\Pi_{L}^T\right)]\phi_{M_t}^L(x)-tW_A^{L+1}\phi_A^L(x)-(1-t)W_B^{L+1}\Pi_{L}^T\phi_B^L(x)\|_2]\leq \frac{\sqrt{m_{L+1}}\epsilon}{\left(\tfrac{\delta_Q}{2}\right)^2}
\end{equation*}
Indeed remember that we have introduced an intermediate random measure $\hat{\mu}_{\Tilde{m}_\ell}$ which the permutations depend on and that intervenes in the expectation.

Using convexity of the loss and $1$-Lipschitzness we get for all $t\in [0,1]$ that with probability at least $1-\delta_Q$ over the choice of the networks $A,B$:

\begin{equation*}
    \E_{x\sim P}[\gL(f_{M_t}(x),y)]\leq t\E[\gL(f_A(x),y)]+(1-t)\E[\gL(f_B(x),y)]+\frac{4\sqrt{m_{L+1}}\epsilon}{\delta_Q^2}
\end{equation*}

\end{proof}

\subsection{Approximately low dimensional underlying weights distribution}

\subsubsection{Motivation on the structure of the covariance matrix}

\label{appendix:approx_low_dim_cov_matrix}

Remind that we are given a partition of the input layer of the weights $[m_{\ell-1}]$ in $\Tilde{m}_{\ell-1}$ different groups of the same size $\gI^{\ell-1}=\{I^{\ell-1}_1,...,I^{\ell-1}_{\Tilde{m}_{\ell-1}}\}$. Suppose we have already permuted the first layer of network $A$ and $B$ we can suppose that $I^{\ell-1}_1=\{1,\ldots,p_{\ell-1}\},\ldots$ where $p_{\ell-1}:=\frac{m_{\ell-1}}{\Tilde{m}_{\ell-1}}$. We want a covariance matrix that respects the fact that incoming neurons in a given group behave the same. Therefore the covariance matrix must be invariant under the permutations of indices inside a set of the equi-partition.
We will write the Kroenecker product $\otimes$.
\begin{lemma}
    To respect symmetries of the incoming layer, the covariance matrix of weights is necessarily of the form
    \begin{equation*}
        \Sigma_\ell=D_\ell \otimes I_{p_{\ell-1}}+B_\ell \otimes \mathds{1}_{p_{\ell-1}}
    \end{equation*}
    where $D_\ell\in \gM_{\Tilde{m}_{\ell-1}}(\sR)$ is diagonal, $B_\ell\in \gS_{\Tilde{m}_{\ell-1}}(\sR)$ is symmetric.
\end{lemma}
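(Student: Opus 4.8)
The plan is to recognize the lemma as the computation of the fixed‑point subspace of a symmetric matrix under a permutation group action, and to carry it out block by block. First I would fix coordinates: index the $m_{\ell-1}=\Tilde{m}_{\ell-1}p_{\ell-1}$ incoming units by pairs $(a,i)$, where $a\in[\Tilde{m}_{\ell-1}]$ labels the group $I^{\ell-1}_a$ and $i\in[p_{\ell-1}]$ is the position inside that group. I would then make precise what ``respecting the symmetries of the incoming layer'' means: the group $G\eqdef\gS_{p_{\ell-1}}\times\cdots\times\gS_{p_{\ell-1}}$ ($\Tilde{m}_{\ell-1}$ factors) of block‑wise permutations acts on $\sR^{m_{\ell-1}}$ by permuting positions within each group, $g\cdot(a,i)=(a,g_a(i))$, and the requirement is that the law of the neuron's weight vector, hence its covariance, is $G$‑invariant, i.e.\ $P_g\Sigma_\ell P_g^T=\Sigma_\ell$ for all $g\in G$. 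Written entrywise this is $(\Sigma_\ell)_{(a,i),(b,j)}=(\Sigma_\ell)_{(a,g_a(i)),(b,g_b(j))}$ for every $g\in G$; since the claim is only an inclusion (no positive semidefiniteness needed), it suffices to describe all symmetric matrices satisfying this.

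Next I would split along the natural $\Tilde{m}_{\ell-1}\times\Tilde{m}_{\ell-1}$ block decomposition of $\Sigma_\ell$ and treat off‑diagonal and diagonal blocks separately. For $a\neq b$ the permutations $g_a$ and $g_b$ vary independently and $\gS_{p_{\ell-1}}$ is transitive on $[p_{\ell-1}]$, so $(\Sigma_\ell)_{(a,i),(b,j)}$ does not depend on $i$ or $j$; the $(a,b)$ block is therefore a constant matrix $\beta_{ab}\mathds{1}_{p_{\ell-1}}$, and symmetry of $\Sigma_\ell$ gives $\beta_{ab}=\beta_{ba}$. For $a=b$ one has $g_a=g_b$: invariance with $i=j$ together with transitivity of $\gS_{p_{\ell-1}}$ on single positions forces all diagonal entries of the block to a common value $d_a$, while invariance with $i\neq j$ together with $2$‑transitivity of $\gS_{p_{\ell-1}}$ on ordered pairs of distinct positions forces all strictly off‑diagonal entries to a common value $\beta_{aa}$, so the $(a,a)$ block equals $(d_a-\beta_{aa})I_{p_{\ell-1}}+\beta_{aa}\mathds{1}_{p_{\ell-1}}$.

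Finally I would assemble the answer: put $D_\ell\eqdef\Diag\big((d_a-\beta_{aa})_{a\in[\Tilde{m}_{\ell-1}]}\big)\in\gM_{\Tilde{m}_{\ell-1}}(\sR)$ and let $B_\ell\in\gM_{\Tilde{m}_{\ell-1}}(\sR)$ have entries $(B_\ell)_{ab}=\beta_{ab}$, which is symmetric by the previous step. Comparing the $(a,b)$ sub‑block of $D_\ell\otimes I_{p_{\ell-1}}+B_\ell\otimes\mathds{1}_{p_{\ell-1}}$, namely $(D_\ell)_{ab}I_{p_{\ell-1}}+(B_\ell)_{ab}\mathds{1}_{p_{\ell-1}}$, with the blocks computed above (for both $a=b$ and $a\neq b$) yields exactly $\Sigma_\ell=D_\ell\otimes I_{p_{\ell-1}}+B_\ell\otimes\mathds{1}_{p_{\ell-1}}$. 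The only real obstacle is conceptual rather than computational: correctly identifying the invariance group — it must be the \emph{within‑group} permutations of the incoming layer, not permutations mixing different groups and not anything acting on the outgoing index — and noticing that on a diagonal block the row and column permutations are forced to coincide, which is precisely what produces the extra $I_{p_{\ell-1}}$ term next to the $\mathds{1}_{p_{\ell-1}}$ term; once the setup is right, the rest is the routine orbit counting above.
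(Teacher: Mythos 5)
Your proposal is correct and follows essentially the same route as the paper: impose invariance of $\Sigma_\ell$ under block-diagonal permutation matrices $\Pi=\Diag(\Pi_1,\ldots,\Pi_{\Tilde{m}_{\ell-1}})$, then analyze diagonal and off-diagonal $p_{\ell-1}\times p_{\ell-1}$ blocks separately to get $dI+b\mathds{1}$ and constant blocks respectively. You merely spell out the orbit/transitivity argument that the paper leaves implicit, so there is nothing to add.
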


\begin{proof}
    Denoting the matrix of covariances by blocks like:
\begin{equation*}
    \Sigma_\ell=\begin{pmatrix}
A_{11} & A_{12} & ... & A_{1\Tilde{m}_{\ell-1}}\\
A_{21} & A_{22} & ... & A_{2\Tilde{m}_{\ell-1}}\\
... &...&...&...\\
A_{\Tilde{m}_{\ell-1}1} & A_{\Tilde{m}_{\ell-1}2} & ... & A_{\Tilde{m}_{\ell-1}\Tilde{m}_{\ell-1}}
\end{pmatrix}
\end{equation*}

we get the relation $\forall \Pi_1,...,\Pi_{\Tilde{m}_{\ell-1}}\in \gS_{p_{\ell-1}}$ permutation matrices, denoting $\Pi=\Diag(\Pi_1,\ldots,\Pi_{\Tilde{m}_{\ell-1}})$

\begin{equation*}
\Sigma_\ell=\E\left[XX^T\right]=\E\left[(\Pi X)(\Pi X)^T\right]=\Pi\Sigma \Pi^T=\Diag(\Pi_1,...,\Pi_{\Tilde{m}_{\ell-1}})\Sigma\Diag(\Pi_1^T,...,\Pi_{\Tilde{m}_{\ell-1}}^T)
\end{equation*}

Evaluating this relation for any $\Pi_1,\Pi_2,...,\Pi_{\Tilde{m}_{\ell-1}}$ we get $\forall \Pi_1 \in \gS_{p_{\ell-1}}, \Pi_1A_{11}\Pi_1^T=A_{11}$ and therefore $A_{11}$ is of the form $d_{11}I_{p_{\ell-1}}+b_{11}\mathds{1}_{p_{\ell-1}}$. 

We do the same for $A_{ii}, i\geq 2$.

Moreover we get for $A_{12}$ that:

\begin{equation*}
        \forall \Pi_1,\Pi_2\in \gS_{p_{\ell-1}}, \Pi_1A_{12}\Pi_2^T=A_{12}
\end{equation*}

which brings that $A_{12}$ is of the form $b_{12}\mathds{1}_{p_{\ell-1}}$.
We do the same for all $A_{ij}$ where $i\neq j$.
This concludes the proof.

Finally by summing over columns inside the partition we get:

\begin{equation*}
    \Sigma_\ell^{\gI^{\ell-1}}=p_{\ell-1}D_\ell +p_{\ell-1}^2B_\ell 
\end{equation*}

\end{proof}

The model that we chose in \Cref{section:low_dim} is a particular case that corresponds to choosing $D_\ell=\Diag(\lambda^\ell_1, ..., \lambda^\ell_{\Tilde{m}_{\ell-1}})$ and $B_\ell=0$. This is not the most general since it implies independence between weights coming from different groups but is sufficient to show the influence of low feature dimensionality on LMC efficiency.

\subsubsection{Non diagonal model}

A natural direction is to consider the case where the matrix $B_\ell$ is non zero.
Since $\Sigma_\ell^{\gI^{\ell-1}}$  is symmetric positive, one can orthogonally change the basis where it becomes diagonal. 
The arguments to prove \Cref{new_assumption:1} remains unchanged since they rely exclusively on the eigenvalues of $\Sigma_\ell^{\gI^{\ell-1}}$.

However one important point to check, is about \Cref{new_assumption:2}. Indeed having covariance between weights of a given block implies that the errors at all neurons of a given layer may sum up. Take for example $X=(1,\ldots,1)^T, D_\ell=0, B_\ell=\mathds{1}_{\Tilde{m}_{\ell-1}}$. In the general case, the constant $C_2$ in \Cref{new_assumption:2} will be a depending on the matrix $B_\ell$ and therefore potentially on the dimension. We propose a way to address this issue in \Cref{apprendix:discuss_low_dim}.

\subsection{Proof of \Cref{low_dim:newtheoremLMC}}
\label{appendix:proof_low_dim_LMC}

\begin{lemma}[Version of \Cref{new_assumption:1} for approximately low dimensional distribution]
\label{low_dim:new_assumption1}

Denote $\mu_\ell$ the law of a multivariate normal distribution of covariance matrix $\Diag(\lambda_1^\ell I_{p_{\ell-1}}, \ldots ,\lambda^\ell_{\Tilde{m}_{\ell-1}}I_{p_{\ell-1}})$ where $p_{\ell-1}=\frac{m_{\ell-1}}{\Tilde{m}_{\ell-1}}$ and $\frac{1}{k_{\ell-1}}\geq \lambda_1^\ell\ldots\geq\lambda^\ell_{\Tilde{m}_{\ell-1}}$. Let $k_{\ell-1} \geq 5$ and denote $\eta:=\frac{\sqrt{\sum_{i=k_{\ell-1}+1}^m\lambda_i^\ell}}{4\sqrt{\sum_{i=1}^{k_{\ell-1}}\lambda_i^\ell}}$.

There exists two universal constants $D'_3,E'_3$ such that $\forall \Tilde{m}_\ell\geq 1$ such that $E_3^{'k_{\ell-1}}\leq \Tilde{m}_\ell \leq \eta^{-k_{\ell-1}}$, there exists a random empirical measure $\hat{\mu}_{\Tilde{m}_\ell}$ with only $\Tilde{m}_{\ell}$ points such that $\forall m_\ell \geq 1$ such that $\Tilde{m}_\ell\leq m_\ell\leq \eta^{-k_{\ell-1}}$ we have:

\begin{equation*}
    \E[\gW_2^2(\hat{\mu}^{\gI^{\ell-1}}_A, \hat{\mu}^{\gI^{\ell-1}}_{\Tilde{m}_\ell})]\leq \frac{D'_3}{k_{\ell-1}}\log(\Tilde{m}_{\ell})\left(\frac{1}{\Tilde{m}_\ell}\right)^{\nicefrac{2}{k_{\ell-1}}}
\end{equation*}
\end{lemma}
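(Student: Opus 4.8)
The plan is to transcribe the proof of \Cref{normal:new_assumption1} almost verbatim, replacing the appeal to \Cref{expect_wass_normal} (the convergence rate for $\gN(0,I_n/n)$) by the low‑dimensional analogue \Cref{expect_wass_lowdim}. First I would identify the law of the block‑summed rows. The $j$‑th coordinate of a row of $W_A^{\gI^{\ell-1}}$ is the sum of the $p_{\ell-1}$ i.i.d.\ centered Gaussian entries of block $j$, each of variance $\lambda_j^\ell$, so the rows of $W_A^{\gI^{\ell-1}}$ are i.i.d.\ samples from $\mu_\ell^{\gI^{\ell-1}} = \gN\bigl(0,\Diag(p_{\ell-1}\lambda_1^\ell,\ldots,p_{\ell-1}\lambda_{\Tilde m_{\ell-1}}^\ell)\bigr)$ on $\sR^{\Tilde m_{\ell-1}}$. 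I would then check that this rescaled diagonal covariance still meets the hypotheses of \Cref{expect_wass_lowdim} with $n=\Tilde m_{\ell-1}$ and $k=k_{\ell-1}$: the eigenvalues stay nonincreasing, their top‑$k_{\ell-1}$ sum is at most $1$ (this uses the bound on $\lambda_1^\ell$ together with $p_{\ell-1}=m_{\ell-1}/\Tilde m_{\ell-1}$), and the quantity $\tfrac{\sqrt{\sum_{i>k_{\ell-1}}p_{\ell-1}\lambda_i^\ell}}{4\sqrt{\sum_{i\le k_{\ell-1}}p_{\ell-1}\lambda_i^\ell}}$ is scale‑invariant, hence equals the $\eta$ in the statement.

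Next, \Cref{expect_wass_lowdim} supplies two universal constants $D_2',E_2'$ such that, for every integer $m$ with $E_2'^{\,k_{\ell-1}}\le m\le \eta^{-k_{\ell-1}}$, a random empirical measure built from $m$ i.i.d.\ block‑summed draws from $\mu_\ell$, which I denote $\hat\mu_m^{\gI^{\ell-1}}$ (the case $m=m_\ell$ being precisely $\hat\mu_A^{\gI^{\ell-1}}$), satisfies $\E[\gW_2^2(\hat\mu_m^{\gI^{\ell-1}},\mu_\ell^{\gI^{\ell-1}})]\le \tfrac{D_2'}{k_{\ell-1}}\log(m)\bigl(\tfrac1m\bigr)^{2/k_{\ell-1}}$. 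I would then fix $\Tilde m_\ell$ with $E_3'^{\,k_{\ell-1}}\le \Tilde m_\ell\le\eta^{-k_{\ell-1}}$, where $E_3':=\max\{\sqrt e,E_2'\}$, and let $\hat\mu_{\Tilde m_\ell}$ be such a random block‑summed empirical measure with only $\Tilde m_\ell$ points, drawn independently of $A$ and $B$. Since $x\mapsto \log(x)\bigl(\tfrac1x\bigr)^{2/k_{\ell-1}}$ is nonincreasing on $[\sqrt e^{\,k_{\ell-1}},\infty)$, for any $m_\ell$ with $\Tilde m_\ell\le m_\ell\le\eta^{-k_{\ell-1}}$ both $\E[\gW_2^2(\hat\mu_A^{\gI^{\ell-1}},\mu_\ell^{\gI^{\ell-1}})]$ and $\E[\gW_2^2(\hat\mu_{\Tilde m_\ell}^{\gI^{\ell-1}},\mu_\ell^{\gI^{\ell-1}})]$ are bounded by $\tfrac{D_2'}{k_{\ell-1}}\log(\Tilde m_\ell)\bigl(\tfrac1{\Tilde m_\ell}\bigr)^{2/k_{\ell-1}}$. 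Finally, the triangle inequality for $\gW_2$ together with $(a+b)^2\le 2a^2+2b^2$ gives $\gW_2^2(\hat\mu_A^{\gI^{\ell-1}},\hat\mu_{\Tilde m_\ell}^{\gI^{\ell-1}})\le 2\gW_2^2(\hat\mu_A^{\gI^{\ell-1}},\mu_\ell^{\gI^{\ell-1}})+2\gW_2^2(\hat\mu_{\Tilde m_\ell}^{\gI^{\ell-1}},\mu_\ell^{\gI^{\ell-1}})$; taking expectations yields the claimed bound with $D_3':=4D_2'$. If a companion $\gW_1$ estimate is wanted it follows at once by \Cref{Hölder} and Jensen's inequality, exactly as in \Cref{normal:new_assumption1}.

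The only genuinely delicate point is the normalization bookkeeping in the first step: one must verify that the column‑summing operation sends the assumed eigenvalue bounds into precisely the regime where \Cref{expect_wass_lowdim} is applicable, and then keep the nested constraints $E_3'^{\,k_{\ell-1}}\le \Tilde m_\ell\le m_\ell\le\eta^{-k_{\ell-1}}$ simultaneously satisfied, so that the $\bigl(\tfrac1m\bigr)^{2/k_{\ell-1}}$ rate is valid for every admissible $m_\ell$ and so that replacing $m_\ell$ by $\Tilde m_\ell$ on the right‑hand side is legitimate. Everything past that is a mechanical copy of the argument for \Cref{normal:new_assumption1}.
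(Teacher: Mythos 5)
Your proposal is correct and follows essentially the same route as the paper, whose proof of \Cref{low_dim:new_assumption1} is literally ``repeat the proof of \Cref{normal:new_assumption1} (triangle inequality, monotonicity in $m$, $D_3'=4D_2'$, $E_3'=\max\{\sqrt{e},E_2'\}$) but invoke \Cref{expect_wass_lowdim} instead of \Cref{expect_wass_normal}.'' You merely make explicit the bookkeeping the paper leaves implicit — that the block-summed rows are $\gN\bigl(0,\Diag(p_{\ell-1}\lambda_1^\ell,\ldots,p_{\ell-1}\lambda_{\Tilde m_{\ell-1}}^\ell)\bigr)$, that the top-$k_{\ell-1}$ eigenvalue sum is at most $1$ under the normalization of \Cref{section:low_dim}, and that $\eta$ is invariant under this rescaling — which is a faithful filling-in rather than a different argument.
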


\begin{proof}
    We do exactly the same as for the proof of \Cref{normal:new_assumption1}, i.e. a triangular inequality but now we use rate of convergence of empirical measures in Wasserstein distance with approximately low dimensional support as expressed in \Cref{expect_wass_lowdim}.
\end{proof}

\begin{lemma}[Version of \Cref{new_assumption:2} for approximately low dimensional distribution]
\label{low_dim:new_assumption2}

Denote $\mu_\ell$ the law of a multivariate normal distribution of covariance matrix $\Diag(\lambda^\ell_1I_{p_{\ell-1}}, \ldots ,\lambda^\ell_{\Tilde{m}_{\ell-1}}I_{p_{\ell-1}})$ where $p_{\ell-1}=\frac{m_{\ell-1}}{\Tilde{m}_{\ell-1}}$ and $\frac{1}{k_{\ell-1}}\geq \lambda_1^\ell\ldots\geq\lambda^\ell_{\Tilde{m}_{\ell-1}}$. $\forall X \in \sR^{m_{\ell-1}}$ we have:

\begin{equation*}
\begin{split}
    &\E[\|W^\ell_AX\|_2^2]\leq \frac{\Tilde{m}_{\ell-1}}{k_{\ell-1}}\|X\|_2^2\\
    &\E[\|W^\ell_{\Tilde{m}_{\ell}}X\|_2^2]\leq \frac{\Tilde{m}_{\ell-1}}{k_{\ell-1}}\|X\|_2^2
\end{split}
\end{equation*}    
\end{lemma}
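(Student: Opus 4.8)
The plan is to mirror the proof of Lemma~\ref{normal:new_assumption2}, the only difference being that a row of $W_A^\ell$ is now block-wise Gaussian with covariance $\Sigma^{\ell-1}=\Diag(\lambda^\ell_1 I_{p_{\ell-1}},\ldots,\lambda^\ell_{\Tilde{m}_{\ell-1}} I_{p_{\ell-1}})$ instead of isotropic. First I would fix a deterministic $X\in\sR^{m_{\ell-1}}$ and expand, by linearity of expectation and independence of the rows of $W_A^\ell$, $\E[\|W_A^\ell X\|_2^2]=\sum_{i=1}^{m_\ell}\E\big[\langle [W_A^\ell]_{i:},X\rangle^2\big]$. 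Since $[W_A^\ell]_{i:}\sim\gN(0,\Sigma^{\ell-1})$ we get $\E\big[\langle [W_A^\ell]_{i:},X\rangle^2\big]=X^T\Sigma^{\ell-1}X=\sum_{k=1}^{\Tilde{m}_{\ell-1}}\lambda^\ell_k\sum_{j\in I^{\ell-1}_k}X_j^2\le \lambda^\ell_1\|X\|_2^2$, where the inequality just uses that $\lambda^\ell_1$ is the largest eigenvalue of $\Sigma^{\ell-1}$ and that the groups $I^{\ell-1}_k$ partition $[m_{\ell-1}]$. Plugging in the normalization hypothesis on $\lambda^\ell_1$ and summing the $m_\ell$ identical row-contributions gives the asserted bound for $W_A^\ell$; the same computation applies verbatim to any matrix whose rows are i.i.d.\ from $\mu_\ell$.

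It remains to handle $W_{\Tilde{m}_\ell}$. As in the proof of Lemma~\ref{normal:new_assumption2}, transposed to the low-dimensional construction of Lemma~\ref{low_dim:new_assumption1}, $\hat{\mu}_{\Tilde{m}_\ell}$ is realized by a matrix $W'_{\Tilde{m}_\ell}\in\gM_{\Tilde{m}_\ell,m_{\ell-1}}(\sR)$ whose $\Tilde{m}_\ell$ rows are i.i.d.\ with law $\mu_\ell$, and $W_{\Tilde{m}_\ell}\in\gM_{m_\ell,m_{\ell-1}}(\sR)$ is obtained from it by cloning each row $\tfrac{m_\ell}{\Tilde{m}_\ell}$ times according to the equi-partition $\gI^\ell$ produced by the optimal Wasserstein pairing. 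Applying the previous paragraph to $W'_{\Tilde{m}_\ell}$ bounds $\E[\|W'_{\Tilde{m}_\ell}X\|_2^2]$ in the same way, and since each of its rows is repeated $\tfrac{m_\ell}{\Tilde{m}_\ell}$ times we have the deterministic identity $\|W_{\Tilde{m}_\ell}X\|_2^2=\tfrac{m_\ell}{\Tilde{m}_\ell}\|W'_{\Tilde{m}_\ell}X\|_2^2$; chaining the two estimates yields the same bound as for $W_A^\ell$.

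There is no genuine obstacle — this is a one-line second-moment computation for Gaussians — so the care is entirely in the bookkeeping: (i) collapsing the quadratic form $X^T\Sigma^{\ell-1}X$ to the \emph{top} eigenvalue $\lambda^\ell_1$ rather than to the trace of $\Sigma^{\ell-1}$, which is what keeps the bound linear in $\|X\|_2^2$ with the scaling dictated by the covariance normalization of Section~\ref{section:low_dim}; and (ii) tracking the row-cloning factor $\tfrac{m_\ell}{\Tilde{m}_\ell}$ so that the estimate for the $\Tilde{m}_\ell$-point measure transfers to its $m_\ell$-row expansion. This is precisely the constant that reappears as $C_2$ when the lemma is fed into Lemma~\ref{newbiglemma}.
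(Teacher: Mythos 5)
Your proposal is correct and follows essentially the same route as the paper: a per-row second-moment computation (here written as bounding the quadratic form $X^T\Sigma^{\ell-1}X$ by the top eigenvalue $\lambda^\ell_1$, which is exactly what the paper means by ``notice that $\lambda_1\leq \frac{1}{k_{\ell-1}}$ and repeat the proof of \Cref{normal:new_assumption2}''), followed by the row-cloning identity $\|W_{\Tilde{m}_\ell}X\|_2^2=\tfrac{m_\ell}{\Tilde{m}_\ell}\|W'_{\Tilde{m}_\ell}X\|_2^2$ to transfer the estimate to the expanded matrix. The only imprecision — summing the $m_\ell$ row contributions actually gives $m_\ell\,\lambda^\ell_1\|X\|_2^2$, which matches the displayed bound only under the main-text normalization $\lambda^\ell_1\leq\tfrac{1}{m_{\ell-1}}\tfrac{\Tilde{m}_{\ell-1}}{k_{\ell-1}}$, i.e.\ with the $\tfrac{m_\ell}{m_{\ell-1}}$ factor of \Cref{new_assumption:2} and $C_2=\tfrac{\Tilde{m}_{\ell-1}}{k_{\ell-1}}$ — is an inconsistency already present in the paper's own statement, not a gap in your argument.
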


\begin{proof}
    Just notice that $\lambda_1\leq \frac{1}{k_{\ell-1}}$ and repeat the same steps as in the proof of \Cref{normal:new_assumption2}.
\end{proof}

We will now re-state and prove \Cref{low_dim:newtheoremLMC}:

\begin{mdframed}[style=MyFrame2]
    \begin{theorem}      
    Under \Cref{approx_lowdim,loss_convex}, given $\epsilon > 0$, if $em_0\geq 5$ there exists minimal widths $\Tilde{m}_1, \ldots, \Tilde{m}_L$ such that if $\eta^{-k_0}\geq m_1\geq \Tilde{m}_1,\ldots,\eta^{-k_{L-1}}\geq m_L\geq \Tilde{m}_L$, \Cref{new_property} is verified at the last hidden layer $L$ for $\ubar{E}_L=1,E_L=\epsilon^2$.
    Moreover, $\forall \ell \in [L], \,\exists T'_\ell$ which does only depend on $L,e,\ell,$ such that one can define recursively $\Tilde{m}_\ell$ as
    \begin{equation*}
\Tilde{m}_{\ell}=\Tilde{\gO}\left(\frac{T'_\ell}{\epsilon}\right)^{k_{\ell-1}}=\Tilde{\gO}\left(\frac{T'_\ell}{\epsilon}\right)^{e\Tilde{m}_{\ell-1}}
    \end{equation*}
    where $\Tilde{m}_0=m_0$. Moreover $\forall t \in [0,1]$, with $Q$-probability at least $1-\delta_{Q}$, there exists permutations of hidden layers $1,\ldots,L$ of network $B$ s.t., 
    \begin{equation*}
\E_P\left[\gL\left(\hat{f}_{M_t}(x),y\right)\right]\leq t\E_P\left[\gL\left(\hat{f}_A(x),y\right)\right]
+(1-t)\E_P\left[\gL\left(\hat{f}_B(x),y\right)\right]+\tfrac{4\sqrt{m_{L+1}}}{\sqrt{e}\delta_Q^2}\epsilon
    \end{equation*}
    \end{theorem}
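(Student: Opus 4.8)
The plan is to mimic the two-step argument of the Gaussian case — \Cref{normal:newbiglemmasuccessive} followed by \Cref{normal:new_theorem_LMC} — but feeding in the low-dimensional Wasserstein rate \Cref{expect_wass_lowdim} (via \Cref{low_dim:new_assumption1}) in place of \Cref{expect_wass_normal}, and carrying along the extra factor $1/e$ that now appears in the analogue of \Cref{new_assumption:2}.

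\emph{Step 1 (assumptions at each layer).} First I would check \Cref{new_property} at the input: with $\gI^0$ the partition into singletons (so $\Tilde m_0 = m_0$) and $\ubar\phi^0(x) = \phi_A^0(x) = \Tilde\phi_B^0(x) = \phi_{M_t}^0(x) = x$, the hypothesis $\E_P\|x\|_2^2 \le m_0$ gives the property with $\ubar E_0 = 1$, $E_0 = 0$; the hypothesis $e m_0 \ge 5$ is exactly $k_0 = e\Tilde m_0 \ge 5$, so the low-dimensional lemmas apply from layer $1$ on. Next, \Cref{low_dim:new_assumption1} and \Cref{low_dim:new_assumption2} — proved just as their Gaussian counterparts \Cref{normal:new_assumption1,normal:new_assumption2}, pushing the block-constant covariance through the column-summing operation of \Cref{section:preliminaries} and invoking respectively \Cref{expect_wass_lowdim} and the eigenvalue bound $\lambda_1^\ell \le \tfrac1{m_{\ell-1}}\tfrac{\Tilde m_{\ell-1}}{k_{\ell-1}}$ — supply, for the $\ell \to \ell+1$ application of \Cref{newbiglemma}, the constants $C_2 = \Tilde m_\ell/k_\ell = 1/e$ and $C_1 = \tfrac{D_3'}{k_\ell}\log(\Tilde m_{\ell+1})\big(\tfrac1{\Tilde m_{\ell+1}}\big)^{2/k_\ell}$, valid as soon as $k_\ell \ge 5$, $(E_3')^{k_\ell} \le \Tilde m_{\ell+1} \le \eta^{-k_\ell}$ and $m_{\ell+1} \le \eta^{-k_\ell}$.

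\emph{Step 2 (unrolling the recursion).} Applying \Cref{newbiglemma} $L$ times: $\ubar E_{\ell+1} = C_2 \ubar E_\ell$ gives $\ubar E_\ell = e^{-\ell} \le 1$, so \Cref{new_property} holds at every layer with $\ubar E_\ell = 1$; and $E_{\ell+1} = 2C_2 E_\ell + 2C_1\Tilde m_\ell \ubar E_\ell$ unrolls to $E_L \le \sum_{\ell=1}^L (2/e)^{L-\ell}\, 2 C_1^{(\ell)} \Tilde m_{\ell-1}$, where $C_1^{(\ell)} = \tfrac{D_3'}{k_{\ell-1}}\log(\Tilde m_\ell)\big(\tfrac1{\Tilde m_\ell}\big)^{2/k_{\ell-1}}$ is the constant of the $(\ell-1)\to\ell$ step. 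Forcing the $\ell$-th summand below $\epsilon^2/L$ and using the cancellation $\Tilde m_{\ell-1}/k_{\ell-1} = 1/e$ — which is why $\Tilde m_{\ell-1}$ disappears from the resulting constant — reduces to $\big(\tfrac1{\Tilde m_\ell}\big)^{2/k_{\ell-1}} \le \tfrac{\epsilon^2 e}{2 L D_3' (2/e)^{L-\ell}\log \Tilde m_\ell}$, i.e. $\Tilde m_\ell = \Tilde\gO\big((T'_\ell/\epsilon)^{k_{\ell-1}}\big) = \Tilde\gO\big((T'_\ell/\epsilon)^{e\Tilde m_{\ell-1}}\big)$ with $T'_\ell$ of the order $\sqrt{L D_3' (2/e)^{L-\ell}/e}$ up to logarithmic factors, hence depending only on $L, e, \ell$. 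For $\epsilon$ small this choice automatically satisfies $\Tilde m_\ell \ge (E_3')^{k_{\ell-1}}$, and combined with the standing hypothesis $\eta^{-k_{\ell-1}} \ge m_\ell \ge \Tilde m_\ell$ all constraints of \Cref{low_dim:new_assumption1} are met at each layer; so \Cref{new_property} holds at layer $L$ with $\ubar E_L = 1$ and $E_L = \epsilon^2$.

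\emph{Step 3 (from \Cref{new_property} to the loss bound) and the main difficulty.} The endgame is verbatim the proof of \Cref{normal:new_theorem_LMC}, with the single change that the output-layer estimate now comes from \Cref{low_dim:new_assumption2} and so carries $C_2 = 1/e$: \Cref{new_property} at layer $L$ gives $\E_{P,Q}\|\phi_{M_t}^L(x)-\phi_A^L(x)\|_2^2 \le 4\epsilon^2 m_L$ and likewise for $B$, then pushing through $W^{L+1}$ and using convexity in $t$ gives $\E_{P,Q}\big\|\hat f_{M_t}(x) - t\hat f_A(x) - (1-t)\hat f_B(x)\big\|_2^2 \le \tfrac{C}{e}\epsilon^2 m_{L+1}$ for an absolute $C$, hence by Jensen $\E_{P,Q}\|\cdot\|_2 \le \tfrac{\sqrt C}{\sqrt e}\,\epsilon\sqrt{m_{L+1}}$; two successive Markov inequalities — over the draw of $A,B$ and over the auxiliary measure $\hat\mu_{\Tilde m_\ell}$ the permutations depend on — upgrade this, after the harmless rescaling of $\epsilon$ absorbing $\sqrt C$, to: with $Q$-probability $\ge 1-\delta_Q$, $\E_P\|\cdot\|_2 \le \tfrac{4\sqrt{m_{L+1}}}{\sqrt e\,\delta_Q^2}\epsilon$; finally convexity and $1$-Lipschitzness of $\gL(\cdot,y)$ (\Cref{loss_convex}) give $\gL(\hat f_{M_t}(x),y) \le t\gL(\hat f_A(x),y) + (1-t)\gL(\hat f_B(x),y) + \|\hat f_{M_t}(x) - t\hat f_A(x) - (1-t)\hat f_B(x)\|_2$, and taking $\E_P$ finishes. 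The main obstacle is the bookkeeping of Step 2: one must pick $\Tilde m_\ell$ large enough for the low-dimensional Wasserstein rate to be in force ($\Tilde m_\ell \ge (E_3')^{k_{\ell-1}}$) yet small enough to stay in the low-dimensional regime ($\Tilde m_\ell \le \eta^{-k_{\ell-1}}$), while exploiting $k_{\ell-1} = e\Tilde m_{\ell-1}$ so that $T'_\ell$ never inherits a dependence on the previous width; all of this is possible only when $\eta$ is small (\Cref{approx_lowdim}), which is precisely what makes the admissible window for the widths nonempty.
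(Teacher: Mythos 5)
Your proposal follows essentially the same route as the paper's proof: verify \Cref{new_property} at the input layer, feed the low-dimensional analogues of the two assumptions (\Cref{low_dim:new_assumption1,low_dim:new_assumption2}, i.e. $C_1=\tfrac{D'_3}{k_{\ell-1}}\log(\Tilde{m}_\ell)\bigl(\tfrac{1}{\Tilde{m}_\ell}\bigr)^{2/k_{\ell-1}}$ and $C_2=\nicefrac{1}{e}$) into \Cref{newbiglemma} applied $L$ times, choose $\Tilde{m}_\ell$ via the cancellation $\Tilde{m}_{\ell-1}/k_{\ell-1}=\nicefrac{1}{e}$ so that $E_L\leq \epsilon^2$ with $T'_\ell$ depending only on $L,e,\ell$, and conclude verbatim as in \Cref{normal:new_theorem_LMC} with the extra $\nicefrac{1}{\sqrt{e}}$. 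The one slip is immaterial: since $e\in(0,1)$ the recursion gives $\ubar{E}_\ell=e^{-\ell}\geq 1$ rather than $\leq 1$, but this factor depends only on $e$ and $L$ and is simply absorbed into $T'_\ell$, matching the paper's own bookkeeping.
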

\end{mdframed}

\begin{proof}
    We just need to prove the first part of the theorem as proving the similarity of loss is exactly the same as in the proof of \Cref{normal:new_theorem_LMC} when we have proved that \Cref{new_property} holds at layer $L$ with $E_L=\epsilon^2$. The only change comes from the constant $C_2$ in \Cref{low_dim:new_assumption2} which is not $1$ anymore but $\frac{1}{e}$, hence the additional factor $e$.

To prove \Cref{new_property} at layer $L$ with $E_L=\epsilon^2$ we just combine $L$ different times the two previous lemma \Cref{low_dim:new_assumption1,low_dim:new_assumption2} and \Cref{newbiglemma}.

\Cref{low_dim:new_assumption2} brings that at layer $\ell$ the constant $C_2$ is $\frac{\Tilde{m}_{\ell-1}}{k_{\ell-1}}\leq\frac{1}{e}$ by \Cref{approx_lowdim}

Moreover, \Cref{low_dim:new_assumption1} brings that at each layer $\ell$, $C_1=\frac{D'_3}{k_{\ell-1}}\log(\Tilde{m}_\ell)\left(\frac{1}{\Tilde{m}_\ell}\right)^{2/k_{\ell-1}}$

It brings:

\begin{equation*}
    \begin{cases}
        \ubar{E}_{i+1}=\frac{1}{e}\ubar{E}_i\\
        E_{i+1}=\frac{2}{e}E_i+2\frac{D'_3}{k_{
        i}}\log(\Tilde{m}_{i+1})\left(\frac{1}{\Tilde{m}_{i+1}}\right)^{2/k_{i}}\Tilde{m}_i\ubar{E}_i
    \end{cases}
\end{equation*}

From there we see that if we have chosen at each layer 

\begin{equation*}
D'_3\log(\Tilde{m}_{i+1})\left(\frac{1}{\Tilde{m}_{i+1}}\right)^{2/k_{i}}= \frac{\sqrt{\frac{L2^{L-i+1}}{e^L}}}{\epsilon^2}
\end{equation*}

if 

\begin{equation*}
    \Tilde{m}_i=\Tilde{\gO}\left(\frac{T'_i}{\epsilon}\right)^{e\Tilde{m}_{i-1}}
\end{equation*}
where $\Tilde{\gO}(\cdot)$ hides logarithmic terms and we define $T'_i=\sqrt{\frac{D'_3e^L}{L2^{L-i+1}}}$

\end{proof}

\subsection{Proof of \Cref{theorem:lower_bound}}
\label{appendix:lower_bound}

\begin{mdframed}[style=MyFrame2]
    \theoremlowerbound*
\end{mdframed}

\begin{proof}
   First since $\Sigma$ is full rank we see that when writing $\Sigma=ODO^T$ where $OO^T=I_{\Tilde{n}}$, the problem is equivalent to consider the matrices $W_AO, W_BO$ and $\Sigma=D$. In that case, the raws of $W_A,W_B$ are still i.i.d. and follow the same law as $\mu$ modulo a non-degenerated dilatation. We can then still assume $\frac{d\mu}{d\Leb}\leq F_1$ for a certain constant $F_1$.
    
    Let $\tau=\frac{1}{2}$. $\forall S \subset \sR^n$ a Borel set such that $\mu(S)\geq 1-\tau$, we know that $\Leb(S)\geq \frac{1-\tau}{F_1}$.
    In that case, applying \Cref{lemma:packing2} we get $\gN_\epsilon(S)\geq \left(\frac{1}{\epsilon}\right)^n\frac{\frac{1-\tau}{F_1}}{\Leb(\gB_2(0,1)}$.
Denote $F_2=\left(\frac{\frac{1-\tau}{F_1}}{\Leb(\gB_2(0,\epsilon)}\right)^{\nicefrac{1}{n}}$. Using notations of \citet{weed2019sharp} we get $\gN_\epsilon(\mu,\tau)\geq \left(\frac{F_2}{\epsilon}\right)^{n}$.

Applying Proposition $6$ from \citet{weed2019sharp} we get that 

\begin{equation*}
    \gW_2^2(\hat{\mu}_A, \mu)\geq F_3\left(\frac{1}{m}\right)^{\nicefrac{2}{n}}
\end{equation*}

Finally noticing that $\E_{W_B}[\hat{\mu}_B]=\mu$ and applying \Cref{lemma:villani}, we get that:

\begin{equation*}
    \E_{W_A,W_B}[\gW_2^2(\hat{\mu}_A,\hat{\mu}_B)] \geq \E_{W_A}[\gW_2^2(\hat{\mu}_A,\E_{W_B}[\hat{\mu}_B])]\geq  F_3\left(\frac{1}{m}\right)^{\nicefrac{2}{n}}
\end{equation*}

Finally, remark that sinc $\Sigma=D=\Diag(\lambda_1,\ldots,\lambda_{\Tilde{n}})$ and noting $\lambda(\Sigma)=\min\{d_i, 1\leq i\leq n\}> 0$ the smallest eigenvalue of $\Sigma$,

\begin{equation*}
    \begin{split}
        \E_{W_A,W_B}[\min_{\Pi\in \gS_m}\E_{x\sim P}\|(W_A-\Pi W_B)x\|_2^2]&\leq \E_{W_A,W_B}[\min_{\Pi\in \gS_m}\E_{x\sim P}\tr((W_A-\Pi W_B)^T(W_A-\Pi W_B)D)]\\
        &\geq \lambda(\Sigma)\E_{W_A,W_B}[\min_{\Pi\in \gS_m}\E_{x\sim P}\|(W_A-\Pi W_B)\|_2^2]\\
        &\geq \lambda(\Sigma)F_3\left(\frac{1}{m}\right)^{\nicefrac{2}{n}}
    \end{split}
\end{equation*}
\end{proof}

\subsection{Discussion about a model with no growth in the width needed}
\label{apprendix:discuss_low_dim}

For proving \Cref{low_dim:newtheoremLMC}, we have used constants $C_1,C_2$ in \Cref{newbiglemma} given by \Cref{low_dim:new_assumption1,low_dim:new_assumption2}.
However we would like to highlight that \Cref{low_dim:new_assumption2} is very sub-optimal, though it can not really be improved in the general case. Indeed, $\frac{1}{e^\ell}$ grows to infinity while $\E[\|\phi^\ell(x)\|_2^2]$ is supposed to remain bounded for $1\leq \ell$.
Therefore from now on suppose that we have the following version of \Cref{low_dim_no_growth:new_assumption2}:
\begin{lemma}[Extending \Cref{new_assumption:2} for approximately low dimensional distribution]
\label{low_dim_no_growth:new_assumption2}

Denote $\mu_\ell$ the law of a multivariate normal distribution of covariance matrix $\Diag(\lambda_1^\ell I_{p_{\ell-1}}, \ldots ,\lambda^\ell_{\Tilde{m}_{\ell-1}}I_{p_{\ell-1}})$ where $p_{\ell-1}=\frac{m_{\ell-1}}{\Tilde{m}_{\ell-1}}$ and $\frac{1}{k_{\ell-1}}\geq \lambda_1^1\ldots\geq\lambda_{\Tilde{m}_{\ell-1}}^\ell$. Suppose we have:

\begin{equation*}
\begin{split}
    \E[\|W_A(\phi^{\ell-1}_B(x)-\phi_A^{\ell-1}(x)\|_2^2]\leq  \frac{m_\ell}{m_{\ell-1}}\E[\|\phi^{\ell-1}_B(x)-\phi_A^{\ell-1}(x)]\|_2^2
\end{split}
\end{equation*}    
\end{lemma}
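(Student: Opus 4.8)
The plan is to read $\E[\|W_A^\ell X\|_2^2]$ directly off the block structure of the covariance and then isolate exactly what the claimed improvement asks for. Since the $m_\ell$ rows of $W_A^\ell$ are i.i.d.\ $\gN(0,\Diag(\lambda_1^\ell I_{p_{\ell-1}},\dots,\lambda_{\Tilde{m}_{\ell-1}}^\ell I_{p_{\ell-1}}))$, for any fixed $X\in\sR^{m_{\ell-1}}$ one has $\E[\|W_A^\ell X\|_2^2]=m_\ell\sum_{j=1}^{\Tilde{m}_{\ell-1}}\lambda_j^\ell\,\|X_{I_j^{\ell-1}}\|_2^2$, where $X_{I_j^{\ell-1}}$ is the restriction of $X$ to the $j$-th block of the equipartition $\gI^{\ell-1}$. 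Bounding each $\lambda_j^\ell$ by $\lambda_1^\ell\le\tfrac{1}{m_{\ell-1}}\tfrac{\Tilde{m}_{\ell-1}}{k_{\ell-1}}$ reproduces the factor $\tfrac1e$ of \Cref{low_dim:new_assumption2}. The upgrade to the factor $\tfrac{m_\ell}{m_{\ell-1}}$ amounts to replacing the worst eigenvalue by an averaged one, i.e.\ to showing $\sum_j\lambda_j^\ell\,\E\|X_{I_j^{\ell-1}}\|_2^2\le\tfrac{1}{m_{\ell-1}}\,\E\|X\|_2^2$ for the particular vector $X=\phi_B^{\ell-1}(x)-\phi_A^{\ell-1}(x)$; this would follow from $\sum_j\lambda_j^\ell\le\tfrac{1}{p_{\ell-1}}$ (equivalently $\E_{w\sim\mu_\ell}\|w\|_2^2\le1$, a property of the model of \Cref{section:low_dim}) once the block energies $\E\|X_{I_j^{\ell-1}}\|_2^2$ are known to be comparable across $j$.

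So the only nontrivial point is that the activation gap does not concentrate on the top eigenblocks. I would argue this through \Cref{new_property} at layer $\ell-1$: writing $\phi_B^{\ell-1}-\phi_A^{\ell-1}=(\Tilde{\phi}_B^{\ell-1}-\ubar{\phi}^{\ell-1})-(\phi_A^{\ell-1}-\ubar{\phi}^{\ell-1})$, the gap is governed by the two residuals, each with $\E\|\cdot\|_2^2\le E_{\ell-1}m_{\ell-1}$; and these residuals are the transport errors of the optimal matchings of $\hat\mu_{A,\ell}$ and $\hat\mu_{B,\ell}$ against the common auxiliary measure $\hat\mu_{\Tilde{m}_\ell}$. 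Since $\gI^{\ell-1}$ is built by cloning rows uniformly across the $\Tilde{m}_{\ell-1}$ groups, these errors are, in expectation, spread evenly over the blocks, so $\E\|X_{I_j^{\ell-1}}\|_2^2$ is comparable across $j$ and the averaging step above goes through.

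\textbf{Main obstacle.} The difficulty — and the reason the paper records this as a \emph{hypothesis} (``suppose we have\dots'') rather than a theorem — is that the balance above is only typical, not uniform: a pathological activation difference aligned with the dominant block would degrade the estimate back to the $\tfrac1e$ factor, so there is no inequality valid for \emph{every} input without an extra structural assumption on the trained network. The argument is therefore heuristic, resting on the fact that mean activations stay of constant order across depth in this model, which is precisely what keeps $\sum_j\lambda_j^\ell\|X_{I_j^{\ell-1}}\|_2^2$ of order $\tfrac1{m_{\ell-1}}\|X\|_2^2$ on the vectors that actually occur. Granting it, the payoff is immediate: one may take $C_2=1$ instead of $C_2=\tfrac1e$ in \Cref{newbiglemma}, and combining this with the low-dimensional rate $C_1=\tfrac{D'_3}{k_{\ell-1}}\log(\Tilde{m}_\ell)(1/\Tilde{m}_\ell)^{2/k_{\ell-1}}$ from \Cref{low_dim:new_assumption1} and iterating the recursion of \Cref{newbiglemma} $L$ times yields $E_L=\epsilon^2$ with $\Tilde{m}_\ell=\Tilde{\gO}(T'_\ell/\epsilon)^{k_{\ell-1}}$, where $T'_\ell$ no longer blows up as $e\to0$, so the recursive exponential growth of the width with depth disappears.
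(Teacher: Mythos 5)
You have correctly identified that the paper does not prove this statement at all: it is introduced in \Cref{apprendix:discuss_low_dim} purely as a working hypothesis (``suppose that we have the following version''), the only commentary being that it is equivalent to assuming the activation difference $\phi^{\ell-1}_B(x)-\phi_A^{\ell-1}(x)$ does not put too much mass on the worst-case (largest-eigenvalue) coordinates. Your exact computation $\E\|W_A^{\ell}X\|_2^2=m_\ell\sum_{j}\lambda_j^\ell\|X_{I_j^{\ell-1}}\|_2^2$, your reading of the claimed inequality as a non-concentration condition on the eigenblocks, and your acknowledgement that it cannot hold uniformly in $X$ without an extra structural assumption match the paper's own stance and its intended use (taking $C_2$ bounded independently of $e$ to remove the recursive growth), so there is nothing further to compare.
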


Notice that it is equivalent to making an assumption on the distribution of $\phi^{\ell-1}_B(x)-\phi_A^{\ell-1}(x)$ which must not put too much mass on the worst case coordinates $(\lambda^\ell_i$ for $i$ small).

In that case, adapting the proof as in \Cref{low_dim:newtheoremLMC}, we could get an inequality on $\Tilde{m}_i$ of the form

\begin{equation*}
    \Tilde{m}_i = \Tilde{\gO}\left(\frac{T''_i}{\epsilon}\right)^{k_{i-1}}
\end{equation*}
where $T''_i$ is independent of $e$ which lead to controlled bounds as $L \rightarrow \infty $ (without the exponent $e\Tilde{m}_i$ as in \Cref{low_dim:newtheoremLMC}).

\subsection{Extension of \Cref{normal:new_theorem_LMC,low_dim:newtheoremLMC} to sub-Gaussian variables}
\label{appendix:new_subgaussian}

Still under the setting of~\Cref{new_assumption:0}, suppose now that at a given layer $\ell$, all the parameters of $W_A^\ell$ are still drawn independently but no longer from $\gN(0,\frac{1}{m_{\ell-1}})$ Instead we assume that the underlying distribution $\mu_\ell$ verifies for each layer $\ell\in [L+1]$: if $X\sim \mu_\ell$ then, $\forall j\neq k \in [m_{l-1}], X_j \amalg X_k$. Moreover $\forall i \in [\Tilde{m}_{\ell-1}], \forall j,k \in I^{\ell-1}_i$,
\begin{equation*}
     \E[X_j^2]=\E[X_k^2]=\lambda_i^\ell
\end{equation*}
Finally suppose the variables are sub-Gaussian i.e., $\exists K > 0, \forall i \in [\Tilde{m}_{\ell-1}], \forall j\in I^{\ell-1}_i,\, \forall c > 0$,
\begin{equation*}
    \sP(|X_j|\geq c)\leq 2\exp(-\frac{c^2}{K\lambda_i^\ell})
\end{equation*}

Further suppose that we are in the setting of \Cref{normal:new_theorem_LMC} (The case of \Cref{low_dim:newtheoremLMC} is treated similarly): $\tfrac{1}{m_{\ell-1}}\geq \lambda_1^\ell\geq\ldots\lambda_{\Tilde{m}_{\ell-1}^\ell}$.

It is clear that \Cref{normal:new_assumption2} is still valid for a constant $C_2=1$, the proof being exactly the same.

We therefore just need to prove \Cref{normal:new_assumption1} for $C_1$ to be determined.
To prove \Cref{normal:new_assumption1}, one just needs an equivalent of \Cref{expect_wass_normal} for sub-Gaussian variables. To prove \Cref{expect_wass_normal}, recall that we have used the fact that a normal distribution doesn't put to much mass outside of a ball of radius $c$ when c grows logarithmically. More precisely we have used the property, that if $X\sim \gN(0,\frac{I_{m_{l-1}}}{m_{l-1}})$, then:

\begin{equation*}
    \forall c > 1, \sP(\|X\|_2^2\geq c^2)\leq e^{-\frac{c^2}{4}}
\end{equation*}

In our case, for a sub-Gaussian distribution, we know the existence of a constant $K$ such that $\forall c > 0$:

\begin{equation*}
    \sP(|X_j|\geq c)\leq 2\exp(-\frac{c^2}{K\lambda_i^\ell})
\end{equation*}

Therefore, by plugging this into the proof, and scaling parameter $c$ by $\sqrt{K}$, we get exactly the same version of \Cref{normal:new_assumption1} for sub-Gaussian variable, with different constants scaled by a factor $\sqrt{K}$.

Propagating \Cref{new_property} with 
the recurrence formula of \Cref{newbiglemma} we get LMC for networks with sub-Gaussian distributions in the same form as for normal variables.

\paragraph{Remark} Finally notice that results for sub-Gaussian variables can be extended in the same way to variables whose tail decreses sufficiently fast (exponentially, polynomially, etc...). The assymptotics of the tail will affect the convergence rate in Wasserstein of the corresponding empirical measure.

\subsection{Link with dropout stability}
\label{appendix:dropout_stability}

We relate now our previous study to a line of work exploring mode connectivity through dropout stability.

\citet{kuditipudi2019explaining} define $\epsilon$-dropout stable networks, as networks $\hat{f}(\cdot;\theta)$ as defined in \Cref{eq:network} for which there exists in each layer $\ell \in [L]$, a subset of at most $\frac{m_\ell}{2}$ of neurons (i.e., rows of the weight matrix $W^\ell$) such that after renormalizing each layer, the expected loss of the new network increases by no more than $\epsilon$ with respect to the original loss. \citet{kuditipudi2019explaining} shows that two $\epsilon$-dropout stable networks are mode connected (with error barrier height $\epsilon$) and \citet{shevchenko2020landscape} uses this result to show that two wide enough two-layer neural networks trained with SGD are mode connected (where the continuous path may be non-linear).

Recall that we have shown in \Cref{section:mean_field} the stronger statements that two such networks are in the same local minima modulo permutation symmetries. However, note that \citet{shevchenko2020landscape} don't allow permutations of neurons). We discuss here how to embrace in the same view our framework with dropout stability results, showing how networks with independent neuron's weights become dropout stable in the same asymptotics of large width than the condition of \Cref{normal:newbiglemmasuccessive}. 

Consider the simplified setting of a $1$-hidden layer neural network with $1$-Lipschitz activation where the weights of the second layer are fixed to $\frac{1}{N}$: $\hat{f}(x;\theta)=\frac{1}{N}\sum_{i=1}^N\sigma(w_ix)$
where $w_i=W_{i:}\in \sR^d$ is the $i-th$ row of the weight matrix $W$. Suppose that $w_i$ are sampled independently from a sub-Gaussian distribution and the data follows a distribution $(x,y)\sim P$ with $\Supp(P)\subset \gB_2(0,1)$. Denote $\gA=\left[\frac{N}{2}\right]$. Dropout stability can be quantified by controlling the error between the correctly renormalized network with weights in $\gA$ and the original one,

\begin{equation}
\label{eq:dropout}
        \E\left[\left|\frac{2}{N}\sum_{i\in \gA}\sigma(w_ix)-\frac{1}{N}\sum_{i=1}^N\sigma(w_ix)\right|\right]\leq \gW_1(W^\gA,W^{\gA^c})
\end{equation}

where we have denoted $W^\gA$ (respectively $W^{\gA^c}$) the matrix $W$ where we have kept only the rows in $\gA$ (respectively ($\gA^c$)). 
The right hand term can be connected to convergence rates of empirical measure (\Cref{normal:new_assumption1} and the extension to sub-Gaussian distribution discussed in \Cref{appendix:new_subgaussian}):
\begin{equation*}
\begin{split}
         \gW_1(W^\gA,W^{\gA^c})\approx \left(\frac{1}{N}\right)^{1/d}
\end{split}
\end{equation*}

In a nutshell, showing that previous \Cref{eq:dropout} is tight would provide a formal connection between dropout stability and our results. It is an interesting direction for future work and note that it has strong connections with the dual expression of the Wasserstein $1$ distance.

In that case, the bound on the dropout error evolves as $\left(\frac{1}{N}\right)^{1/d}$, as for the linear mode connectivity error. Hence networks become dropout stable in the same asymptotics as to exhibit linear mode connectivity.

This is consistent with the idea that LMC requires the information to be distributed evenly among neurons without any neuron responsible for the particular behavior of one layer. This is similar to the intuitive requirement for dropout stability.

\section{PROOF OF LMC FOR TWO-LAYER NEURAL NETWORKS IN THE MEAN FIELD REGIME}
\label{appendix:mean_field}

\subsection{Description of the mean field regime}

\label{appendix:background_mean_field}

When training a two-layer neural network with fixed input and output dimensions but with a very wide hidden layer using SGD, the parameters of each neuron can be seen as particles evolving independently one from each other: the dynamic of each neuron's weights depends only on the average distribution of the weights and itself.

The main object of study is therefore the empirical distribution of the neurons weights in the intermediate layer after $k$ Stochastic Gradient Descent (SGD) steps. We denote it $\rho_k^{(N)}=\frac{1}{N}\sum_{i=1}^N\delta_{\theta^k_i}$ where $\theta^k_i=(w^k_i,a^k_i)\in \sR^{d+1}$.

Multiple works \citep{chizat2018global,mei2018mean,mei2019mean} show that if the hidden layer's width $N$ is big, the learning rate $s_k$ is small and setting $T=\sum_{i=1}^{k}s_i$ a time re-normalization after $k$ steps, then $\rho_k^{(N)}$ can be well approximated by $\rho_t$ which follows the following partial differential equation (PDE):

    \begin{align*}
\partial_t\rho_t=2\xi(t)\nabla_\theta \cdot(\rho_t\nabla_\theta\Psi(\theta;\rho_t)), \text{           }& \Psi(\theta;\rho_t)=V(\theta)+\int U(\theta, \Tilde{\theta})\rho_t(d\Tilde{\theta})\\
    V(\theta) = -\sE[y\sigma_*(x; \theta)],   \text{          }&  U(\theta_1,\theta_2)=\sE[\sigma_*(x;\theta_1)\sigma_*(x;\theta_2)]       
    \end{align*}
Here $\xi(t)$ represent a scaling of the learning rate where $s_k=\epsilon\xi(k\epsilon)$. $U(\theta_1, \theta_2)$ represents a correlation between neurons. $V(\theta)$ is an energy quantifying the alignment of a neuron function with the data. In the following, as in \citet{mei2019mean} we will work with $\xi=\frac{1}{2}$ a constant step size function. As in \citet{mei2019mean}, we highlight that the proof remains valid under \Cref{ass:noiseless_SGD}.

When considering noisy SGD, the limit PDE becomes:

\begin{align*}
&\partial_t\rho_t=2\xi(t)\nabla_\theta \cdot(\rho_t(\theta)\nabla_\theta\Psi_\lambda(\theta;\rho_t))+2\xi(t)\tau d^{-1}\Delta_\theta\rho_t\\
&\Psi_\lambda(\theta;\rho)=\Psi(\theta;\rho)+\frac{\lambda}{2}\|\theta\|_2^2
\end{align*}

The crucial point about the mean field view is to show that the empirical distribution of parameters is well enough approximated by $\rho_t$. Then the study of the neural network can be reduced to the study of the partial differential equation. For example global convergence of the test loss results can be deduced as in \citet{chizat2018global}. This view is also convenient to get insights of typical behaviours of the dynamics while smoothing the effects of local minima \citep{mei2019mean}. In our case, the mean field view allows us to use convergence results in Wasserstein distance of empirical measures towards the underlying distribution. We can then show Linear Mode Connectivity for two-layer neural networks independently trained in the mean field regime.

\subsection{Proving LMC for noiseless regularization-free SGD}
To prove our results we have to show that the empirical distribution of weights can be well approximated by the solution of the mean field PDE. To achieve this, \citet{mei2019mean} introduce four intermediate dynamics that stay close one of each other.

First note that our $\Cref{ass:noiseless_SGD}$ implies Assumptions $1$ to $4$ of \citet{mei2019mean}. Especially the non-linearity being Lipschitz implies its gradient distribution on the data is bounded and hence sub-Gaussian.

\subsubsection{Intermediate dynamics}

\citet{mei2019mean} introduce 4 different intermediate dynamics between the empirical distribution of weights optimized by SGD and the solution of the PDE that we recall here:

\textbf{Nonlinear dynamics}

Let consider $\bar{\theta}_i^t$ with initialization $\bar{\theta}_i^0\sim \rho_0$ i.i.d. and which follows the dynamics 

\begin{equation*}
\bar{\theta}_i^t=\bar{\theta}_i^0+2\int_0^t\xi(s)G(\bar{\theta}_i^s;\rho_s)ds 
\end{equation*}
or equivalently 

\begin{equation*}
    \frac{d}{dt}\Bar{\theta_i^t}=-2\xi(t)\left[\nabla V(\Bar{\theta}_i^t)+\int \nabla_1 U(\bar{\theta}_i^t,\theta)\rho_t(d\theta)\right]
\end{equation*}

where $G(\theta;\rho)=-\nabla\Psi(\theta;\rho)$.
An important fact is that $\bar{\theta}_i^t$ is random because of the random initialization. Moreover its law at time $t$ is $\rho_t$. It corresponds to the evolution of particles under a velocity field $-2\xi(t)\left[\nabla V(\Bar{\theta}_i^t)+\int \nabla_1 U(\bar{\theta}_i^t,\theta)\rho_t(d\theta)\right]$ which depends only on the position of the optimized particle and the overall distribution of all particles.

\textbf{Particle Dynamics}

Let $\ubar{\theta}_i^t$ have the same initialization as the nonlinear dynamics $\ubar{\theta}_i^0=\bar{\theta}_i^0$, and $\ubar{\rho}_t^{(N)}=\frac{1}{N}\sum_{i=1}^N\delta_{\ubar{\theta}_i^t}$ denote the empirical distribution of $\ubar{\theta}_i^t$. Then the particle dynamics is given by:

\begin{equation*}   \ubar{\theta}_i^t=\ubar{\theta}_i^0+2\int_0^t\xi(s)G(\ubar{\theta}_i^s;\ubar{\rho}_s^{(N)})ds 
\end{equation*}

or equivalently 

\begin{equation*}
    \frac{d}{dt}\ubar{\theta}_i^t=-2\xi(t)\left[\nabla V(\ubar{\theta}_i^t)+\frac{1}{N}\sum_{j=1}^N \nabla_1 U(\ubar{\theta}_i^t,\ubar{\theta}_j^t)\right]
\end{equation*}

\textbf{Gradient descent dynamics}

Let $\Tilde{\theta}_i^{k}$ with initialization $\Tilde{\theta}_i^0=\bar{\theta}_i^0$ following the dynamics:

\begin{equation*}    \Tilde{\theta}_i^{k}=\Tilde{\theta}_i^0+2\epsilon\sum_{l=0}^{k-1}\xi(l\epsilon)G(\Tilde{\theta}_i^l;\tilde{\rho}_l^{(N)})
\end{equation*}

or equivalently:

\begin{equation*}
\Tilde{\theta}_i^{k+1}=\Tilde{\theta}_i^k-2s_k\left[\nabla V(\tilde{\theta}_i^k)+\frac{1}{N}\sum_{j=1}^N\nabla_1 U(\tilde{\theta}_i^k, \tilde{\theta}_j^k)\right]
\end{equation*}

\textbf{Stochastic Gradient Descent Dynamics}

Consider $\theta_i^k$ with initialization $\theta_i^0=\bar{\theta}_i^0$ following the dynamics:

\begin{equation*}   \theta_i^k=\theta_i^0+2\epsilon\sum_{l=0}^{k-1}\xi(l\epsilon)F_i(\theta^l;z_{l+1})
\end{equation*}

or equivalently

\begin{equation*}
    \theta_i^{k+1}=\theta_i^k-2s_kF_i(\theta^k;z_{k+1})
\end{equation*}
where $F_i(\theta^k;z_{k+1})=(y_{k+1}-\hat{y}_{k+1})\nabla_\theta\sigma_*(x_{k+1};\theta_i^k)$, $z_k=(x_k,y_k)$ and $\hat{y}_{k+1}=\frac{1}{N}\sum_{j=1}^N\sigma_*(x_{k+1};\theta_j^k)$.

One can use Proposition 26,28,29 from \citet{mei2019mean} to show the following lemma:

\begin{lemma}
\label{lemma:mean_field_closeness} Consider a two-layer neural network trained with noiseless regularization-free SGD for an underlying time $T$. Then under \Cref{ass:noiseless_SGD}, there exists constants $K$ and $K_0$ such that, if $\epsilon \leq \min\left\{\frac{1}{K_0e^{K_0(1+T)^3}}, \frac{1}{K_0(d+\log(N)+z^2)e^{K_0(1+T)^3}} \right\}$, then with probability at least $1-3e^{-z^2}$ we have:

    \begin{equation*}
        \begin{split}
            \max_{k\in [0,T/\epsilon]\bigcap \sN}\max_{i\in [N]}\|\theta_i^k-\bar{\theta}_i^{k\epsilon}\|_2&\leq Ke^{K(1+T)^3}\frac{1}{\sqrt{N}}[\sqrt{\log(NT)}+z]\\
            &+Ke^{K(1+T)^2T}\epsilon+Ke^{K(1+T)^2T}\sqrt{\epsilon}[\sqrt{d+\log(N)}+z]
        \end{split}
    \end{equation*}
\end{lemma}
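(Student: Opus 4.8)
The plan is to bridge the SGD iterates $\theta_i^k$ to the nonlinear dynamics $\bar\theta_i^{k\epsilon}$ through the two intermediate processes $\Tilde{\theta}_i^k$ (gradient descent) and $\ubar{\theta}_i^t$ (particle dynamics), all coupled through the common initialization, and to apply the triangle inequality
\[
\|\theta_i^k-\bar\theta_i^{k\epsilon}\|_2\ \le\ \|\theta_i^k-\Tilde{\theta}_i^{k}\|_2\ +\ \|\Tilde{\theta}_i^{k}-\ubar{\theta}_i^{k\epsilon}\|_2\ +\ \|\ubar{\theta}_i^{k\epsilon}-\bar\theta_i^{k\epsilon}\|_2 ,
\]
controlling each of the three terms by one of Propositions 26, 28, 29 of \citet{mei2019mean}. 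As already noted, \Cref{ass:noiseless_SGD} implies Assumptions 1--4 of \citet{mei2019mean}: bounded Lipschitz $\sigma$ together with bounded data support makes $V,U$ (hence $\Psi$) have bounded and Lipschitz gradients on the relevant set and forces the single-sample gradient $F_i(\theta;z)$ to differ from its conditional mean $G(\theta;\rho)$ by a sub-Gaussian vector whose variance proxy is uniform in $N$; the bounded support of $\rho_0$ and the bounded-Lipschitz $\xi$ give the remaining hypotheses. This is precisely the package of assumptions under which the three cited propositions hold.

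The three contributions are obtained as follows. For the propagation-of-chaos term, comparing $\ubar{\theta}_i^t$ and $\bar\theta_i^t$ the drift difference is $\tfrac1N\sum_{j}\nabla_1 U(\cdot,\ubar{\theta}_j^t)-\int\nabla_1 U(\cdot,\theta)\rho_t(d\theta)$; writing this as an empirical-mean fluctuation (concentration of $N$ i.i.d.\ sub-Gaussian quantities, made uniform over $t\in[0,T]$ by a net/chaining argument) plus a term linear in $\max_j\|\ubar{\theta}_j^t-\bar\theta_j^t\|_2$, a Gr\"onwall estimate gives $\max_i\|\ubar{\theta}_i^t-\bar\theta_i^t\|_2\le Ke^{K(1+T)^3}\tfrac1{\sqrt N}(\sqrt{\log(NT)}+z)$ on an event of probability at least $1-e^{-z^2}$ (Proposition 26). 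For the discretization term, $\Tilde{\theta}_i^k$ and $\ubar{\theta}_i^{k\epsilon}$ follow the same deterministic empirical velocity field, so a one-step Taylor bound plus Gr\"onwall yields $\max_k\|\Tilde{\theta}_i^k-\ubar{\theta}_i^{k\epsilon}\|_2\le Ke^{K(1+T)^2 T}\epsilon$ (Proposition 28), where the $\gO(\epsilon^2)$ variation of $\xi$ enforced by \Cref{ass:noiseless_SGD} is used. For the SGD-versus-GD term, the accumulated noise $\epsilon\sum_{l<k}\xi(l\epsilon)\bigl(F_i(\theta^l;z_{l+1})-G(\Tilde{\theta}_i^l;\Tilde{\rho}_l)\bigr)$ is a martingale with sub-Gaussian increments, so Freedman/Azuma-type concentration over the $\sim T/\epsilon$ steps scaled by $\epsilon$ produces fluctuations of order $\sqrt\epsilon$, and absorbing the trajectory dependence of the drift through Gr\"onwall gives $\max_k\|\theta_i^k-\Tilde{\theta}_i^k\|_2\le Ke^{K(1+T)^2 T}\sqrt\epsilon(\sqrt{d+\log N}+z)$ on an event of probability at least $1-e^{-z^2}$ (Proposition 29).

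Summing the three bounds and taking a union bound over the high-probability events (total failure probability at most $3e^{-z^2}$, the factor $3$ absorbing the slightly different deterministic events in the cited propositions) yields the asserted inequality, with all constants collected into a single $K$ (and $K_0$ for the admissibility thresholds). The two upper bounds imposed on $\epsilon$ are exactly what is needed to keep Propositions 26, 28, 29 in their range of validity --- roughly, so that the discretization and noise corrections stay below $1$ and the Gr\"onwall exponentials remain controlled --- so no additional smallness is introduced.

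The main obstacle is the propagation-of-chaos step: obtaining the $N^{-1/2}\sqrt{\log(NT)}$ rate \emph{uniformly over the time horizon} while simultaneously handling the feedback of the empirical measure into its own drift, which is what forces the $e^{K(1+T)^3}$ Gr\"onwall constant and dictates the final dependence on $N$ and $T$. The discretization and martingale-concentration steps are comparatively routine once the regularity and sub-Gaussianity supplied by \Cref{ass:noiseless_SGD} are in hand.
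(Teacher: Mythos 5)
Your proposal is correct and follows essentially the same route as the paper: the paper's proof is exactly the triangle-inequality chain through the gradient-descent and particle dynamics, with each gap controlled by Propositions 26, 28, 29 of \citet{mei2019mean} and a union bound giving the $1-3e^{-z^2}$ probability. Your additional sketches of how those propositions are themselves proved (propagation of chaos, discretization, martingale concentration) go beyond what the paper includes but do not change the argument.
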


\begin{proof}
    This is direct application from Proposition 26,28,29 from \citet{mei2019mean} by doing two union bounds and two triangular inequalities.
\end{proof}

We moreover recall that as highlighted in \citet{mei2019mean}, $\{\bar{\theta}_i^{t},1\leq i\leq N\}$ are independent from each other and each follows the distribution $\rho_{t}$ when initialized i.i.d. as $\rho_0$.
Therefore, when considering two two-layer neural networks initialized randomly as $\rho_0$ and trained for the same underlying time $T$ with noiseless regularization-free SGD, we know from the previous lemma that the parameters of both networks are close to two samples from $\rho_t$.

\subsubsection{Proof of \Cref{theorem:mean_field} in the case of noiseless regularization-free SGD}

We now prove \Cref{theorem:mean_field} in case of noiseless regularization-free SGD.

\begin{mdframed}[style=MyFrame2]
\theoremmeanfield*
\end{mdframed}

\begin{proof}    
    We know from \Cref{lemma:mean_field_closeness} that with probability at least $1-3e^{-z^2}$, if $\epsilon \leq \min\left\{\frac{1}{K_0e^{K_0(1+T)^3}}, \frac{1}{K_0(d+\log(N)+z^2)e^{K_0(1+T)^3}} \right\}$,

    \begin{equation*}
        \begin{split}
            \max_{k\in [0,T/\epsilon]\bigcap \sN}\max_{i\in [N]}\|\theta_{A,i}^k-\bar{\theta}_{A,i}^{k\epsilon}\|_2&\leq Ke^{K(1+T)^3}\frac{1}{\sqrt{N}}[\sqrt{\log(NT)}+z]\\
            &+Ke^{K(1+T)^2T}\epsilon+Ke^{K(1+T)^2T}\sqrt{\epsilon}[\sqrt{d+\log(N)}+z]
        \end{split}
    \end{equation*}
    which means that $\theta_{A,i}$ is close to the non linear dynamics which are samples from $\rho_t$.
    By a union bound, with probability $1-6e^{-z^2}$ this is true for both networks $A$ and $B$.
    
    Denoting as before $\theta_{A,i}=(w_{A,i},a_{A,i})\in \sR^{d+1}$ (respectively $\theta_{B,i}$), $A_A=(a_{A,1},\ldots,a_{A,N})$ (respectively $A_B$) and $W_A\in \gM_{N,d}(\sR)$ the concatenation of vectors $w_{A,i}\in \sR^d$ (respectively $W_B$). Given $t \in [0,1]$, we aim at finding a permutation $\Pi \in \gS_N$ of the second network's hidden layer to get $\tilde{\theta}_B=(\Tilde{A}_B, \Tilde{W}_B)=(A_{B}\Pi^T, \Pi W_B)$ bounding
    \begin{align}
        & |t\hat{f}(x;\theta_{A})+(1-t)\hat{f}(x;\theta_{B})-\hat{f}(x;t\theta_{A}+(1-t)\tilde{\theta}_B)| \nonumber \\
        & =\frac{1}{N}|tA_A\sigma(W_AX)+(1-t)\tilde{A}_B\sigma(\tilde{W}_BX)-(tA_A+(1-t)\tilde{A}_B)\sigma((tW_A+(1-t)\Tilde{W}_B)X)| \\
        & \leq t \left|\frac{A_A}{N}(\sigma(W_AX)-\sigma((tW_A+(1-t)\Tilde{W}_B)X))\right| +(1-t)\left|\frac{\Tilde{A}_B}{N}(\sigma(\tilde{W}_BX)-\sigma((tW_A+(1-t)\Tilde{W}_B)X))\right| \nonumber \\
        &\leq t\|A_A\|_\infty\frac{\|\sigma(W_AX)-\sigma((tW_A+(1-t)\Tilde{W}_B)X)\|_1}{N}+(1-t)\|\tilde{A}_B\|_\infty\frac{\|\sigma(\tilde{W}_BX)-\sigma((tW_A+(1-t)\Tilde{W}_B)X)\|_1}{N} \nonumber
    \end{align}

Both terms $\frac{\|\sigma(W_AX)-\sigma((tW_A+(1-t)\Tilde{W}_B)X)\|_1}{N}$ and $\frac{\|\sigma(\tilde{W}_BX)-\sigma((tW_A+(1-t)\Tilde{W}_B)X)\|_1}{N}$ can be bounded.

Indeed, first using lemma 22 from \citet{mei2019mean} and that from \Cref{ass:noiseless_SGD} $\Supp(\rho_0)$ is bounded, we get that \begin{equation*}
\label{eq:bounded}
\Supp(\rho_t)\subset \gB_2(0,K((1+T^2)T)+1)
\end{equation*}
is bounded with a diameter depending only on the initialization $\Supp(\rho_0)$ and underlying time $T$.

Therefore we can apply Theorem 1 from \citet{weed2019sharp} and get for $s> d$ the existence of a constant $C$ such that with probability at least $1-\frac{\delta}{2}$, there exists a permutation $\gamma \in \gS_N$ such that by considering $\|\cdot\|_1$ as a distance for the Wasserstein:

\begin{equation*}
    \gW_1(\hat{\mu}_A,\hat{\mu}_B)=\frac{1}{N}\sum_{i=1}^N\|\bar{\theta}_{A,i}-\Bar{\theta}_{B,\gamma_i}\|_1\leq \frac{C}{\delta}N^{-1/s}
\end{equation*}

Note that, while $C$ is independent of $N$, it depends on the distribution $\rho_t$ and therefore on $d$, $\diam(\Supp(\rho_t))$ (i.e.,\@ $T$) and on the constants from \Cref{ass:noiseless_SGD}.

Recall that we suppose the data distribution P bounded and denote $\Supp(P)\subset [-H_x,H_x]^d\times [-H_y,H_y]$.

Therefore, we get that with probability at least $1-\frac{\delta}{2}-6e^{-z^2}$:

\begin{align}
    & \forall X \in [-H_x,H_x]^d, \frac{\|\sigma(W_{A}X)-\sigma((tW_{A}+(1-t)\Tilde{W}_{B})X)\|_1}{N} \leq g_2(T,z,\delta,N,\epsilon)\\
    & \eqdef L_\sigma\Bigg(H_x\frac{C}{\delta}N^{-\frac{1}{s}} +2H_x\sqrt{d}\bigg[\frac{Ke^{K(1+T)^3}}{\sqrt{N}}[\sqrt{\log(NT)}+z] +Ke^{K(1+T)^2T}\epsilon +Ke^{K(1+T)^2T}\sqrt{\epsilon}[\sqrt{d+\log(N)}+z]\bigg]\Bigg) \nonumber
\end{align}

and same for the other term:

\begin{equation*}
\begin{split}
    \forall X \in [-H_x,H_x]^d, \frac{\|\sigma(\Tilde{W}_{B}X)-\sigma((tW_{A}+(1-t)\Tilde{W}_{B})X)\|_1}{N}\leq g_2(T,z,\delta,N,\epsilon)
\end{split}
\end{equation*}

Using lemma 20 from \citet{mei2019mean} we know that $\forall i \in [N], \bar{a}_i^T\leq K(1+T)$ for a certain constant $K$. Therefore we can bound $\|A_A\|_{\infty}, \|A_B\|_{\infty}\leq g_1(T,z,N,\epsilon):=K(1+T)+Ke^{K(1+T)^3}\frac{1}{\sqrt{N}}[\sqrt{\log(NT)}+z]
+Ke^{K(1+T)^2T}\epsilon
+Ke^{K(1+T)^2T}\sqrt{\epsilon}(\sqrt{d+\log(N)}+z)$.

Taking $z=\sqrt{\log\left(\frac{12}{\delta}\right)}$, we have shown the existence of a permutation $\gamma$ with probability at least $1-\delta$ such that almost surely on the choice of $x\sim P$ and $\forall t \in [0,1]$, we have:

\begin{equation*}
    \begin{split}
        |t\hat{f}(x&;\theta_{A})+(1-t)\hat{f}(x;\theta_{B})-\hat{f}(x;t\theta_{A}+(1-t)\tilde{\theta}_B)|\leq g_1(T,z,N,\epsilon)g_2(T,z,\delta,N,\epsilon)\\
        &\leq \left(K(1+T)+Ke^{K(1+T)^3}\frac{1}{\sqrt{N}}[\sqrt{\log(NT)}+z]
+Ke^{K(1+T)^2T}\epsilon
+Ke^{K(1+T)^2T}\sqrt{\epsilon}(\sqrt{d+\log(N)}+z)\right)\\
&\Big(L_\sigma(H_x\frac{C}{\delta}N^{-\frac{1}{s}}+2H_x\sqrt{d}(Ke^{K(1+T)^3}\frac{1}{\sqrt{N}}[\sqrt{\log(NT)}+z]\\
&+Ke^{K(1+T)^2T}\epsilon
    +Ke^{K(1+T)^2T}\sqrt{\epsilon}(\sqrt{d+\log(N)}+z)))\Big)
    \end{split}
\end{equation*}

For fixed $T,\delta$ and $z=\sqrt{\log\left(\frac{12}{\delta}\right)}$, denote $\err(N,\epsilon)$ the right hand term.

It is clear that:

\begin{equation*}
\begin{split}
       \forall \err> 0 \exists N_{\min}\forall N\geq N_{\min}\exists \epsilon_{\max}(N)\forall \epsilon\leq \epsilon_{\max}(N), \,
       \err(N,\epsilon)\leq \err
\end{split}
\end{equation*}

This brings the first part of the theorem.

\textbf{Discussion}:
Let's look more closely at the term $Ke^{K(1+T)^3}\frac{1}{\sqrt{N}}[\sqrt{\log(NT)}+z]+Ke^{K(1+T)^2T}\epsilon
        +Ke^{K(1+T)^2T}\sqrt{\epsilon}(\sqrt{d+\log(N)}+z))$ from \citet{mei2019mean} which comes from the mean field approximation. When taken alone, this term yields an error which is independent of the input dimension $d$, since taking $N$ large leads to small error (provided $\epsilon$ is small). However, here the growth of the hidden layer $N$ depends on the input dimension $d$ through the exponent $s> d^*_1(\mu)$ (with $d^*_1(\mu)\leq d$)  using notations from \citet{weed2019sharp}. This is due to Wasserstein convergence rates of empirical measures in dimension $d$. Without any further assumption on the weight distribution or precise study of the PDE we have to consider that $\Supp(\mu))$ has dimension $d$. To remove this dependence, one could study a precise model for the data and look more closely at the PDE evolution to better understand the support of the distribution $\rho_t$.
\end{proof}

To prove the second part of the theorem, first notice that as already mentioned, $\|A_A\|_{\infty}, \|A_B\|_{\infty}\leq g_1(T,z,N,\epsilon)$. Moreover, the data distribution is bounded and the weights of the first layer of the approximating PDE live in a bounded set thanks to \Cref{eq:bounded}.
It brings the existence of a constant $K(T)$ such that:
\begin{equation*}
    \forall t \leq T, \forall w \in \Supp(\rho(t)), \forall X \in \Supp(P), |w.x| \leq K(T)
\end{equation*}

Using we see that if $\epsilon \leq \min\{\frac{1}{K_0e^{K_0(1+T)^3}}, \frac{1}{K_0(d+\log(N)+z^2)e^{K_0(1+T)^3}} \}$, with probability at least $1-\frac{\delta}{2}$, setting $z=\sqrt{\tfrac{12}{\delta}}$

    \begin{equation*}
        \begin{split}
            \max_{k\in [0,T/\epsilon]\bigcap \sN}&\max_{i\in [N]}|w_{A,i}^kx|\leq H_x\sqrt{d}(Ke^{K(1+T)^3}\\
            &\frac{1}{\sqrt{N}}[\sqrt{\log(NT)}+z]      +Ke^{K(1+T)^2T}\epsilon+Ke^{K(1+T)^2T}\sqrt{\epsilon}(\sqrt{d+\log(N)}+z))+K(T)
        \end{split}
    \end{equation*}

Up to changing previous $N_{\min}, \epsilon_{\max}$, we can suppose that $\forall N\geq N_{\min} , \forall \epsilon \leq \epsilon_{\max}(N), \, Ke^{K(1+T)^3}\frac{1}{\sqrt{N}}[\sqrt{\log(NT)}+z]      +Ke^{K(1+T)^2T}\epsilon+Ke^{K(1+T)^2T}\sqrt{\epsilon}(\sqrt{D+\log(N)}+z)\leq 1$

Therefore, since the data distribution is bounded we know that there exists a constant $K(T)$ such that if $N\geq N_{\min}, \epsilon\leq \epsilon_{\max}(N), \, P-\text{almost-surely}$:

\begin{equation*}
    \begin{cases}
    |y|\leq K(T)\\
    |\hat{f}(x:\theta_A)|\leq K(T)\\
    |\hat{f}(x:\theta_B)|\leq K(T)\\
    \end{cases}
\end{equation*}

We make a general assumption on the loss of the form: $\forall y \in \sR (x \rightarrow  \gL(x,y))$ is convex and $\forall K > 0, \exists C_K, \forall x_1,x_2,y\in [-K,K], |\gL(x_1,y)-\gL(x_2,y)|\leq C_K|x_1-x_2|_2$. In particular this is true for the square loss.

Combining convexity of the loss, the first part of the theorem already proved and Lipschitzness on a compact domain of the loss, we get the second part of the theorem with a term $C_{K(T)}\err$ instead of $\err$. To solve this, just consider $\min\left\{\frac{\err}{C_T}, \err\right\}$ in the first part of the theorem.

We have supposed in the beginning that the non-linearity was bounded. But the previous study shows that with probability at least $1-\delta$, $|w_{A,i}x|$ is upper bounded for all $i$ during the training up to time T by some constant depending only on $T,\delta$ provided $N\geq N_{\min}, \epsilon\leq \epsilon_{\max}(N)$
Therefore assuming that the non-linearity is bounded on a big enough compact set is enough to get the the result since it doesn't change the dynamics of the parameters considered. However the size of this set is not made explicit here.

\subsection{Proving LMC for general SGD}

We will now study LMC of neural networks trained under general SGD using Theorem 4 part B from \citet{mei2019mean}. The study is very similar to the case of noiseless SGD and will yield similar results.

More precisely in that case the PDE writes as:

\begin{align*}
&\partial_t\rho_t=2\xi(t)\nabla_\theta \cdot(\rho_t(\theta)\nabla_\theta\Psi_\lambda(\theta;\rho_t))+2\xi(t)\tau d^{-1}\Delta_\theta\rho_t\\
&\Psi_\lambda(\theta;\rho)=\Psi(\theta;\rho)+\frac{\lambda}{2}\|\theta\|_2^2
\end{align*}

Notice that our \Cref{ass:noiseless_SGD,ass:noisy_SGD} imply assumptions 1 to 6 in \citet{mei2019mean}.

\subsubsection{Intermediate dynamics for general SGD}

\citet{mei2019mean} define as before intermediate dynamics:

\textbf{Non linear dynamics}

Let consider $\bar{\theta}_i^t$ with initialization $\bar{\theta}_i^0\sim \rho_0$ i.i.d. which follows the dynamics 

\begin{equation*}
    \bar{\theta}_i^t=\bar{\theta}_i^0+2\int_0^t\xi(s)G(\bar{\theta}_i^s;\rho_s)ds + \int_0^t\sqrt{2\xi(s)\tau d^{-1}}dW_i(s)
\end{equation*}

where $G(\theta;\rho)=-\nabla\Psi_\lambda(\theta;\rho)$.
An important fact is that $\bar{\theta}_i^t$ is random because of the random initialization and its law at time $t$ is $\rho_t$. It corresponds to the evolution of particles under a field which depends only on the position of the optimized particle and the overall distribution of all particles plus and a diffusion term.

\textbf{Particle Dynamics}

Let $\ubar{\theta}_i^t$ with initialization $\ubar{\theta}_i^0=\bar{\theta}_i^0$ with the following dynamics where $\ubar{\rho}_t^{(N)}=\frac{1}{N}\sum_{i=1}^N\delta_{\ubar{\theta}_i^t}$ denote the empirical distribution of $\ubar{\theta}_i^t$.

\begin{equation*}   \ubar{\theta}_i^t=\ubar{\theta}_i^0+2\int_0^t\xi(s)G(\ubar{\theta}_i^s;\ubar{\rho}_s^{(N)})ds + \int_0^t\sqrt{2\xi(s)\tau d^{-1}}dW_i(s)
\end{equation*}

\textbf{Gradient descent dynamics}

Let $\Tilde{\theta}_i^{k}$ with initialization $\Tilde{\theta}_i^0=\bar{\theta}_i^0$ with the following dynamics:

\begin{equation*}    \Tilde{\theta}_i^{k}=\Tilde{\theta}_i^0+2\epsilon\sum_{l=0}^{k-1}\xi(l\epsilon)G(\Tilde{\theta}_i^l;\tilde{\rho}_l^{(N)})+ \int_0^{k\epsilon}\sqrt{2\xi([s])\tau d^{-1}}dW_i(s)
\end{equation*}

\textbf{Stochastic Gradient Descent Dynamics}

Consider $\theta_i^k$ with initialization $\theta_i^0=\bar{\theta}_i^0$ that follows:

\begin{equation*}   \theta_i^k=\theta_i^0+2\epsilon\sum_{l=0}^{k-1}\xi(l\epsilon)F_i(\theta^l;z_{l+1})+ \int_0^{k\epsilon}\sqrt{2\xi([s])\tau d^{-1}}dW_i(s)
\end{equation*}

where $F_i(\theta^k;z_{k+1})=-\lambda \theta^k_i+(y_{k+1}-\hat{y}_{k+1})\nabla_{\theta_i}\sigma_*(x_{k+1};\theta_i^k)$, $z_k=(x_k,y_k)$ and $\hat{y}_{k+1}=\frac{1}{N}\sum_{j=1}^N\sigma_*(x_{k+1};\theta_j^k)$

As before we first control the distance between noisy SGD and non linear dynamics with the following lemma:

\begin{lemma}
    \label{lemma:mean_field_closeness_noisy} Consider a two-layer neural network with notations as before trained with noisy SGD for an underlying time $T$. Assume $T\geq 1$. Then under assumptions \Cref{ass:noiseless_SGD,ass:noisy_SGD}, there exists a constant $K$ such that with probability at least $1-3e^{-z^2}$ we have:

    \begin{equation*}
        \begin{split}
            \max_{k\in [0,T/\epsilon]\bigcap \sN}\max_{i\in [N]}\|\theta_i^k-\bar{\theta}_i^{k\epsilon}\|_2&\leq Ke^{e^{KT}[\sqrt{\log(N)}+z^2]}[\sqrt{d\log(N)}+z^3+\log^{3/2}(NT)]/\sqrt{N}\\
            &+Ke^{e^{KT}[\sqrt{\log(N)}+z^2]}[\log(N(T/\epsilon \vee 1))+z^4]\sqrt{\epsilon}\\
            &+Ke^{e^{KT}[\sqrt{\log(N)}+z^2]}[\sqrt{d}\log(N)+z^3+\log^{3/2}(N)]\sqrt{\epsilon}
        \end{split}     
    \end{equation*}
\end{lemma}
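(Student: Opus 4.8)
The plan is to mirror the proof of \Cref{lemma:mean_field_closeness}, replacing the noiseless regularization-free estimates by their noisy counterparts from \citet{mei2019mean} (the coupling propositions underlying their Theorem~4, part~B). First I would verify that \Cref{ass:noiseless_SGD,ass:noisy_SGD} together imply Assumptions 1--6 of \citet{mei2019mean}: the bounded Lipschitz non-linearity and bounded-support data give sub-Gaussianity of the per-sample gradient field and the boundedness/Lipschitz regularity of $V$ and $U$; the scaling $s_k=\epsilon\xi(k\epsilon)$ with $\xi$ bounded Lipschitz matches their step-size condition; and \Cref{ass:noisy_SGD} supplies the four-times differentiability and the uniform boundedness of $\nabla_1^k u$ needed only because of the diffusion term.

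Next I would set up the chain of four coupled dynamics already recalled above --- stochastic gradient descent $\theta_i^k$, gradient descent $\tilde\theta_i^k$, particle dynamics $\ubar\theta_i^t$, and nonlinear dynamics $\bar\theta_i^t$ --- all started from the same i.i.d.\ initialization $\bar\theta_i^0\sim\rho_0$ and driven, where relevant, by the same Brownian motions $W_i$. The noisy-regime coupling propositions of \citet{mei2019mean} then provide, each with probability at least $1-e^{-z^2}$, bounds on the three consecutive gaps: (i) $\max_{k,i}\|\ubar\theta_i^{k\epsilon}-\bar\theta_i^{k\epsilon}\|_2$ of order $e^{e^{KT}[\sqrt{\log N}+z^2]}[\sqrt{d\log N}+z^3+\log^{3/2}(NT)]/\sqrt N$ (the propagation-of-chaos / finite-$N$ error); (ii) $\max_{k,i}\|\tilde\theta_i^k-\ubar\theta_i^{k\epsilon}\|_2$ of order $e^{e^{KT}[\sqrt{\log N}+z^2]}[\log(N(T/\epsilon\vee 1))+z^4]\sqrt\epsilon$ (time discretization of the drift together with the diffusion); and (iii) $\max_{k,i}\|\theta_i^k-\tilde\theta_i^k\|_2$ of order $e^{e^{KT}[\sqrt{\log N}+z^2]}[\sqrt d\log N+z^3+\log^{3/2}N]\sqrt\epsilon$ (the stochastic-gradient martingale fluctuation, controlled by a Freedman/Azuma argument combined with a Gr\"onwall estimate). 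The doubly-exponential prefactor $e^{e^{KT}[\cdots]}$, rather than the $e^{K(1+T)^3}$ of the noiseless case, is precisely what the Gr\"onwall bounds produce once the diffusion is present.

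Finally I would take a union bound over these three events, so that all three estimates hold simultaneously with probability at least $1-3e^{-z^2}$, and apply the triangle inequality $\|\theta_i^k-\bar\theta_i^{k\epsilon}\|_2\le\|\theta_i^k-\tilde\theta_i^k\|_2+\|\tilde\theta_i^k-\ubar\theta_i^{k\epsilon}\|_2+\|\ubar\theta_i^{k\epsilon}-\bar\theta_i^{k\epsilon}\|_2$ uniformly over $k\in[0,T/\epsilon]\cap\sN$ and $i\in[N]$; using $T\ge 1$ to absorb lower-order powers of $T$ into the constant $K$ yields the claimed bound. I expect the main obstacle to be purely bookkeeping: extracting from \citet{mei2019mean} the exact polylogarithmic-in-$N$, polynomial-in-$z$ shape of each of the three error terms and checking that the maxima over the $O(T/\epsilon)$ time steps and the $N$ neurons cost only the displayed logarithmic factors, rather than establishing any conceptually new estimate --- the coupling and Gr\"onwall machinery is entirely imported.
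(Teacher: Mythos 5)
Your proposal matches the paper's proof, which simply invokes the noisy-regime coupling propositions (Propositions 47, 49, 50) of \citet{mei2019mean} for the three consecutive gaps between the SGD, gradient-descent, particle, and nonlinear dynamics, and combines them via union bounds and triangle inequalities exactly as in the noiseless case. Your additional bookkeeping about verifying Assumptions 1--6 of \citet{mei2019mean} is also what the paper states in the surrounding text, so the argument is essentially identical.
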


\begin{proof}
    Just apply proposition 47,49,50 from \citet{mei2019mean}.
\end{proof}

We moreover recall here Lemma 9 from \citet{mei2019mean} which bounds the value of the second layer coefficients $A^t=(a_1^t, \ldots,a_N^t)$.

\begin{lemma}[Lemma 19 in \citet{mei2019mean}]
\label{lemma:mean_field_boundedness_noisy}
    There exists a constant $K$ such that with probability at least $1-e^{-z^2}$ we have
\begin{equation*}
    \sup_{t\in [0,T]}\|A^t\|_\infty\leq Ke^{KT}[\sqrt{\log(N)}+z]
\end{equation*} 
\end{lemma}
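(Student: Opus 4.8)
The statement is Lemma~19 of \citet{mei2019mean}, so I will only sketch how one proves such a bound. Write $\theta_i^t=(w_i^t,a_i^t)$ and let $\Phi_t\eqdef\max_{i\in[N]}|a_i^t|$. In each of the dynamics under consideration the $a$-component of neuron $i$ evolves by a recursion/SDE whose drift is $-2\xi(t)\big[\lambda a_i^t+\partial_a V(\theta_i^t)+\int\partial_a U(\theta_i^t,\theta)\,\rho_t(d\theta)\big]$ (with the integral replaced by $\frac1N\sum_j\partial_a U(\theta_i^t,\theta_j^t)$ in the particle dynamics, resp.\ by the stochastic residual in SGD) plus the diffusion term $\sqrt{2\xi(t)\tau/d}\,dW_i(t)$. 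Since $\partial_a\sigma_*(x;\theta)=\sigma(wx)$ is bounded by $M\eqdef\|\sigma\|_\infty$ and the data have bounded support, $|\partial_a V|$ is bounded and $\big|\int\partial_a U(\theta_i^t,\theta)\,\rho_t(d\theta)\big|\le M^2\int|a|\,\rho_t(d\theta)$; moreover the residual $y-\hat y$ is bounded by $|y|+M\Phi_t$. Hence the drift of $a_i^t$ is affine in $a_i^t$ with a forcing term of size $\le K(1+\Phi_t)$, and consequently
\begin{equation*}
 \Phi_t\;\le\;\Phi_0+\int_0^t K(1+\Phi_s)\,ds+\max_{i\in[N]}\Big|\int_0^t\sqrt{2\xi(s)\tau/d}\,dW_i(s)\Big|.
\end{equation*}

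First I would bound the last term. For each fixed $i$ the stochastic integral is a continuous Gaussian martingale with quadratic variation $O(\tau T/d)$, so by Doob's maximal inequality $\sup_{t\le T}\big|\int_0^t\sqrt{2\xi\tau/d}\,dW_i\big|$ is sub-Gaussian with parameter $O(\sqrt{\tau T/d})$; a union bound over $i=1,\dots,N$ then gives, with probability at least $1-e^{-z^2}$, that this quantity is $\le K[\sqrt{\log N}+z]$ simultaneously for all $i$. On that event the displayed inequality reads $\Phi_t\le C_0+\int_0^tK(1+\Phi_s)\,ds$ with $C_0\eqdef\Phi_0+K[\sqrt{\log N}+z]$, and since $\Phi_0$ is bounded by $\diam(\Supp(\rho_0))$ (\Cref{ass:noiseless_SGD}), Grönwall's lemma yields $\sup_{t\le T}\Phi_t\le Ke^{KT}(1+C_0)\le Ke^{KT}[\sqrt{\log N}+z]$, which is the claim; the discrete SGD dynamics are treated identically with the discrete Grönwall inequality and a martingale bound on $\sum_k\sqrt{2s_k\tau/d}\,g_i^k$. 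The moment bound $\int a^2\rho_t(d\theta)<\infty$ needed for the non-linear dynamics is itself obtained by running the same Grönwall argument at the level of the limiting PDE.

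The main obstacle is that one cannot control a single $a_i^t$ in isolation: the residual $\hat y=\frac1N\sum_j a_j^t\sigma(w_j^t x)$ couples all the coordinates, so the Grönwall estimate has to be closed directly for $\Phi_t=\max_i|a_i^t|$; this works precisely because the coupling enters only through the bound $|\hat y|\le M\Phi_t$, keeping the recursion for $\Phi_t$ affine. The secondary technical points — passing from pointwise-in-$t$ Gaussian tail bounds to a bound on the supremum over $t\in[0,T]$ via a martingale maximal inequality, and keeping all constants $K$ dependent only on the quantities allowed by \Cref{ass:noiseless_SGD,ass:noisy_SGD} and on $T$ through $e^{KT}$ — are routine but must be tracked carefully to obtain exactly the stated form $Ke^{KT}[\sqrt{\log N}+z]$.
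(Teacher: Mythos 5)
The paper itself gives no proof of this statement: it is imported verbatim as Lemma 19 of \citet{mei2019mean}, exactly as you note at the outset. Your sketch — bounded, affine-in-$a$ drift from the boundedness of $\sigma$ and of the data, a sub-Gaussian maximal bound on the Brownian (resp.\ discrete noise) integrals with a union bound over the $N$ neurons giving the $\sqrt{\log N}+z$ factor, and Grönwall (closed for $\Phi_t=\max_i|a_i^t|$, with the mean-field moment bound handled at the PDE level) producing the $Ke^{KT}$ factor — is a correct reconstruction of the standard argument behind the cited lemma, so there is nothing to object to.
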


\subsubsection{Proof of \Cref{theorem:mean_field} in the case of noisy regularized SGD}

We can now prove LMC for two two-layer networks trained with general SGD:

\begin{mdframed}[style=MyFrame2]
   \theoremmeanfield* 
\end{mdframed}

\begin{proof}
    We follow the same steps as before. Recall that the data distribution is bounded: $\Supp(P)\subset [-H_x,H_x]^d\times [-H_y,H_y]$.

The problem is that due to the stochasticity added in the noisy SGD, $\Supp(\rho_t)$ is not necessarily bounded anymore. However, using step 3 of the proof of lemma 41 in \citet{mei2019mean} and the fact that the initial distribution $\rho_0$ has bounded support (sub-Gaussian would be enough), the distribution of weights of the first layer at time $T$ is sub-Gaussian. 

Indeed if $\bar{\theta}^t_i\sim \rho_t$ and $\rho_0$ is bounded or sub-Gaussian, we get the existence of $K$ such that:

\begin{equation*}
    \sP(\|\bar{\theta}^T_i\|_2^2\geq Ke^{KT}(1+z)\sqrt{T})\leq e^{-z^2}
\end{equation*}

which proves that $\rho_T$ is sub-Gaussian.

We could adapt the proof done in \Cref{expect_wass_normal} for Gaussian variable to sub-Gaussian variable to show the existence of constants (\Cref{expect_wass_normal} dealt with $\gW_2^2$ but can be extended to $\gW_1$ because Proposition 15 of \citet{weed2019sharp} is valid for any $\gW_p^p$) $D_2', E_2'$ depending only on the constant $K$ of sub-Gaussianity of the previous distribution, and hence independent of $N$ such that by considering the norm $\|\cdot\|_2$, we can still bound for $m\geq E_2^{'d}$

\begin{equation*}
    \E[\gW_1(\hat{\mu}_A,\hat{\mu}_B)]\leq \sqrt{\frac{D'_2}{d}\log(N)}\left(\frac{1}{N}\right)^{1/d}
\end{equation*}

Therefore, with probability at least $1-\frac{\delta}{2}-6e^{-z^2}$:
    \begin{equation*}
    \begin{split}
        \forall X \in [-H_x,H_x]^D, &\frac{\|\sigma(W_AX)-\sigma((tW_A+(1-t)\Tilde{W}_B)X)\|_1}{N}\leq g_2(T,\epsilon,N,\delta)\\
        &\eqdef
        L_\sigma H_x \sqrt{d}\frac{1}{\delta}\sqrt{\frac{D'_2}{ d}\log(N)}N^{-1/d}\\
        &+2L_\sigma H_x \sqrt{d} (Ke^{e^{KT}[\sqrt{\log(N)}+z^2]}[\sqrt{d\log(N)}+z^3+\log^{3/2}(NT)]/\sqrt{N}\\
            &+Ke^{e^{KT}[\sqrt{\log(N)}+z^2]}[\log(N(T/\epsilon \vee 1))+z^4]\sqrt{\epsilon}\\
            &+Ke^{e^{KT}[\sqrt{\log(N)}+z^2]}[\sqrt{d}\log(N)+z^3+\log^{3/2}(N)]\sqrt{\epsilon})
    \end{split}
    \end{equation*}

and same for the second term.
We moreover have, using \Cref{lemma:mean_field_closeness_noisy,lemma:mean_field_boundedness_noisy} that with probability at least $1-2e^{-z^2}$:

\begin{equation*}
\begin{split}
    \max\{\|A_A\|_\infty,\|A_B\|_\infty\}&\leq g_1(T,z,N,\epsilon)\\
    &:=Ke^{KT}[\sqrt{\log(N)}+z]+Ke^{e^{KT}[\sqrt{\log(N)}+z^2]}[\sqrt{d\log(N)}+z^3+\log^{3/2}(NT)]/\sqrt{N}\\
            &+Ke^{e^{KT}[\sqrt{\log(N)}+z^2]}[\log(N(T/\epsilon \vee 1))+z^4]\sqrt{\epsilon}\\
            &+Ke^{e^{KT}[\sqrt{\log(N)}+z^2]}[\sqrt{d}\log(N)+z^3+\log^{3/2}(N)]\sqrt{\epsilon}
        \end{split}     
\end{equation*}

Taking $z=\sqrt{\log\left(\frac{16}{\delta}\right)}$ such that $8e^{-z^2}=\frac{\delta}{2}$
we get that with probability at least $1-\delta$:

\begin{equation*}
    \begin{split}
        |t\hat{f}(x;\theta_{A})&+(1-t)\hat{f}(x;\tilde{\theta}_{B})-\hat{f}(x;t\theta_{A}+(1-t)\tilde{\theta}_B)|\leq g_1(T,z,N,\epsilon)g_2(T,z,N,\epsilon)
    \end{split}
\end{equation*}

As before, for fixed $T,\delta$, denote $\err(N,\epsilon)$ the left hand term.

It is clear that:

\begin{equation*}
       \forall \err> 0 \exists N_{\min}\forall N\geq N_{\min} \exists \epsilon_{\min}(N)\forall \epsilon\leq \epsilon_{\min}(N),\,
       \err(N,\epsilon)\leq \err
\end{equation*}

Therefore, sending $N\rightarrow \infty$, $\epsilon \rightarrow 0$ brings immediately the first part of the theorem.

To get the second part of the theorem, we do the same procedure as for noiseless regularization-free SGD.

Namely, with probability $1-\delta$ we have both:

\begin{equation*}
        \begin{cases}
        P-\text{almost surely}, |t\hat{f}(x;\theta_{A})+(1-t)\hat{f}(x;\theta_{B})-\hat{f}(x;t\theta_{A}+(1-t)\Tilde{\theta}_B)|\leq \err \\
        \max\{\|A_A\|_\infty,\|A_B\|_\infty\}\leq g_1(T,N,z, \epsilon)
        \end{cases}
\end{equation*}
Up to changing $N_{\min}, \epsilon_{\max}(N)$ we can suppose that $\forall N \geq N_{\min}, \forall \epsilon\leq \epsilon_{\max}(N)$, we have 

\begin{equation*}
\begin{split}
    Ke^{e^{KT}[\sqrt{\log(N)}+z^2]}&[\sqrt{d\log(N)}+z^3+\log^{3/2}(NT)]/\sqrt{N}\\
    &+Ke^{e^{KT}[\sqrt{\log(N)}+z^2]}[\log(N(T/\epsilon \vee 1))+z^4]\sqrt{\epsilon}\\
    &+Ke^{e^{KT}[\sqrt{\log(N)}+z^2]}[\sqrt{d}\log(N)+z^3+\log^{3/2}(N)]\sqrt{\epsilon}\leq 1
\end{split}
\end{equation*}

which brings 
\begin{equation*}
\label{eq:boundeness_noisy}
    g_1(T,N,z, \epsilon)\leq Ke^{KT}[\sqrt{\log(N)}+z]+1
\end{equation*}

\Cref{eq:boundeness_noisy}, boundness of the activation, boundness of the input distribution $(x,y) \sim P$ by assumption imply the existence of $K'$ such that $\forall t \in [0,1]$, $P-$almost-surely and for $N$ large enough,

\begin{equation*}
        \begin{cases}
        |\hat{f}(x;\theta_{A})|\leq K'\left(Ke^{KT}[\sqrt{\log(N)}+z]\right)\\
        |\hat{f}(x;\theta_{B})|\leq K'\left(Ke^{KT}[\sqrt{\log(N)}+z]\right)\\
        |y|\leq H_y \leq K'\left(Ke^{KT}[\sqrt{\log(N)}+z]\right)
        \end{cases}
\end{equation*}

Using convexity and Lipschitzness of the squared loss on compact domains we get the existence (and this is a sufficient condition for the loss function with convexity) of $\Lip:\sR_+\rightarrow \sR_+$ such that: $\exists L_1,L_2, \forall H \in \sR_+, \, \Lip(H)\leq L_1+L_2\exp(H)$, $\forall x_1,x_2,y\in [-H,H], |\gL(x_1,y)-\gL(x_2,y)|\leq \Lip(H)|x_1-x_2|$ and such that with probability at least $1-\delta$:

\begin{equation*}
    \begin{split}
        \sE[\gL(\hat{f}(x;t\theta_A+(1-t)\Tilde{\theta}_B),y)]\leq \Lip(K'\left(Ke^{KT}[\sqrt{\log(N)}+z]+1\right))g_1(T,z,\epsilon,N) g_2(T,z,\epsilon,N) 
    \end{split}
\end{equation*}

Plugging this back and with the exact same discussion as before we get $\exists N_{\min} \forall N\geq N_{\min} \exists \epsilon_{\max}\forall\epsilon\leq\epsilon_{\max}$,

\begin{equation*}
    \begin{split}
        \sE[\gL(\hat{f}(x;t\theta_A+(1-t)\Tilde{\theta}_B),y)]&\leq \err +t\sE[\gL(\hat{f}(x;\theta_A),y)]+(1-t)\sE[\gL(\hat{f}(x;\theta_B),y)] 
    \end{split}
\end{equation*}

To get both part 1 and 2 of the theorem at the same time we just have to reconsider the $\max$ of both $N_{\min}$ and the $\min$ of both $\epsilon_{\max}$.

\end{proof}

\subsection{On the satisfiability of our assumptions}

Assumption 1 and 2 are non-trivial but standard in the mean field literature. They are made to ensure that the optimization of the two-layer neural networks happens in the mean-field regime. Indeed, as explained above the weights are then approximately indepedent and we can leverage Wasserstein convergence bounds of empirical measure to prove linear mode connectivity. We used conventioanl assumptions from the mean-field litterature 
(e.g. see assumptions A1 to A6 in \cite{mei2019mean}, A1 to A4 in \cite{mei2018mean}). $u,v,U,V$ are implicitly defined but once the non-linearity and the data distribution are fixed, $u,v,U,V$ are fully determined as functions of the parameters. 
Checking their derivability can be done using usual rules for derivation under the integral sign if the non-linearity is smooth. A particular case is to consider a two-layer network, with a sigmoïd activation, a bounded data distribution and a bounded uniform initialization over the parameters. We additionally mention that a lot of works empirically evidence the validity of the mean-field framework, hence we feel validating our approach. In the case of mltilayer networks, the main assumption is the independence of weights inside each layer (\cref{new_assumption:0}). Multiple recent works address this question by using a mean field view for multi-layer networks with bounds on the width needed during optimization with SGD (e.g. Th. 15 in \cite{nguyen2023rigorous}). Finally, we believe our results could be extended to approximated independence of weights with an additional error term for the error barrier on a linear path corresponding to the approximated independence. Quantifying the impact of correlation between weights constitutes a very interesting avenue for future work.

\section{EXPERIMENTS}

\subsection{Experiment on CIFAR10}

We compared activation and weight matching methods on the CIFAR10 dataset for a VGG16 model. Our experiment
again shows the correlation between small approximate
dimension of the support of the weight distribution and LMC effectiveness
hence supporting our main theoretical study. As it is non trivial to compute the covariance of the input with convolutional layers
as it is a high dimensional tensor we left the alternative weight matching methods as a future work. Providing a scalable
technique to estimate such a covariance for CNNs is a an interesting research direction beyond the scope of this paper.

\begin{figure*}[h!]
     \centering
     \begin{subfigure}[b]{0.48\textwidth}
         \includegraphics[width=\linewidth]{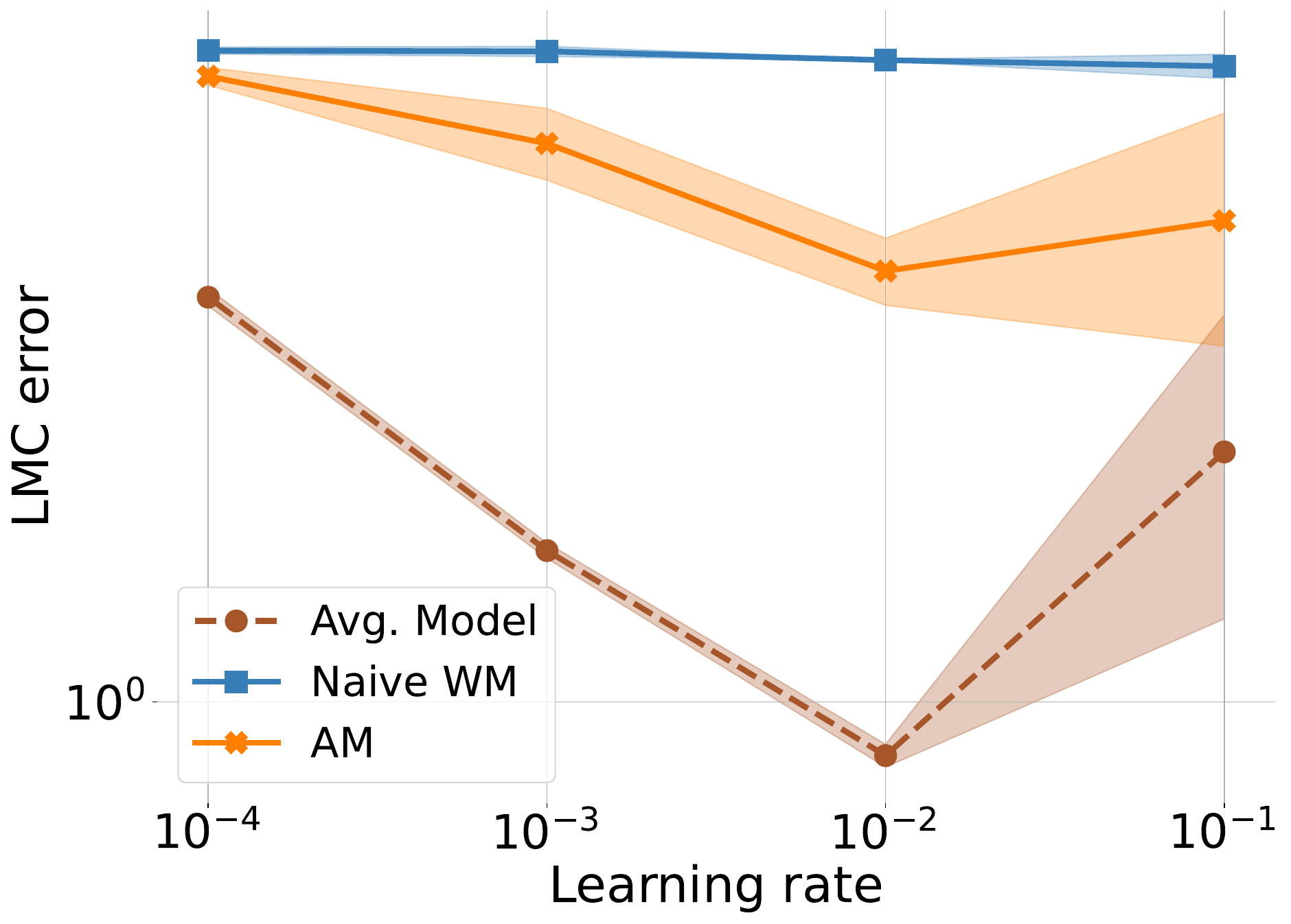}
         \caption{Mean test loss of the trained networks $A$ and $B$ and error barrier on the linear path $M_t, \, t\in [0,1]$ across different learning rate values for each matching problem.}
     \end{subfigure}
     \hfill
     \begin{subfigure}[b]{0.48\textwidth}
              \includegraphics[width=\linewidth]{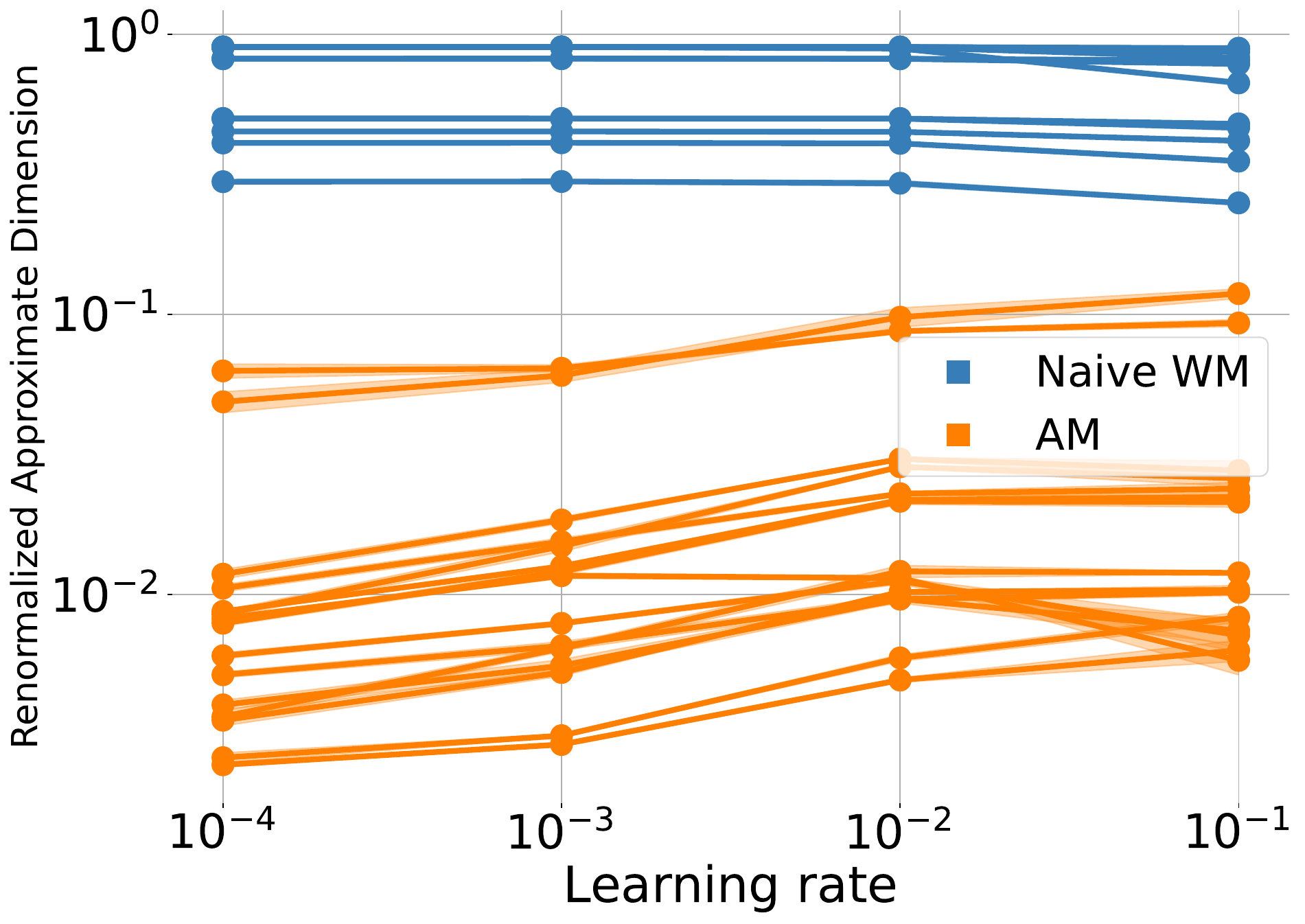}
     \caption{Approximate dimension $\Dim(S) \eqdef \nicefrac{\tr(S)^2}{\tr(S^2)}$ of the matrices considered in the matching problems at each layer.
     \newline ~
     }
     \label{fig:approx_dim_final_CIFAR10}
     \end{subfigure}
     \caption{\small
     Statistics of the average network $M$ over the linear path between networks $A$ and $B$ using respectively weight matching (blue) and activation matching (orange) \vspace{-.4cm}}
\label{fig:LMC_error_CIFAR10}
\end{figure*}

\subsection{Details about our new weight matching method}
\label{appendix:explanation_method}

Until now we have studied the influence of the dimension of the support of the underlying distribution of weights on the convergence rate in Wasserstein distance of the corresponding empirical measure. An interesting question is to look at the influence of the distance used to define the Wasserstein distance. 

More precisely, consider a single layer of two networks $A,B$ with input $X\in \sR^n$ and matrix weights $W_{A,B}\in \gM_{m,n}(\sR)$.
Consider that the input data follows a distribution $P$ with $\E_P[XX^T]=\Sigma$

The underlying method that we use in our proof and which is the one referred to as weight matching method in \citet{ainsworth2022git} consists in minimizing the distances for the euclidean norm between weights matrices, i.e. to find:

\begin{equation*}
    \argmin_{\Pi \in \gS_m}\|W_A-\Pi W_B\|_2
\end{equation*}

This is equivalent to finding:

\begin{equation*}
    \argmin_{\Pi\in \gS_m}\sqrt{\frac{1}{m}\|W_A-\Pi W_B\|_2^2\}}=\argmin_{\pi\in \gS_m} \sqrt{\frac{1}{m}\sum_{i=1}^m\|W_{A,i:}-W_{B,\pi_i:}\|_2^2}
\end{equation*}

We get an expected square error between network $A$ and network $B$ permuted at the output layer of:

\begin{equation*}
    \E\left[\|W_AX-\Pi W_BX\|_2^2\right]=(W_A-\Pi W_B)\Sigma(W_A-\Pi W_B)^T=\|W_A-\Pi W_B\|_{2,\Sigma}^2
\end{equation*}
where $\|\cdot\|_{2,\Sigma}$ is the semi-norm coming from $(X,Y)\rightarrow X^T \Sigma Y$ which is a symmetric positive bilinear product since $\Sigma$ is symmetric positive (and a norm when $\Sigma$ is definite positive i.e., when $\Span(\Supp(P))=\sR^n$).

Minimizing the cost $\|W_A-\Pi W_B\|_2$ contributes to minimizing the expected squared error $\|W_A-\Pi W_B\|_{2,\Sigma}^2$ but it appears more natural to directly minimize the cost $\|W_A-\Pi W_B\|_{2,\Sigma}^2$.

As explained in \Cref{theorem:gain} below, we can directly link the approximate dimension of the underlying covariance matrix of each method with the decay rate of LMC error barrier. The underlying covariance matrix of each method is $W_A^\ell[W_A^\ell]^T$ for WM (naive),  $W_A^\ell\Sigma_A^{\ell-1}[W_A^\ell]^T$ for WM (ours) and $\Sigma_A^{\ell}$ for AM.

Indeed,

\begin{itemize}
    \item for naive weight matching, each row of $W_A,W_B$ follows a distribution with covariance matrix $W_A^\ell[W_A^\ell]^T$,
    \item for weight matching (ours), the optimization problem can be seen (\Cref{change_basis}) as for naive weight matching but with covariance matrix $W_A^\ell\Sigma_A^{\ell-1}[W_A^\ell]^T$,
    \item for activation matching, each row of $Z_A^\ell$ follows a distribution with covariance matrix $\Sigma_A^{\ell}$.
\end{itemize}

\subsection{Gain of our new weight matching method}

This section is motivated by the following question:
\begin{center}
    \textbf{What is the gain of optimizing directly the cost $\|W_A-\Pi W_B\|_{2,\Sigma}$ when $\Sigma$ is low dimensional?} 
\end{center}

For example, let's suppose that $\Sigma=\Diag(1,1,0,...,0)$ and hence the support of $X$ is two dimensional. Suppose moreover that $W_A$ and $W_B$ are as in $\Cref{section:given_distribution_normal}$ initialized i.i.d. with a distribution $\gN(0,\frac{I_n}{n})$ on the weights.
Hence we have seen before that $\|W_A-\Tilde{W}_B\|_2\sim\left(\frac{1}{m}\right)^{1/n}$. Since the minimization procedure is unaware of the structure of $\Sigma$ it is clear by symmetry that $\sqrt{\|W_{A,1:}-\Tilde{W}_{B,1:}\|_2^2+\|W_{A,2:}-\Tilde{W}_{B,2:}\|_2^2}\sim \sqrt{\frac{2}{n}}\left(\frac{1}{m}\right)^{1/n}$. Therefore the convergence is still as $\left(\frac{1}{m}\right)^{1/n}$.
However if we had first aimed at minimizing $\|W_A-\Pi W_B\|_{2,\Sigma}$ it is clear that the problem becomes two dimensional and hence $\argmin_{\Pi \in \gS_m}\|W_A-\Pi W_B\|_{2,\Sigma}\sim \left(\frac{1}{m}\right)^{1/2}$ which is extremely faster when $n$ is large.

We want to apply this idea to our setting where we suspect the distribution of activations at each layer to be low dimensional. We now prove the following lemma:

\begin{lemma}
\label{change_basis}
Let $W_A,W_B\in \gM_{m,n}(\sR)$ satisfy \Cref{new_assumption:0} with underlying distribution $\mu$ and let $\Sigma\in \gM_n(\sR)$. Write $\Sigma=O\sqrt{\Sigma}^2O^T$ where O is orthogonal and $\sqrt{\Sigma}$ is diagonal. Then we get the equivalence between optimization problems:
    \begin{equation}
       \E\left[\min_{\Pi\in \gS_m}\|W_A-\Pi W_B\|_{2,\Sigma}^2\right]= \E\left[\min_{\Pi\in \gS_m}\|\hat{W}_A-\Pi \hat{W}_B\|_2^2\right]
    \end{equation}
    where $\hat{W}_B,\hat{W}_B$ satisfy \Cref{new_assumption:0} with underlying distribution $f_*\mu$ the image measure of $\mu$ by $f:X\mapsto O\sqrt{\Sigma}O^TX$
\end{lemma}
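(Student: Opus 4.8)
The plan is to reduce the $\Sigma$-weighted matching problem to a plain Euclidean matching problem by a linear change of variables on the rows of the weight matrices. First I would write $\Sigma = O\sqrt{\Sigma}^2 O^T$ with $O$ orthogonal and $\sqrt{\Sigma}$ diagonal (the positive semidefinite square root), so that for any matrix $D \in \gM_{m,n}(\sR)$ we have
\begin{equation*}
\|D\|_{2,\Sigma}^2 = \tr(D\Sigma D^T) = \tr\big(D O\sqrt{\Sigma}\,(D O\sqrt{\Sigma})^T\big) = \|D O\sqrt{\Sigma}\|_2^2 = \|D f^T\|_2^2,
\end{equation*}
where $f(X) = O\sqrt{\Sigma}O^T X$, since $f$ applied row-wise to $D$ (i.e. $X \mapsto f(X)$ acting on each row $D_{i:}$) amounts to right-multiplication $D \mapsto D\,(O\sqrt{\Sigma}O^T)^T = D O\sqrt{\Sigma}O^T$, and the extra orthogonal factor $O^T$ on the right does not change the Frobenius norm. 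So I would be slightly careful about whether $f$ is $X\mapsto O\sqrt{\Sigma}O^T X$ or $X\mapsto O\sqrt{\Sigma}X$; the statement uses the former, and the difference is only a norm-preserving orthogonal map, so both work. Let me denote $\hat{W}_A = W_A(O\sqrt{\Sigma}O^T)^T$ and $\hat{W}_B = W_B(O\sqrt{\Sigma}O^T)^T$, i.e. the matrices whose $i$-th row is $f([W_A]_{i:})$, resp. $f([W_B]_{i:})$.

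Next I would observe that this change of variables commutes with the permutation action: for any $\Pi \in \gS_m$,
\begin{equation*}
\|W_A - \Pi W_B\|_{2,\Sigma}^2 = \|(W_A - \Pi W_B)(O\sqrt{\Sigma}O^T)^T\|_2^2 = \|\hat{W}_A - \Pi\hat{W}_B\|_2^2,
\end{equation*}
because $\Pi$ acts on rows and the linear map $f$ acts on columns, so they commute: $(\Pi W_B)(O\sqrt{\Sigma}O^T)^T = \Pi(W_B(O\sqrt{\Sigma}O^T)^T) = \Pi\hat{W}_B$. Taking the minimum over $\Pi$ on both sides and then the expectation over the draw of $W_A, W_B$ gives the claimed identity
\begin{equation*}
\E\Big[\min_{\Pi\in\gS_m}\|W_A - \Pi W_B\|_{2,\Sigma}^2\Big] = \E\Big[\min_{\Pi\in\gS_m}\|\hat{W}_A - \Pi\hat{W}_B\|_2^2\Big].
\end{equation*}

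Finally I would check the distributional claim: under \Cref{new_assumption:0}, the rows of $W_A$ are i.i.d. $\sim \mu$ and independent of the i.i.d. rows of $W_B \sim \mu$. Applying the fixed deterministic linear map $f$ to each row, the rows of $\hat{W}_A$ are i.i.d. with law $f_*\mu$ (the pushforward of $\mu$ under $f$), and likewise for $\hat{W}_B$, and the two families remain independent. Hence $\hat{W}_A, \hat{W}_B$ satisfy \Cref{new_assumption:0} with underlying distribution $f_*\mu$, completing the proof. The only mild obstacle is bookkeeping the left/right multiplication conventions (row action of $\Pi$ versus column action of $f$) and the harmless orthogonal factor in the definition of $f$; there is no real analytic difficulty, since everything follows from the cyclic invariance of the trace and the orthogonal-invariance of the Frobenius norm.
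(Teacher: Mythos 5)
Your proposal is correct and follows essentially the same route as the paper: factor $\Sigma$ through its symmetric square root, rewrite the $\Sigma$-semi-norm as a plain Frobenius norm via the change of variables $\hat{W}_{A,B}=W_{A,B}\,O\sqrt{\Sigma}O^T$, and use that row permutations commute with this column-side map before taking the minimum and expectation. Your additional care about the orthogonal factor and the explicit verification that the rows of $\hat{W}_{A,B}$ are i.i.d.\ with law $f_*\mu$ only spells out details the paper leaves implicit.
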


\begin{proof}
    Just notice that $\forall \Pi \in \gS_m$
    \begin{equation*}
        \|W_A-\Pi W_B\|_{2,\Sigma}^2= \tr[(W_A-\Pi W_B)O\sqrt{\Sigma}O^T(O\sqrt{\Sigma}O^T)^T(W_A-\Pi W_B)^T]
    \end{equation*}
    and do the change of variable $\hat{W}_A=W_AO\sqrt{\Sigma}O^T$ (repectively $\hat{W}_B$)
\end{proof}

\begin{restatable}{theorem}{gain}
\label{theorem:gain} Consider $X \in \sR^n\sim P\in \gP_1(\sR^n)$ such that $\E_P[XX^T]=\Sigma=\Diag(1,\ldots,1,0,\ldots,0)$, $\rank(\Sigma)=\Tilde{n}\leq n$ and $W_A, W_B$ random weight matrices satisfying \Cref{new_assumption:0} with underlying distribution $\gN\left(0,\frac{I_n}{n}\right)$
    Denote $\Pi_1,\Pi_2\in \gS_m$ random permutations that minimize the respective costs $\|W_A-\Pi W_B\|_2$ and $\|W_A-\Pi W_B\|_{2,\Sigma}$
    Then we have: 
    
    \begin{equation*}
        \begin{split}
            &\E\left[\|W_A-\Pi_1W_B\|_{2,\Sigma}^2\right]=\Tilde{\Omega}\left(\left(\frac{1}{m}\right)^{2/n}\right)\\
            &\E\left[\|W_A-\Pi_2W_B\|_{2,\Sigma}^2\right]=\Tilde{\Omega}\left(\left(\frac{1}{m}\right)^{2/\Tilde{n}}\right)\\
            &\E[\|W_A-\Pi_1W_B\|_{2,\Sigma}^2]= \Tilde{\gO}\left(\left(\frac{1}{m}\right)^{2/n}\right)\\
            &\E[\|W_A-\Pi_2W_B\|_{2,\Sigma}^2]=\Tilde{\gO}\left(\left(\frac{1}{m}\right)^{2/\Tilde{n}}\right)
        \end{split}
    \end{equation*}
\end{restatable}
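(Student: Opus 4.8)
The idea is to reduce everything to the convergence rates of empirical measures that were established in the appendix (\Cref{expect_wass_normal}, \Cref{supp:ball}, \Cref{theorem:lower_bound}) and the change-of-basis \Cref{change_basis}. There are four quantities to estimate, two upper bounds and two lower bounds; I will handle them in two groups.

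\textbf{Upper bounds.} First I would observe that since $\Pi_1$ minimizes $\|W_A-\Pi W_B\|_2$ and $\Sigma=\Diag(1,\ldots,1,0,\ldots,0)$ has operator norm $1$, we have $\|W_A-\Pi_1 W_B\|_{2,\Sigma}^2\le \|W_A-\Pi_1 W_B\|_2^2=\min_\Pi\|W_A-\Pi W_B\|_2^2$. By \Cref{lemma:birkhoff} this minimum equals $m\,\gW_2^2(\hat\mu_A,\hat\mu_B)$ where $\hat\mu_{A},\hat\mu_B$ are the empirical measures of the rows, each an i.i.d.\ sample of size $m$ from $\gN(0,I_n/n)$. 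A triangle inequality through the population measure $\mu=\gN(0,I_n/n)$ together with \Cref{expect_wass_normal} gives $\E[\gW_2^2(\hat\mu_A,\hat\mu_B)]=\Tilde{\gO}\big((1/m)^{2/n}\big)$, hence $\E[\|W_A-\Pi_1 W_B\|_{2,\Sigma}^2]=\Tilde{\gO}\big((1/m)^{2/n}\big)$ (the normalization $1/m$ in \Cref{lemma:birkhoff} cancels the factor $m$). For $\Pi_2$, apply \Cref{change_basis} with this $\Sigma$: writing $\Sigma=O\sqrt{\Sigma}^2 O^T$, the optimal cost $\E[\min_\Pi\|W_A-\Pi W_B\|_{2,\Sigma}^2]$ equals $\E[\min_\Pi\|\hat W_A-\Pi\hat W_B\|_2^2]$ where $\hat W_{A},\hat W_B$ have i.i.d.\ rows from $f_*\mu$, $f:X\mapsto O\sqrt{\Sigma}O^T X$. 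Since here $\sqrt{\Sigma}$ is a coordinate projection onto $\Tilde n$ coordinates, $f_*\mu$ is supported on an $\Tilde n$-dimensional subspace and is, in those coordinates, $\gN(0,I_{\Tilde n}/n)$. Applying \Cref{lemma:birkhoff} and \Cref{expect_wass_normal} in dimension $\Tilde n$ (after absorbing the harmless $1/n$ scaling, which only changes constants, via the rescaling remark used in the proof of \Cref{expect_wass_normal}) yields $\Tilde{\gO}\big((1/m)^{2/\Tilde n}\big)$.

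\textbf{Lower bounds.} For the $\Pi_2$ lower bound I would invoke \Cref{theorem:lower_bound}: after the change of basis of \Cref{change_basis}, the problem is exactly $\E_{W_A,W_B}[\min_\Pi\E_{X\sim P'}\|(W_A'-\Pi W_B')X\|_2^2]$ for $W_A',W_B'$ with i.i.d.\ rows from a nondegenerate Gaussian in $\sR^{\Tilde n}$ and $P'$ with full-rank $\Tilde n\times\Tilde n$ second-moment matrix, a bounded-density assumption being satisfied by the Gaussian; \Cref{theorem:lower_bound} gives $\gtrsim (1/m)^{2/\Tilde n}$. For the $\Pi_1$ lower bound — which is the subtle one — the claim is that the \emph{naive} matching, being blind to $\Sigma$, cannot exploit the low-dimensional structure, so the cost it incurs when measured in $\|\cdot\|_{2,\Sigma}$ is still $\Tilde\Omega\big((1/m)^{2/n}\big)$. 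I would argue as in the informal paragraph in \Cref{appendix:explanation_method}: by rotational symmetry of $\gN(0,I_n/n)$ and independence of the coordinate blocks, for the permutation $\Pi_1$ that is optimal for the full $n$-dimensional $\|\cdot\|_2$ cost, the per-neuron error vector $W_{A,i:}-[\,\Pi_1 W_B]_{i:}$ is (conditionally) isotropic, so the expected fraction of its squared norm landing in the first $\Tilde n$ coordinates is $\Tilde n/n$ times the total; since the total is $\Theta(m\cdot(1/m)^{2/n})$ by the matching lower bound of \citet{weed2019sharp} / \Cref{theorem:lower_bound} applied with $P$ isotropic in $\sR^n$, the $\Sigma$-restricted cost is $\Theta\big(\frac{\Tilde n}{n}(1/m)^{2/n}\big)=\Tilde\Omega\big((1/m)^{2/n}\big)$.

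\textbf{Main obstacle.} The delicate point is the $\Pi_1$ lower bound: one must be careful that $\Pi_1$ is itself a random object depending on $W_A,W_B$, so ``the error vector is isotropic'' needs justification — ideally by noting that the joint law of $(W_A,W_B,\Pi_1)$ is invariant under simultaneously rotating the rows of $W_A$ and $W_B$ by any fixed $O\in\mathrm{O}(n)$ that commutes with the cost (which all of $\mathrm{O}(n)$ does, since the $\|\cdot\|_2$ cost is rotation-invariant), whence the conditional law of the error matrix is rotation-invariant and its expected squared energy splits evenly across coordinates. This symmetrization argument, rather than any hard estimate, is where the real care is needed; the rest is bookkeeping with \Cref{lemma:birkhoff}, \Cref{expect_wass_normal}, \Cref{change_basis}, and \Cref{theorem:lower_bound}.
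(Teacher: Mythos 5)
Your proposal is correct and follows essentially the same route as the paper: the $\Pi_2$ bounds via \Cref{change_basis} combined with the Wasserstein rate (\Cref{normal:new_assumption1}/\Cref{expect_wass_normal}) and \Cref{theorem:lower_bound}, and the $\Pi_1$ lower bound via the same coordinate-symmetry argument (with a.s.\ uniqueness of $\Pi_1$) that the paper uses. Your only deviations are cosmetic — you justify the symmetrization more explicitly and replace the paper's symmetry step for the third bound by the trivial domination $\|\cdot\|_{2,\Sigma}^2\leq\|\cdot\|_2^2$ — so this is the same proof in substance.
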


\begin{proof}

    Using \Cref{change_basis}, we see that bounds $2$ and $4$ are just corollaries of \Cref{normal:new_assumption1,theorem:lower_bound}.

    To show bounds $1$ and $3$ just notice that:

    \begin{equation*}
        \Pi_1=\argmin_{\Pi \in \gS_m}\|W_A-\Pi W_B\|_2
    \end{equation*}
    is almost surely unique.

    By symmetry of the problem and \Cref{theorem:lower_bound} we therefore see that $\forall i \in [n]$:

    \begin{equation*}
        \E\|[W_A-\Pi_1 W_B]_{:i}\|_2^2 = \frac{1}{n}\E\|W_A-\Pi_1 W_B\|_2^2=\Tilde{\Omega}\left(\left(\frac{1}{m}\right)^{2/n}\right)
    \end{equation*}

    Finally noticing that $\Sigma=\Diag(1,\ldots,1,0,\ldots,0)$ we get by summing:
\begin{equation*}
    \E\left[\|W_A-\Pi_1W_B\|_{2,\Sigma}^2\right]=\frac{\Tilde{n}}{n}\Tilde{\Omega}\left(\left(\frac{1}{m}\right)^{2/n}\right)=\Tilde{\Omega}\left(\left(\frac{1}{m}\right)^{2/n}\right)
\end{equation*}  
Similarly, exploiting a.s. uniqueness of $\Pi_1$ and symmetry across dimensions, we get the third inequality.
\end{proof}

\end{document}